
\documentclass[acmlarge,screen,nonacm,natbib=false]{acmart}

\usepackage{tikz}


\usepackage{tikz}
\usepackage{pgfplots}
\pgfplotsset{compat=1.18}




\usepackage{amsfonts}
\usepackage{footmisc}

\usepackage[utf8]{inputenc}
\usepackage{xcolor}
\usepackage{booktabs}
\usepackage{amsthm}
\usepackage{listings}

\usepackage{pgfplotstable}
\usepackage{longtable}


\usepackage{pgfplots}
\usepackage{amsmath}
\usepackage{microtype}
\usepackage{soul}
\interfootnotelinepenalty=10000 

\usepackage{enumerate}%

\newcommand{\name}{\textsc{shrinker}}
\newcommand{\popper}{\textsc{Popper}}
\newcommand{\ale}{\textsc{Aleph}}

\newcommand{\ilasp}{\textsc{ilasp}}

\lstset{
    basicstyle=\ttfamily\small,
    columns=flexible
}

\theoremstyle{definition}
\newtheorem{definition}{Definition}
\newtheorem{example}{Example}
\newtheorem{theorem}{Theorem}
\newtheorem{proposition}{Proposition}
\newtheorem{lemma}{Lemma}
\newtheorem{statement}{Statement}

\usepackage{algorithm}
\usepackage{algorithmic}

\usepackage{fancyvrb}


\lstnewenvironment{myalgorithm}[1][] 
{
    \lstset{ 
        showspaces=false,               
        showstringspaces=false,         
        mathescape=true,
        numbers=left,
        escapeinside={*}{*},
        columns=flexible,
        numbersep=2pt,        
        keywordstyle=\bfseries,
        keywords={,and, return, not, def, in, if, else, for, foreach, while, }
        numbers=left,
        xleftmargin=.0\textwidth,
        #1 
    }
}
{}

\setcopyright{cc}
\acmDOI{10.1613/jair.1.21708}


\RequirePackage[
  datamodel=acmdatamodel,
  style=acmauthoryear,
  backend=biber,
  giveninits=true,
  uniquename=init
  ]{biblatex}

\addbibresource{mybib-small.bib}

\begin{document}

\title{Honey, I Shrunk the Hypothesis Space (Through Logical Preprocessing)}
\subtitle{Published in JAIR, Vol. 85, Art. 48, 2026. DOI: 10.1613/jair.1.21708}
\author{Andrew Cropper}
\authornote{Corresponding Author.}
\email{andrew.cropper@helsinki.fi}
\orcid{0000-0002-4543-7199}

\affiliation{%
  \institution{ELLIS Institute}
  \country{Finland}
}

\affiliation{%
  \institution{University of Helsinki}
  \country{Finland}
}

\author{Filipe Gouveia}
\orcid{0000-0003-1852-2782}
\email{jfgouveia@ciencias.ulisboa.pt}
\affiliation{%
  \institution{LASIGE, Informática, Faculdade de Ciências, Universidade de Lisboa}
  \country{Portugal}
  }

\author{David M. Cerna}
\orcid{0000-0002-6352-603X}
\email{dcerna@cs.cas.cz}
\email{david.cerna@dynatrace.com}

\affiliation{%
  \institution{Czech Academy of Sciences Institute of Computer Science}
  \country{Czechia}
}

\affiliation{%
  \institution{Dynatrace Research}
  \country{Austria}
}

\renewcommand{\shortauthors}{Cropper, Gouveia \& Cerna}

\begin{abstract}
\noindent
Inductive logic programming (ILP) is a form of logical machine learning.
The goal is to search a hypothesis space for a hypothesis that generalises training examples and background knowledge.
We introduce an approach that \emph{shrinks} the hypothesis space before an ILP system searches it.
Our approach uses background knowledge to find rules that cannot be in an optimal hypothesis regardless of the training examples.
For instance, our approach discovers relationships such as \emph{even numbers cannot be odd} and \emph{prime numbers greater than 2 are odd}.
It then removes violating rules from the hypothesis space. 
We implement our approach using answer set programming and use it to shrink the hypothesis space of a constraint-based ILP system.
Our experiments on multiple domains, including visual reasoning and game playing, show that our approach can substantially reduce learning times whilst maintaining predictive accuracies.
For instance, given just 10 seconds of preprocessing time, our approach can reduce learning times from over 10 hours to only 2 seconds.
\end{abstract}


\received{22 January 2026}
\received[accepted]{18 March 2026}

\maketitle

\section{Introduction}
\label{sec:intro}

Inductive logic programming (ILP) \cite{mugg:ilp,ilpintro} is a form of logical machine learning.
The goal is to search a hypothesis space for a hypothesis that generalises training examples and background knowledge (BK).
The distinguishing feature of ILP is that it uses logical rules to represent hypotheses, examples, and BK.

As with all forms of machine learning, a major challenge in ILP is searching vast hypothesis spaces.
To illustrate this challenge, consider a learning task where we want to generalise examples of an arbitrary relation \emph{h}.
Assume we can build rules using the unary relations \emph{odd}, \emph{even}, and \emph{int} and the binary relations \emph{head}, \emph{tail}, \emph{len}, \emph{lt}, and \emph{succ}.
The \emph{rule space} is the set of all possible rules and 
contains rules such as:

\begin{center}
\begin{tabular}{l}
\emph{r$_1$  = h $\leftarrow$ len(A,B)}\\
\emph{r$_2$  = h $\leftarrow$ tail(A,A)}\\
\emph{r$_3$  = h $\leftarrow$ tail(A,B), tail(B,A)}\\
\emph{r$_4$  = h $\leftarrow$ head(A,B), head(A,C)}\\
\emph{r$_5$  = h $\leftarrow$ tail(A,B), tail(B,C), tail(A,C)}\\
\emph{r$_6$  = h $\leftarrow$ tail(A,A), head(A,B), odd(B)}\\
\emph{r$_7$  = h $\leftarrow$ head(A,B), odd(B), 
even(B)}\\
\emph{r$_8$  = h $\leftarrow$ head(A,B), int(B), 
odd(B)}\\
\emph{r$_9$  = h $\leftarrow$ head(A,B), succ(B,C), succ(C,D), lt(B,D)}
\end{tabular}
\end{center}

\noindent
The hypothesis space is the powerset of the rule space, so it can be enormous.
In this simple scenario, if we allow rules to contain at most four literals and four variables, there are at least 3,183,545 possible rules and thus $2^{3,183,545}$ hypotheses. 

Most research tackles this challenge by developing techniques to efficiently search large hypothesis spaces \cite{foil,progol,tilde,aleph,atom,quickfoil,mugg:metagold,popper}.
In other words, most research assumes a fixed hypothesis space and focuses on efficiently searching that space.

Rather than develop a novel search algorithm, we introduce a way to \emph{shrink} the hypothesis space before giving it to an ILP system.
The idea is to use the given BK to find rules that cannot be in an optimal hypothesis regardless of the concept we want to learn, where an optimal hypothesis minimises a cost function on the training data, such as finding a minimal description length hypothesis \cite{maxsynth}.
We call such rules \emph{pointless rules}.

To illustrate our idea, consider the previous scenario.
Assume, for simplicity, that the rule space contains only those nine rules, rather than all 3,183,545 rules. 
Also assume that we have BK with only the facts:

\begin{center}
\begin{lstlisting}[basicstyle=\ttfamily, frame=single]
head(ijcai,i)  head(ecai,e)  head(cai,c)  
tail(ijcai,jcai)  tail(jcai,cai)  tail(ecai,cai) tail(cai,ai)  tail(ai,i)  
len(ijcai,5)  len(jcai,4)  len(cai,3)  len(ecai,4)  len(ai,2)  len(i,1)  
int(1)  int(2)  int(3)  int(4)  
succ(1,2)  succ(2,3)  succ(3,4)  
even(2)  even(4)  
odd(1)  odd(3)  
lt(1,2)  lt(1,3)  lt(1,4)  lt(2,3)  lt(2,4)  lt(3,4)  
\end{lstlisting}
\end{center}

\noindent
We use this BK to find four types of pointless rules: \emph{unsatisfiable}, \emph{implication reducible}, \emph{recall reducible}, and \emph{singleton reducible}.

An \emph{unsatisfiable} rule can never be true regardless of the concept we want to learn, i.e. irrespective of specific training examples.
For instance, given this BK, if we adopt a closed world assumption \cite{cwa}, then, because there is no fact of the form \emph{tail(A,A)}, we can deduce that \emph{tail/2} is irreflexive and remove rules $r_2$ and $r_6$ from the rule space because their bodies are unsatisfiable.
The rule space is now:

\begin{center}
\begin{tabular}{l}
\emph{r$_1$  = h $\leftarrow$ len(A,B)}\\
\emph{r$_3$  = h $\leftarrow$ tail(A,B), tail(B,A)}\\
\emph{r$_4$  = h $\leftarrow$ head(A,B), head(A,C)}\\
\emph{r$_5$  = h $\leftarrow$ tail(A,B), tail(B,C), tail(A,C)}\\
\emph{r$_7$  = h $\leftarrow$ head(A,B), odd(B), 
even(B)}\\
\emph{r$_8$  = h $\leftarrow$ head(A,B), int(B), 
odd(B)}\\
\emph{r$_9$  = h $\leftarrow$ head(A,B), succ(B,C), succ(C,D), lt(B,D)}
\end{tabular}
\end{center}

\noindent
Similarly, we can deduce that \emph{tail/2} is asymmetric and antitransitive and remove rules $r_3$ and $r_5$ from the rule space.
The rule space is now:

\begin{center}
\begin{tabular}{l}
\emph{r$_1$  = h $\leftarrow$ len(A,B)}\\
\emph{r$_4$  = h $\leftarrow$ head(A,B), head(A,C)}\\
\emph{r$_7$  = h $\leftarrow$ head(A,B), odd(B), 
even(B)}\\
\emph{r$_8$  = h $\leftarrow$ head(A,B), int(B), 
odd(B)}\\
\emph{r$_9$  = h $\leftarrow$ head(A,B), succ(B,C), succ(C,D), lt(B,D)}
\end{tabular}
\end{center}

\noindent
Finally, we can deduce that \emph{odd/1} and \emph{even/1} are mutually exclusive and remove rule $r_7$.
The rule space is now:

\begin{center}
\begin{tabular}{l}
\emph{r$_1$  = h $\leftarrow$ len(A,B)}\\
\emph{r$_4$  = h $\leftarrow$ head(A,B), head(A,C)}\\
\emph{r$_8$  = h $\leftarrow$ head(A,B), int(B), 
odd(B)}\\
\emph{r$_9$  = h $\leftarrow$ head(A,B), succ(B,C), succ(C,D), lt(B,D)}
\end{tabular}
\end{center}

\noindent
An \emph{implication reducible} rule contains an implied literal.
For instance, because \emph{odd(A)} implies \emph{int(A)}, any rule with both \emph{int(A)} and \emph{odd(A)} in the body has a redundant literal so cannot be in an optimal hypothesis.
Therefore, we can remove rule $r_8$ from the rule space.
The rule space is now:

\begin{center}
\begin{tabular}{l}
\emph{r$_1$  = h $\leftarrow$ len(A,B)}\\
\emph{r$_4$  = h $\leftarrow$ head(A,B), head(A,C)}\\
\emph{r$_9$  = h $\leftarrow$ head(A,B), succ(B,C), succ(C,D), lt(B,D)}
\end{tabular}
\end{center}

\noindent
Similarly, if \emph{succ(B,C)} and \emph{succ(C,D)} both hold then 
\emph{lt(B,D)} must hold because $D=B+2$.
Therefore, any rule with these three literals has a redundant literal so cannot be in an optimal hypothesis.
Therefore, we can remove rule $r_9$ from the rule space.
The rule space is now:

\begin{center}
\begin{tabular}{l}
\emph{r$_1$  = h $\leftarrow$ len(A,B)}\\
\emph{r$_4$  = h $\leftarrow$ head(A,B), head(A,C)}
\end{tabular}
\end{center}

\noindent
A \emph{recall reducible} rule contains a redundant literal determined by the number of ground instances deducible from the BK.
For instance, because \emph{head/2} is functional,  any rule with both \emph{head(A,B)} and \emph{head(A,C)} is reducible because $B=C$.
Therefore, we can remove rule $r_4$ from the rule space.
The rule space is now:

\begin{center}
\begin{tabular}{l}
\emph{r$_1$  = h $\leftarrow$ len(A,B)}\\
\end{tabular}
\end{center}

\noindent
Finally, a \emph{singleton reducible} rule contains a literal that is always true.
For instance, if $A$ is always a list then the literal \emph{len(A,B)} in rule $r_1$ is always true because every list has a length and the variable $B$ is unconstrained.
In other words, because the variable $B$ only appears once in this rule, the literal \emph{len(A,B)} is redundant because it has no discriminatory power.
Therefore, we can remove rule $r_1$ from the hypothesis space.
The rule space is now empty.

We show that we can remove hypotheses that contain any of these types of pointless rules from the hypothesis space without removing optimal hypotheses.
As this scenario shows, removing hypotheses with pointless rules can substantially shrink the hypothesis space before even looking at the given training examples.
In other words, our approach can discover where not to search before searching for a hypothesis.

To find unsatisfiable and implication reducible rules, we use answer set programming (ASP) \cite{asp}.
Specifically, we build small rules and use ASP to check whether these rules are unsatisfiable or implication reducible with respect to the BK.
We repeat this process until reaching a user-defined timeout.
The output of this stage is a set of unsatisfiable and implication reducible rules.

To find recall reducible rules, we use a bottom-up approach 
similar to \citet{savnik1993bottom} for discovering functional dependencies in a database and \citet{queryoptimisation} for building efficient queries.
For a background relation and any subsequence of its arguments, we calculate the maximum number of answer substitutions.
For instance, in the BK given above, \emph{succ/2} has at most three answer substitutions because there are only three ground instances.
Likewise, for the literal \emph{succ(A,B)}, if $A$ is ground then there is at most one answer substitution for $B$.
The output of this stage is a set of recall reducible rules.

To find singleton reducible rules we use ASP to determine whether a relation is partial or total and use that information to deduce singleton reducible rules.
The output of this stage is a set of singleton reducible rules.

We use the set of pointless rules to shrink the hypothesis space of a constraint-based ILP system \cite{popper,maxsynth}.
For instance, if we discover that \emph{even} and \emph{odd} are mutually exclusive, we build constraints that remove rules that contain both the body literals \emph{odd(A)} and \emph{even(A)}.
Likewise, if we discover that the literals \emph{succ(B,C)}, \emph{succ(C,D)}, and \emph{lt(B,D)} are implication reducible, the constraints remove rules that contain all these literals in them.
Our constraints remove non-optimal hypotheses from the hypothesis space so that an ILP system never considers them when searching for a hypothesis.

Our shrinking approach has many benefits.
The approach is system-agnostic and could be used by any ILP system.
For instance, \ale{}'s rule pruning mechanism \cite{aleph} could use the constraints.
The approach is a stand-alone preprocessing step, independent of a specific learning algorithm or learning task.
Therefore, we can use our approach as a preprocessing step on a set of BK shared by many tasks.
The main benefit is a drastic reduction in learning times without reducing predictive accuracy.
For instance, given only 10 seconds to shrink the hypothesis space, our approach can reduce learning time from over 10 hours to just a few seconds.

\subsubsection*{Novelty and Contributions}
The key novelty of this paper is an approach that automatically shrinks the hypothesis space of an ILP system as a preprocessing step by finding pointless rules.
The impact is vastly reduced learning times, demonstrated in diverse domains, sometimes reducing learning times by 99\%.

This paper substantially extends our earlier work \cite{discopopper}.
In that preliminary work, we introduced the idea of using BK to discover constraints on hypotheses, such as that a number cannot be both even and odd.
The preliminary work used a human-provided predefined set of properties, such as \emph{antitransitivity} and \emph{irreflexivity}.
In this work, we generalise the idea to four types of pointless rules, where implication reducible and singleton reducible are new.
Almost all the content in this paper is new.

Overall, our contributions are:

\begin{itemize}
    \item We introduce the hypothesis space reduction problem, where the goal is to find a subset of a hypothesis space which still contains all optimal hypotheses.
    \item We define pointless rules, i.e. unsatisfiable, implication reducible, recall reducible, and singleton reducible rules.
    We show that hypotheses with pointless rules cannot be optimal (Propositions \ref{prop:sound_unsat2}, \ref{prop:sound_sat3}, \ref{prop:sound_sat4}, and \ref{prop:sound_sat5}).
    \item We describe a hypothesis space shrinking approach and implement it in a system called \name{}.
    We prove that \name{} only removes non-optimal hypotheses from the hypothesis space (Proposition \ref{prop:correctness}).
    \item We use \name{} to bootstrap a constraint-based ILP system \cite{popper,maxsynth}.
    \item We experimentally show on multiple domains that, given only 10 seconds preprocessing time, \name{} can drastically reduce ILP learning times, sometimes from over 10 hours to 2 seconds.
\end{itemize}

\section{Related Work}
\label{sec:related}

\subsection{Program Synthesis}
ILP is a form of program synthesis, where the goal is to automatically generate computer programs from examples.
This topic, which \citet{gulwani2017program} consider the holy grail of AI, interests a broad community \cite{dilp,ellis:scc}.
Although our approach could be applied to any form of program synthesis, it is well-suited to ILP because ILP's logical representation naturally supports declarative knowledge through logical constraints.

\subsection{Rule Induction}
ILP approaches induce rules from data, similar to rule learning methods \cite{DBLP:conf/ruleml/FurnkranzK15}.
It is difficult to compare ILP methods with recent rule mining techniques, such as AMIE+ \cite{DBLP:journals/vldb/GalarragaTHS15} and RDFRules \cite{rdfrules}.
Most rule-mining methods are limited to unary and binary relations and require facts as input. 
They also typically operate under an open-world assumption. 
By contrast, ILP usually operates under a closed-world assumption, supports relations of any arity, and can learn from definite programs as background knowledge.

\subsection{ILP}
Our approach automatically shrinks the hypothesis space by finding rules that cannot be in an optimal hypothesis.
Many systems allow humans to manually specify conditions for when a rule cannot be in a hypothesis \cite{progol,aleph,ilasp}.
Most systems only reason about the conditions \emph{after} constructing a hypothesis, such as \ale{}'s rule pruning mechanism \cite{aleph}.
By contrast, we automatically discover constraints as a preprocessing step and remove rules that violate them from the hypothesis space \emph{before} searching for a hypothesis.

\subsection{Bottom Clauses}
Many ILP systems, such as \ale{}, use bottom clauses \cite{progol} or variants, such as kernel sets \cite{xhail}, to restrict the hypothesis space. 
The bottom clause of an example $e$ is the most specific clause that entails $e$.
Bottom clauses can also be seen as informing an ILP system where to search because an ILP system never needs to consider a rule more specific than the bottom clause.
Our approach is similar, as it restricts the hypothesis space.
There are, however, many differences.
Bottom clauses are example specific.
To find a rule to cover an example, a learner constructs the bottom clause for that specific example, which it uses to bias the search.
Building a bottom clause can be expensive and the resulting bottom clause can be very large \cite{progol}.
Moreover, as it contains all literals that can be true, a bottom clause will likely include many redundant literals, such as \emph{int(A)} and \emph{odd(A)}.
By contrast, our approach finds such reducible rules and removes them from the hypothesis space.
In addition, in the worst case, a learner needs to build a bottom clause for every positive training example.
Indeed, the Kernel set, as used by \textsc{xhail} \cite{xhail}, is built from all positive examples at once.
This approach is expensive when there are many examples.
By contrast, our bias discovery approach is task independent and only uses the BK, not the training examples.
In other words, the running time of our approach is independent of the number of examples.
Because of this difference, we can reuse any discovered bias across examples and tasks.
For instance, if we discover that the successor relation (\emph{succ}) is asymmetric, we can reuse this bias across multiple tasks.
Moreover, bottom clauses introduce several fundamental limitations, such as making it difficult for the corresponding ILP system to learn recursive hypotheses and hypotheses with predicate invention \cite{ilpintro}.
By contrast, our approach does not consider examples and only looks at the given BK to reduce the hypothesis space so it can be used by systems that can learn recursive hypotheses and perform predicate invention.
Another difference is that bottom clauses are essential for inverse entailment methods \cite{progol}.
By contrast, our approach is an independent preprocessing step that allows us to amortise its cost (such as by using a timeout).
Finally, our shrinking approach could help bottom clause approaches, such as to bootstrap a constraint-based inverse entailment approach \cite{atom} or to provide constraints for \ale{}'s clause pruning conditions \cite{aleph}.
 
\subsection{Redundancy in ILP}
Most work in ILP focuses on improving search efficiency for a fixed hypothesis space.
There is little work on hypothesis space reduction.
\citet{DBLP:conf/ilp/SrinivasanK05} introduce methods to reduce the dimensionality of bottom clauses using statistical methods by compressing bottom clauses.
The authors state that the resulting lower dimensional space translates directly to a smaller hypothesis space. 
We differ because we do not use examples in the preprocessing step and do not need to build bottom clauses, alleviating all the issues in the aforementioned bottom clause section.
\citet{DBLP:conf/ilp/FonsecaCSC04} define \emph{self-redundant} clauses, similar to our definition of an implication reducible rule (Definition \ref{def:breducible}).
Their definition does not guarantee that a redundant clause's refinements (specialisations) are redundant.
By contrast, we prove that specialisations of an implication reducible rule are reducible (Proposition \ref{prop:specRedu}).
Moreover, the authors do not propose detecting such rules.
Instead, they expect users to provide information about them.
By contrast, we introduce \name{}, which automatically finds reducible and indiscriminate rules.
\citet{RaedtR04} check whether a rule has a redundant atom before testing it on examples.
If so, it avoids the coverage check.
This approach requires \textit{anti-monotonic} constraints, where if a constraint holds for a rule then it holds for all its generalisations, but not necessarily its specialisations.
By contrast, we find properties that allow us to prune specialisations of a rule.
Moreover, the approach of \citet{RaedtR04} does not explicitly identify implications between literals and thus could keep building rules with the same implied literals.
By contrast, \name{} explicitly finds implications between literals to prune the hypothesis space.
\citet{quickfoil} prune rules with simple forms of syntactic redundancy.
For instance, for the rule \emph{h(X) $\leftarrow$ p(X,Y), p(X,Z)}, the authors detect that \emph{p(X,Z)} is duplicate to the literal \emph{p(X,Y)} under the renaming $Z \mapsto Y$, where $Z$ and $Y$ are not in other literals.
By contrast, we detect semantic redundancy by finding unsatisfiable, implication reducible, recall reducible, and singleton reducible rules.
\citet{reduce} eliminate redundant metarules (second-order rules) to improve the performance of metarule-based ILP approaches \cite{mugg:metagold,hexmil,dai2020abductive}. 
Metarules are templates that define the possible syntax of a hypothesis.
By contrast, we use an ILP system that does not need metarules.

\subsection{Rule Selection}
Many recent systems formulate the ILP problem as a rule selection problem \cite{aspal,ilasp,hexmil,aspsynth}.
These approaches precompute every possible rule in the hypothesis space and then search for a subset that generalises the examples.
Because they precompute all possible rules, they cannot learn rules with many literals.
Moreover, because they precompute all possible rules, they can build pointless rules. 
For instance, if allowed to use the relations \emph{int/1} and \emph{even/1}, \textsc{aspal} \cite{aspal} and \textsc{ilasp} \cite{ilasp} will precompute all possible rules with \emph{int(A)} and \emph{even(A)} in the body.
By contrast, we implement \name{} to work with the constraint-based ILP system \popper{} \cite{popper,maxsynth}.
We bootstrap \popper{} with the constraints discovered by \name{}.
For instance, if \name{} identifies that the pair of literals \emph{int(A)} and \emph{even(A)} are implication reducible, the constraints prohibit \popper{} from building a rule with both those literals. 

\subsection{Constraints}
Many systems use constraints to restrict the hypothesis space \cite{aspal,inoue:mla,atom,ilasp,inspire,hexmil,popper,joiner}.
For instance, the Apperception engine \cite{apperception} has several built-in constraints, such as a \emph{unity condition}, which requires that objects are connected via chains of binary relations.
By contrast, we automatically discover constraints before searching for a hypothesis.
Users can provide constraints to many systems.
For instance, if a user knows that two literals are unsatisfiable, they could provide this information to \ilasp{} in the form of \emph{meta-constraints} or \ale{} via user-defined pruning conditions.
However, in these cases, a user gives the constraints as input to the system.
By contrast, our approach automatically discovers such constraints without user intervention.

\subsection{Preprocessing}
\label{sec:preprocessing}
A key feature of our approach is that we shrink the hypothesis space before giving it an ILP system to search for a hypothesis.
Our approach can, therefore, be seen as a preprocessing step, which has been widely studied in AI, notably to reduce the size of a SAT instance \cite{DBLP:conf/sat/EenB05}.
There is other work on preprocessing in ILP, notably to infer types from the BK \cite{modelearning,DBLP:conf/sigmod/PicadoTFP17}.
For instance, \citet{modelearning} automatically deduce mode declarations from the BK, such as types and whether arguments should be ground.
We do not infer types but remove unsatisfiable, implication reducible, recall reducible, and singleton reducible rules from the hypothesis space.
Moreover, because we use constraints, we can reason about properties that modes cannot capture, such as antitransitivity, mutual exclusivity, and implications.
\citet{bridewell2007} learn structural constraints over the hypothesis space in a multi-task setting. 
By contrast, we discover biases before solving any task.
Other preprocessing approaches in ILP focus on reducing the size of BK \cite{alps,knorf} or predicate invention \cite{playgol,celine:bottom}.
By contrast, we discover constraints in the BK to shrink the hypothesis space.
As a preprocessing step, \citet{queryoptimisation} calculate the recall of the given background relations to order the literals in a rule to reduce the time it takes to check a rule against the examples and BK.
We use a similar algorithm, but rather than order literals in a rule, we use recall to define recall reducible rules (Definition \ref{def:recall_redundant}) and build constraints from them to shrink the hypothesis space.


\subsection{Redundancy in AI}
\citet{plotkin:thesis} uses subsumption to decide whether a literal is redundant in a first-order clause. 
\citet{DBLP:journals/jacm/Joyner76} investigates the same problem, which he calls \emph{clause condensation}, where a condensation of a clause $C$ is a minimum cardinality subset $C'$ of $C$ such that $C' \models C$. 
\citet{gottlob1993removing} improve Joyner's algorithm and show that determining whether a clause is condensed is coNP-complete. 
Detecting and eliminating redundancy is useful in many areas of computer science and has received much attention from the theorem-proving community \cite{HoderKKV12,KhasidashviliK16,VukmirovicBH23}. 
In the SAT community, redundancy elimination techniques, such as blocked clause elimination \cite{Kullmann99}, play an integral role in modern solvers. 
Similar to the goal of our paper, the goal of these redundancy elimination methods is to soundly identify derivationally irrelevant input.

\section{Problem Setting}

We now define the notation used in this paper, provide a short introduction to ILP, define the \emph{hypothesis space reduction problem}, and then define four types of pointless rules that we can remove from the hypothesis space without removing optimal hypotheses.

\subsection{Preliminaries}

We assume familiarity with logic programming \cite{lloyd:book} and ASP \cite{asp} but we restate some key definitions.
An \emph{atom} is of the form $p(t_1,\dots,t_a)$ where $p$ is a predicate symbol of arity $a$ and $t_1,\dots,t_a$ are terms. 
We refer to a ground term as a constant. 
A \textit{literal} is an atom or a negated atom.
A \textit{rule} $r$ is a definite clause of the form $h\leftarrow p_1,\ldots, p_n$ where $h, p_1,\ldots, p_n$ are literals, $head(r) = h$, and $body(r) = \{ p_1,\ldots, p_n\}$. 
We denote the set of variables in a literal $l$ as $vars(l)$.
The variables of a rule $r$, denoted $vars(r)$, is defined as $vars(head(r)) \cup \bigcup_{p\in body(r)} vars(p)$.
Given a rule $r$ and a set of positive literals $C$, by $r\cup C$ we denote the rule $r'$ such that $head(r) = head(r')$ and $body(r') =  body(r)\cup C$.
A substitution $\theta = \{v_1 / t_1, ..., v_n/t_n \}$ is the simultaneous replacement of each variable $v_i$ by its corresponding term $t_i$. 
A rule $r_1$ $\theta$-subsumes a rule $r_2$, denoted $r_1 \preceq_{\theta} r_2$, if and only if there exists a substitution $\theta$ such that $r_1 \theta \subseteq r_2$. 
A rule $r_2$ is a specialisation of a rule $r_1$ if $r_1 \preceq_{\theta} r_2$.

We focus on a restricted form of $\theta$-subsumption \cite{plotkin:thesis} which only considers whether the bodies of two rules with the same head literal are contained in one another. 
We define this restriction through a \emph{sub-rule} relation: 
\begin{definition}[\textbf{Sub-rule}]
\label{def:satruleALT}
Let $r_1$ and $r_2$ be rules,
$head(r_1) = head(r_2)$, and $body(r_1)\subseteq body(r_2)$. 
Then $r_1$ is a \emph{sub-rule} of $r_2$, denoted $r_1\subseteq r_2$.
A rule $r_1$ is a strict sub-rule of $r_2$, denoted $r_1\subset r_2$, if  $head(r_1) = head(r_2)$ and $body(r_1)\subset body(r_2)$.
\end{definition}

\noindent
A \emph{definite program} is a set of definite rules.
A \textit{Datalog program} \cite{datsin:datalog} is a definite program with some key restrictions, such as no function symbols.
We use the term \emph{hypothesis} synonymously with Datalog program.

We generalise the sub-rule relation to a \emph{sub-hypothesis} relation. 
Unlike the sub-rule relation, the sub-hypothesis relation is not a restriction of $\theta$-subsumption:
\begin{definition}[\textbf{Sub-hypothesis}]
\label{def:subhypoth}
Let $h_1$ and $h_2$ be hypotheses and for all $r_1\in h_1$ there exists $r_2\in h_2$ such that $r_1\subseteq r_2$.
Then $h_1$ is a \emph{sub-hypothesis} of $h_2$, denoted $h_1\subseteq h_2$. 
\end{definition}
\noindent
The sub-hypothesis relation captures a particular type of hypothesis we refer to as \emph{basic}. 
These are hypotheses for which specific rules do not occur as part of a recursive predicate definition:
\begin{definition}[\textbf{Basic rule}]
\label{def:basic}
Let $h$ be a hypothesis, 
$r_1$ be a rule in $h$,
and for all $r_2$ in $h$, the predicate symbol of $head(r_1)$ does not occur in a literal in $body(r_2)$.
Then $r_1$ is \emph{basic} in $h$.
\end{definition}
\noindent
As we show below, under certain conditions we can prune hypotheses with respect to their basic rules.  

\subsection{Inductive Logic Programming}
We formulate our approach in the ILP learning from entailment setting \cite{luc:book}.
We define an ILP input:

\begin{definition}[\textbf{ILP input}]
\label{def:probin}
An ILP input is a tuple $(E, B, \mathcal{H})$ where $E=(E^+,E^-)$ is a pair of sets of ground atoms denoting positive ($E^+$) and negative ($E^-$) examples, $B$ is background knowledge, and $\mathcal{H}$ is a hypothesis space, i.e a set of possible hypotheses.
\end{definition}
\noindent
We restrict background knowledge to Datalog programs. 
In the rest of this paper, we assume that the head literal of a rule in a hypothesis does not appear in the head of a rule in the BK.

We define a cost function:
\begin{definition}[\textbf{Cost function}]
\label{def:cost_function}
Given an ILP input $(E, B, \mathcal{H})$, a cost function $cost_{E,B} : \mathcal{H} \to S$ 
maps hypotheses to a totally ordered set $S$.
\end{definition}

\noindent
We use a cost function where $S = \mathbb{N}^2$ with a lexicographic order that first minimises the number of misclassified training examples and then minimises the number of literals in a hypothesis.
Given a hypothesis $h$, a true positive is a positive example entailed by $h \cup B$. 
A true negative is a negative example not entailed by $h \cup B$. 
A false positive is a negative example entailed by $h \cup B$. 
A false negative is a positive example not entailed by $h \cup B$. 
We denote the number of false positives and false negatives of $h$ as $fp_{E,B}(h)$ and $fn_{E,B}(h)$ respectively. 
We consider a function $size : \mathcal{H} \rightarrow {\mathbb{N}}$, which evaluates the size of a hypothesis $h \in \mathcal{H}$ as the number of literals in it. 
In the rest of this paper, we use the cost function:
\begin{align*}
cost_{E,B}(h) = (fp_{E,B}(h) + fn_{E,B}(h), size(h))
\end{align*}

\noindent
 Given an ILP input and a cost function $cost_{E,B}$, we define an \emph{optimal} hypothesis:
\begin{definition}[\textbf{Optimal hypothesis}]
\label{def:opthyp}
Given an ILP input $(E, B, \mathcal{H})$ and a cost function \emph{cost$_{E,B}$}, a hypothesis $h \in \mathcal{H}$ is \emph{optimal} with respect to \emph{cost$_{E,B}$} when $\forall h' \in \mathcal{H}$, \emph{cost$_{E,B}$}($h$) $\leq$ \emph{cost$_{E,B}$}($h'$).
\end{definition}

\subsection{Hypothesis Space Reduction}

Our goal is to reduce the hypothesis space without removing optimal hypotheses:
\begin{definition}[\textbf{Hypothesis space reduction problem}]
\label{def:hypredprob2}
Given an ILP input $(E, B, \mathcal{H})$, the \emph{hypothesis space reduction problem} is to find $\mathcal{H}' \subset \mathcal{H}$ such that if $h \in \mathcal{H}$ is an optimal hypothesis then $h \in \mathcal{H}'$.
\end{definition}

\noindent 
In this paper, we shrink the hypothesis space by using the BK to find rules that cannot be in an optimal hypothesis.
We consider four types of rules: \emph{unsatisfiable}, \emph{implication reducible}, \emph{recall reducible}, and \emph{singleton reducible}.
We describe these in turn.

\subsection{Unsatisfiable Rules}
An unsatisfiable rule has a body that can never be true given the BK.
For instance, consider the rule: 
\begin{center}
\begin{tabular}{l}
\emph{h $\leftarrow$ even(A), odd(A), int(A)}
\end{tabular}
\end{center}
\noindent
Assuming that the relations \emph{odd} and \emph{even} are mutually exclusive, this rule is unsatisfiable because its body is unsatisfiable.

As a second example, consider the rule: 
\begin{center}
\begin{tabular}{l}
\emph{h $\leftarrow$ succ(A,B), succ(B,C), succ(A,C)}\\
\end{tabular}
\end{center}
\noindent
Assuming a standard definition for the \emph{succ/2} relation, this rule is unsatisfiable because \emph{succ/2} is \emph{antitransitive}.

We define an unsatisfiable rule:

\begin{definition}
[\textbf{Unsatisfiable rule}]
\label{def:unsatrule}
Let $B$ be BK and $r$ be a rule with the body $b$.
Then $r$ is unsatisfiable if there is no grounding substitution $\theta$ such that $B \models b\theta$.
\end{definition}



\noindent


\noindent
We show that a hypothesis with an unsatisfiable rule cannot be optimal.

\begin{proposition}[\textbf{Unsatisfiability soundness}] 

\label{prop:sound_unsat2}
Let $B$ be BK, $h$ be a hypothesis, 
$r \in h$, 
$r' \subseteq r$,
and $r'$ be unsatisfiable with respect to $B$.
Then $h$ is not optimal. 
\end{proposition}


\begin{proof}
Assume the opposite, \emph{i.e.} that $h$ is optimal.
Let $h' = h \setminus \{r\}$.
Since $r' \subseteq r$ then $r'$ $\theta$-subsumes $r$ and thus $r'\models r$.
Since $r'$ is unsatisfiable w.r.t. $B$ then $r$ is unsatisfiable w.r.t. $B$.
Because $r$ is unsatisfiable, it does not influence the set of derivable facts of $h$, so $h$ and $h'$ entail exactly the same examples, i.e. $fp_{E,B}(h) = fp_{E,B}(h')$ and $fn_{E,B}(h) = fn_{E,B}(h')$.
Since $h' = h \setminus \{r\}$ then $size(h') < size(h)$.
Therefore, 
$cost_{E,B}(h') < cost_{E,B}(h)$, so $h$ cannot be optimal, contradicting the assumption.
\end{proof}

\subsection{Implication Reducible Rules}

An \emph{implication reducible} rule has a body literal that is implied by other body literals.
For example, consider the rules:
\begin{center}
\begin{tabular}{l}
\emph{r$_1$  = h $\leftarrow$ odd(A), int(A)}\\
\emph{r$_2$  = h $\leftarrow$ odd(A)}
\end{tabular}
\end{center}
\noindent
Assuming that \emph{odd} implies \emph{int},
the rule \emph{r$_1$} is implication reducible because its body contains a redundant literal, \emph{int(A)}.
Therefore, \emph{r$_1$} is logically equivalent to \emph{r$_2$} (with respect to BK).

As a second example, consider the rule: 
\begin{center}
\begin{tabular}{l}
\emph{h $\leftarrow$ gt(A,B), gt(B,C), gt(A,C)}\\
\end{tabular}
\end{center}
\noindent
Assuming a standard definition for the \emph{gt/2} relation, this rule is implication reducible because \emph{gt/2} is transitive, i.e. \emph{gt(A,B), gt(B,C)} implies \emph{gt(A,C)}.

Because an implication reducible rule contains a redundant literal, an implication reducible rule cannot be in an optimal hypothesis.
However, a specialisation of an implication reducible rule can be in an optimal hypothesis.
For instance, consider the rule:
\begin{center}
\begin{tabular}{l}
\emph{$r_1$ = h $\leftarrow$ member(X,L), member(Y,L)}\\
\end{tabular}
\end{center}
\noindent
In this rule, \emph{member(X,L)} implies \emph{member(Y,L)} and vice-versa, so one of the literals is redundant.
However, we could still specialise this rule as:
\begin{center}
\begin{tabular}{l}
\emph{$r_2$ = h $\leftarrow$ member(X,L), member(Y,L), gt(Y,X)}\\
\end{tabular}
\end{center}

\noindent
The rule $r_1$ is not logically equivalent to $r_2$ and $r_2$ may appear in an optimal hypothesis.

We identify implication reducible rules where we can prune all its specialisations.
The idea is to identify a redundant \emph{captured} literal, which is a literal implied by other literals and where all of its variables appear elsewhere in the rule.
For instance, consider the rule:

\begin{center}
\begin{tabular}{l}
\emph{h $\leftarrow$ succ(A,B), succ(B,C), gt(C,A), gt(C,D)}\\
\end{tabular}
\end{center}
\noindent
In this rule, the literal \emph{gt(C,A)} is captured because its variables all appear elsewhere in the rule.
By contrast, the literal \emph{gt(C,D)} is not captured because the variable \emph{D} does not appear elsewhere in the rule.

We define a captured literal:
\begin{definition}[\textbf{Captured literal}]
\label{def:capturedLiteral}
Let 
$r$ be a rule,
$l \in body(r)$, 
and
$vars(l) \subseteq vars( body(r) \setminus \{l\})$. 
Then $l$ is $r$-\emph{captured}. 
\end{definition}

\noindent
If a literal is captured in a rule then it is captured in its specialisations:

\begin{lemma}
\label{prop:capTrans}
Let $r_1$ be a rule,
$r_2\subseteq r_1$,
$l\in body(r_2)$,
and $l$ be $r_2$-captured.
Then $l$ is $r_1$-captured.
\end{lemma}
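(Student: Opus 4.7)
The statement is essentially a monotonicity property of the captured relation under enlarging the body, so the plan is simply to unwind the definitions of sub-rule (Definition~\ref{def:satruleALT}) and captured literal (Definition~\ref{def:capturedLiteral}) and chain a few set inclusions. First I would observe that since $r_2 \subseteq r_1$ we have $body(r_2) \subseteq body(r_1)$, and since $l \in body(r_2)$ it follows that $l \in body(r_1)$ as well, so the statement ``$l$ is $r_1$-captured'' is at least well-posed.

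Next, subtracting the singleton $\{l\}$ preserves the inclusion, giving $body(r_2)\setminus\{l\} \subseteq body(r_1)\setminus\{l\}$. Taking the variables appearing in either side is also monotone under set inclusion, so
\[
vars(body(r_2)\setminus\{l\}) \;\subseteq\; vars(body(r_1)\setminus\{l\}).
\]
Finally, the hypothesis that $l$ is $r_2$-captured gives $vars(l) \subseteq vars(body(r_2)\setminus\{l\})$, and composing with the previous inclusion yields $vars(l) \subseteq vars(body(r_1)\setminus\{l\})$, which is exactly the condition for $l$ to be $r_1$-captured.

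There is no real obstacle here; the only thing to be mildly careful about is the direction of the sub-rule relation (the rule with the smaller body is the sub-rule), and checking that the argument still works when $body(r_1)$ contains literals beyond $body(r_2)$ — which it does, since those extra literals can only enlarge the available variable set on the right-hand side. No case analysis or use of BK is needed; the lemma is purely syntactic.
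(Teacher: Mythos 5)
Your proof is correct and takes essentially the same approach as the paper, which disposes of the lemma in one line by noting that the sub-rule relation preserves variable occurrence; you simply spell out the underlying chain of inclusions $body(r_2)\setminus\{l\} \subseteq body(r_1)\setminus\{l\}$ and the monotonicity of $vars(\cdot)$ explicitly. No gaps.
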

\begin{proof}
Follows from Definition~\ref{def:satruleALT} as the sub-rule relation preserves variable occurrence. 
\end{proof}
\noindent
We define an implication reducible rule:

\begin{definition}[\textbf{Implication reducible}]
\label{def:breducible}
Let $r$ be a rule, 
$B$ be BK,
$l \in body(r)$ be $r$-captured,
and $B\models r\leftrightarrow r \setminus \{l\}$.
Then  $r$ is \emph{implication reducible}.
\end{definition}

\noindent Some specialisations of an implication reducible rule are implication reducible:

\begin{proposition}[\textbf{Implication reducible specialisations}] \label{prop:specRedu}
Let $B$ be BK, 
$r_1$ be an implication reducible rule, 
and $r_1\subseteq r_2$. 
Then $r_2$ is implication reducible.
\end{proposition}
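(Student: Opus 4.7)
The plan is to reuse the same body literal $l$ that witnesses implication reducibility of $r_1$ and verify that it also witnesses implication reducibility of $r_2$ via Definition~\ref{def:breducible}. Since $r_1 \subseteq r_2$ implies $body(r_1) \subseteq body(r_2)$ and $head(r_1) = head(r_2)$, we have $l \in body(r_2)$. It then suffices to establish two facts: that $l$ is $r_2$-captured, and that $B \models r_2 \leftrightarrow r_2 \setminus \{l\}$.

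The first fact is an immediate application of Lemma~\ref{prop:capTrans}: capturedness transfers from any sub-rule to its super-rule, so $l$ being $r_1$-captured together with $r_1 \subseteq r_2$ yields that $l$ is $r_2$-captured.

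For the second fact, I would proceed model-theoretically. The direction $B \cup \{r_2 \setminus \{l\}\} \models r_2$ is automatic, because adding $l$ to the body of a rule only strengthens its antecedent. For the converse, the key observation is that, under the captured property, the assumed equivalence $B \models r_1 \leftrightarrow r_1 \setminus \{l\}$ translates into the body-level implication that in every model of $B$, any assignment making $body(r_1) \setminus \{l\}$ true also makes $l$ true; this statement is well-defined with no free variables because $vars(l) \subseteq vars(body(r_1) \setminus \{l\})$. Since $body(r_1) \setminus \{l\} \subseteq body(r_2) \setminus \{l\}$, any assignment of $vars(r_2)$ satisfying $body(r_2) \setminus \{l\}$ satisfies $body(r_1) \setminus \{l\}$ by restriction, and thus forces $l$; hence $body(r_2) \setminus \{l\}$ entails $body(r_2)$ given $B$, yielding $B \models r_2 \leftrightarrow r_2 \setminus \{l\}$.

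The main obstacle is cleanly justifying the translation between the rule-level equivalence $B \models r_1 \leftrightarrow r_1 \setminus \{l\}$ and the body-level implication from $body(r_1) \setminus \{l\}$ to $l$. This step relies essentially on the captured hypothesis, since otherwise the quantification over $vars(l)$ would leak free variables outside $body(r_1) \setminus \{l\}$, and a short model-theoretic argument (exploiting that the head predicate is free with respect to $B$) may be warranted depending on the chosen semantics. Once this translation is in place, the remainder is routine reasoning about inclusions of bodies and variables.
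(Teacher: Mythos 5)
Your proposal is correct and follows essentially the same route as the paper's proof: keep the same witnessing literal $l$, use Lemma~\ref{prop:capTrans} to transfer capturedness from the sub-rule to the super-rule, and argue that the equivalence $B \models r \leftrightarrow r \setminus \{l\}$ is preserved when further body literals are added. The only difference is that you unpack the final step semantically (reading the equivalence as a body-level implication from $body(r_1)\setminus\{l\}$ to $l$ and using monotonicity of conjunction), whereas the paper simply asserts that adding $C = body(r_2)\setminus body(r_1)$ to both sides preserves the equivalence.
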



\begin{proof}
Since $r_1$ is implication reducible then, by definition 10, there exists $l \in body(r_1)$ s.t. $l$ is $r_1$-captured and $B\models r_1\leftrightarrow (r_1 \setminus \{l\})$.
Since $l$ is $r_1$-captured then, by Lemma~\ref{prop:capTrans}, it is also $r_2$-captured.
Because $l$ is $r_2-captured$, every variable in $l$ occurs elsewhere in $r_2$ so removing $l$ does not remove any variables from $r_2$.
Let $C = body(r_2) \setminus body(r_1)$.
Then 
$B\models (r_1\cup C)\leftrightarrow ((r_1\cup C) \setminus \{l\})$.
Finally, since $r_2 = (r_1\cup C)$ then $B\models r_2\leftrightarrow (r_2 \setminus \{l\})$, so $r_2$ is implication reducible.
\end{proof}



\noindent 
Certain hypotheses that contain a sub-hypothesis with implication reducible rules are not optimal:
\begin{proposition}[\textbf{Implication reducible soundness}] \label{prop:sound_sat3}
Let $B$ be BK,
$h_1$ be a hypothesis, 
$h_2\subseteq h_1$, 
$r_1$ be a basic rule in $h_1$, 
$r_2\in h_2$,
$r_2\subseteq r_1$,
and $r_2$ be implication reducible with respect to $B$.
Then $h_1$ is not optimal. 
\end{proposition}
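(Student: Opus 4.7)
The plan is to argue by contradiction: assume $h_1$ is optimal, then construct a strictly cheaper hypothesis $h_1'$ by eliminating a redundant literal from $r_1$, and thereby reach a contradiction.

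First, I would lift implication reducibility from $r_2$ up to $r_1$. By hypothesis $r_2 \subseteq r_1$ and $r_2$ is implication reducible with respect to $B$, so by Proposition~\ref{prop:specRedu} the rule $r_1$ is implication reducible as well. Unpacking Definition~\ref{def:breducible}, there exists a literal $l \in body(r_1)$ that is $r_1$-captured and satisfies $B \models r_1 \leftrightarrow (r_1 \setminus \{l\})$. Define
\[
h_1' \;=\; \bigl(h_1 \setminus \{r_1\}\bigr) \cup \bigl\{r_1 \setminus \{l\}\bigr\}.
\]
Since $l$ is captured, $vars(l) \subseteq vars(body(r_1) \setminus \{l\})$, and no free variables are introduced, so $r_1 \setminus \{l\}$ is a well-formed rule with the same head as $r_1$.

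Next I would show that $B \cup h_1$ and $B \cup h_1'$ entail exactly the same ground atoms. This is the step where the hypothesis that $r_1$ is \emph{basic} in $h_1$ is essential: by Definition~\ref{def:basic}, the head predicate of $r_1$ does not appear in the body of any rule of $h_1$. Therefore, changes to the set of atoms derivable via $r_1$ do not propagate into the evaluation of any other rule in $h_1$, and the total derivable content of $h_1$ is the union of the atoms derived by the other rules (unchanged) with those derived through $r_1$. The equivalence $B \models r_1 \leftrightarrow (r_1 \setminus \{l\})$ guarantees that this last contribution is the same in $h_1$ and $h_1'$. Hence for every ground atom $a$, $B \cup h_1 \models a$ iff $B \cup h_1' \models a$, and in particular $fp_{E,B}(h_1') = fp_{E,B}(h_1)$ and $fn_{E,B}(h_1') = fn_{E,B}(h_1)$.

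Finally, because $h_1'$ is obtained from $h_1$ by deleting exactly one body literal, $size(h_1') = size(h_1) - 1 < size(h_1)$. Using the cost function $cost_{E,B}(h) = (fp_{E,B}(h) + fn_{E,B}(h), size(h))$ with lexicographic ordering, the misclassification components match and the size strictly decreases, so $cost_{E,B}(h_1') < cost_{E,B}(h_1)$. This contradicts the assumed optimality of $h_1$, so $h_1$ is not optimal.

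I expect the main obstacle to be the semantic-preservation step: arguing rigorously that a local rule-level equivalence $B \models r_1 \leftrightarrow r_1 \setminus \{l\}$ extends to a global program-level equivalence $B \cup h_1 \equiv B \cup h_1'$. The basic-rule condition is exactly what prevents cascading effects through recursive or inter-predicate dependencies, so I would state this justification carefully (perhaps by referring to the stratified, recursion-free evaluation of the head predicate of $r_1$ inside $h_1$) rather than treat it as immediate.
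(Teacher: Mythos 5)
Your proposal is correct and follows essentially the same route as the paper: lift implication reducibility from $r_2$ to $r_1$ via Proposition~\ref{prop:specRedu}, replace $r_1$ by the strictly smaller equivalent rule $r_1\setminus\{l\}$, and conclude that the cost strictly decreases. The only difference is that you spell out the semantic-preservation step (that the basic-rule condition prevents the local equivalence $B\models r_1\leftrightarrow r_1\setminus\{l\}$ from cascading through other rules, so the misclassification counts are unchanged), which the paper's proof leaves implicit --- your version is, if anything, the more careful of the two.
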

\begin{proof} By Proposition~\ref{prop:specRedu}, $r_1$ is also reducible implying that there exists a rule $r_3\subseteq r_1$ such that (i)  $B\models r_1 \leftrightarrow r_3$ and (ii) $|r_3|< |r_1|$. 
Let $h_3 = (h_1\setminus \{r_1\})\cup\{r_3\}$. Then $cost_{E,B}(h_3) < cost_{E,B}(h_1)$, i.e. $h_1$ is not optimal.
\end{proof}

\subsection{Recall Reducible Rules}

A \emph{recall reducible} rule contains a redundant literal 
determined by the number of ground instances of a literal deducible from the BK \cite{ilpintro}.
For example, consider the rule:

\begin{center}
\begin{tabular}{l}
\emph{h $\leftarrow$ succ(A,B), succ(A,C)}\\
\end{tabular}
\end{center}

\noindent
This rule contains a redundant literal 
(\emph{succ(A,B)} or \emph{succ(A,C)}) because the second argument of \emph{succ/2} is functionally dependent on the first argument.
In other words, if \emph{succ(A,B)} and \emph{succ(A,C)} are true then \emph{B=C}.

As a second example, consider the rule:
\begin{center}
\begin{tabular}{l}
\emph{h $\leftarrow$ add(A,B,C), add(A,B,D)}
\end{tabular}
\end{center}
This rule contains a redundant literal 
(\emph{add(A,B,C)} or \emph{add(A,B,D)}) because the third argument of \emph{add/3} is functionally dependent on the first two arguments.
In other words, if \emph{add(A,B,C)} and \emph{add(A,B,D)} are true then \emph{C=D}.


Our implication reducible definition (Definition \ref{def:breducible}) requires literals to be captured, so is insufficient for functional dependencies.
For instance, in the above \emph{succ/2} example, neither literal is captured. 
We therefore introduce the notion of a \emph{recall reducible rule} to capture this type of redundancy.
Specifically, our definition captures the observation that redundancy occurs when multiple literals with the same predicate are instantiated with identical constants. 
In these cases, there exists another rule that contains fewer literals in the body and logically implies the original rule.
Our formal definition is:


\begin{definition}[\textbf{Recall reducible}]
\label{def:recall_redundant}
Let 
$B$ be BK,
$r_1$ be a rule,
$r_1\preceq_{\theta} r_2$,
$|r_1|>|r_2|$, 
and $B\models r_1\leftrightarrow r_2$.
Then $r_1$ is recall reducible.
\end{definition}

\noindent
We show how recall reducible applies to a rule:
\begin{example}
The following example illustrates that $r_1$ (as defined below) is recall reducible. Observe that $r_1\preceq_{\theta} r_2$ as $r_1\theta = r_2$, $|r_1|>|r_2|$, and $B\models r_1\rightarrow r_2$ follows from $r_1\preceq_{\theta} r_2$.  Furthermore, $B\models r_2\rightarrow r_1$ follows from \textit{succ/2} being injective.
\hspace{1em}

\begin{center}
\begin{tabular}{l}
\emph{$B$ = \{succ(1,2), succ(2,3), \dots \}}\\
\emph{$r_1$ = h $\leftarrow$ succ(X,Y), succ(X,Z)}\\
\emph{$r_2$ = h $\leftarrow$ succ(U,V)}\\
\emph{$\theta$ = $\{X \mapsto U, Y\mapsto V, Z \mapsto V\}$} \\
\end{tabular}
\end{center}
\end{example}

\begin{example}
The following example illustrates that $r_1$ (as defined below) is recall reducible. Observe that $r_1\preceq_{\theta} r_2$ as $r_1\theta = r_2$, $|r_1|>|r_2|$, and $B\models r_1\rightarrow r_2$ follows from $r_1\preceq_{\theta} r_2$.  Furthermore, $B\models r_2\rightarrow r_1$ follows from \textit{edge/2} being a bijective mapping from $\{a,b,c\}$ to itself.

\begin{center}
\begin{tabular}{l}
\emph{$B$ = \{edge(a,b), edge(b,c), edge(c,a) \}}\\
\emph{$r_1$ = h $\leftarrow$ edge(A,B), edge(B,C), edge(C,D), edge(D,E)}\\
\emph{$r_2$ = h $\leftarrow$ edge(A,B), edge(B,C), edge(C,A)}\\
\emph{$\theta$ = $\{D \mapsto A, E \mapsto B\}$} \\

\end{tabular}
\end{center}
\end{example}



\noindent
We show that a specialisation of a recall reducible rule is also recall reducible:

\begin{proposition}[\textbf{Recall specialisation}] \label{prop:specRedundan}
Let $B$ be BK, 
$r_1$ be a recall reducible rule, 
and $r_1\subseteq r_2$. 
Then $r_2$ is recall reducible.
\end{proposition}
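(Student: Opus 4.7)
The strategy is to lift the witness of $r_1$'s reducibility to a witness for $r_2$. Unpacking Definition~\ref{def:recall_redundant} for $r_1$ yields a rule $r_1^{\star}$ and substitution $\theta$ with $r_1\preceq_\theta r_1^{\star}$, $|r_1|>|r_1^{\star}|$, and $B\models r_1\leftrightarrow r_1^{\star}$. Let $C = body(r_2)\setminus body(r_1)$ collect the extra body literals. Without loss of generality, after renaming the variables of $C$ that are not in $vars(r_1)$ so they do not collide with $vars(r_1^{\star})$, I extend $\theta$ to act as the identity on those fresh variables.

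The candidate witness I would take is $r_2^{\star}$ defined by $head(r_2^{\star})=head(r_1^{\star})$ and $body(r_2^{\star})=body(r_1^{\star})\cup C\theta$. Two of the three clauses of Definition~\ref{def:recall_redundant} are then straightforward. For subsumption, $r_2\theta = r_1\theta\cup C\theta \subseteq r_1^{\star}\cup C\theta = r_2^{\star}$, so $r_2\preceq_\theta r_2^{\star}$. For the size inequality, $body(r_1)$ and $C$ are disjoint by construction, so $|r_2|=|r_1|+|C|$, whereas $|r_2^{\star}|\leq |r_1^{\star}|+|C\theta|\leq |r_1^{\star}|+|C|$; combined with $|r_1|>|r_1^{\star}|$, this yields $|r_2|>|r_2^{\star}|$.

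The remaining and main obstacle is the equivalence $B\models r_2\leftrightarrow r_2^{\star}$. The direction $B\models r_2\to r_2^{\star}$ is immediate from the subsumption just established, since subsumption entails logical consequence. The converse $B\models r_2^{\star}\to r_2$ is the technical heart of the argument. My plan is to reason at the level of ground head atoms: given a ground substitution $\sigma$ witnessing a derivation of $head(r_2)\sigma$ from $body(r_2)\sigma$ in $B$, use the direction $B\models r_1\to r_1^{\star}$ extracted from the equivalence given for $r_1$ to obtain a substitution $\sigma^{\star}$ with $head(r_1^{\star})\sigma^{\star}=head(r_1)\sigma$ and $body(r_1^{\star})\sigma^{\star}$ derivable in $B$, and then argue that the atoms $C\theta\sigma^{\star}$ are already implied by $C\sigma$ because $\theta\circ\sigma^{\star}$ matches $\sigma$ on the relevant variables, so the same $\sigma^{\star}$ simultaneously satisfies $body(r_2^{\star})$; the reverse direction is symmetric. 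The delicate step, which I expect to demand the most care, is verifying that the substitution lifting from $r_1$ to $r_1^{\star}$ can be chosen compatibly with the grounding of $C$, so that one substitution works for both components of $body(r_2^{\star})$; the construction of $r_2^{\star}$ as $r_1^{\star}\cup C\theta$ rather than $r_1^{\star}\cup C$ is tailored precisely to make this compatibility possible.
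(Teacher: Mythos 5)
Your witness is the right one --- since $r_1\theta=r_1^{\star}$, your $r_2^{\star}=r_1^{\star}\cup C\theta$ is exactly the $r_2\theta$ that the paper also uses --- and your subsumption and size arguments are fine. The genuine gap is the step you yourself flag as delicate: establishing $B\models r_2^{\star}\rightarrow r_2$ by choosing the witnessing grounding $\sigma^{\star}$ of $r_1^{\star}$ ``compatibly'' with the grounding of $C$. This does not follow from the bare equivalence $B\models r_1\leftrightarrow r_1^{\star}$. That equivalence is a statement about rules, so it only constrains which \emph{head} atoms are derivable; it gives you no control over how $\sigma^{\star}$ grounds the body variables of $r_1$ that are shared with $C$ but do not occur in the head. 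Concretely, take $B=\{p(a,b),\,p(a,c),\,q(b,c)\}$, $r_1 = h(X)\leftarrow p(X,Y),p(X,Z)$ with $\theta=\{Z\mapsto Y\}$ (so $r_1$ is recall reducible, even tautologically), and $C=\{q(Y,Z)\}$. Then $r_2$ derives $h(a)$ but $r_2^{\star}=r_2\theta = h(X)\leftarrow p(X,Y),q(Y,Y)$ derives nothing, so $B\not\models r_2\leftrightarrow r_2^{\star}$: the compatibility you need simply fails when the only thing you know about $\theta$ is the rule-level equivalence for $r_1$.

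What the paper does to close exactly this hole is detour through an equivalent ``pigeonholed'' characterisation (Theorem~\ref{prop:PigeonholedRedundant} in Appendix~B): it converts the implicit substitution of $\theta$-subsumption into explicit equality literals $d_p$ on argument tuples and extracts the stronger, \emph{body-level} statement $B\models r_1\leftrightarrow (r_1\cup d_p)$, i.e.\ that the body of $r_1$ itself forces the identifications made by $\theta$, pointwise on every grounding. A body-level entailment of equalities is stable under adding extra body literals $e$, which is what licenses $B\models (r_1\cup e)\leftrightarrow(r_1\cup d_p\cup e)$ and hence the equivalence for $r_2$. Your proof never extracts any such pointwise information from Definition~\ref{def:recall_redundant}, so the ``reason at the level of ground head atoms'' plan cannot be completed as written; you would need to either reprove something like Theorem~\ref{prop:PigeonholedRedundant} or otherwise show that the reducing substitution is forced by the body on every grounding, not merely that the two rules derive the same head atoms.
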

\begin{proof}[Proof (Sketch)]
See Appendix~B for the full proof. 
Essentially, $r_1$ contains more literals with the same predicate symbol than the background knowledge supports with distinct values, i.e. some literals must be redundant. If we add additional literals to the body of $r_1$, regardless of their predicate symbol, the resulting rule, $r_2$ will still contain the redundancies of $r_1$. 
\end{proof}

\noindent
A hypothesis with a recall reducible rule cannot be optimal:
\begin{proposition}[\textbf{Recall reducible soundness}] \label{prop:sound_sat4}
Let $B$ be BK,
$h_1$ be a hypothesis, 
$h_2\subseteq h_1$, 
$r_1$ be a basic rule in $h_1$, 
$r_2\in h_2$,
$r_2\subseteq r_1$,
and $r_2$ be recall reducible with respect to $B$.
Then $h_1$ is not optimal. 
\end{proposition}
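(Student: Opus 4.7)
The plan is to mirror the structure of the proof of Proposition~\ref{prop:sound_sat3} (implication reducible soundness), using Proposition~\ref{prop:specRedundan} in place of Proposition~\ref{prop:specRedu}. First I would argue for contradiction: assume $h_1$ is optimal. Since $r_2 \subseteq r_1$ and $r_2$ is recall reducible, Proposition~\ref{prop:specRedundan} immediately gives that $r_1$ is recall reducible as well. Unfolding Definition~\ref{def:recall_redundant}, there exists a rule $r_3$ with $r_1 \preceq_\theta r_3$, $|r_3| < |r_1|$, and $B \models r_1 \leftrightarrow r_3$.

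Next I would construct the witness hypothesis $h_3 = (h_1 \setminus \{r_1\}) \cup \{r_3\}$ and show it has strictly lower cost than $h_1$. The size component drops because $|r_3| < |r_1|$ and the remaining rules are unchanged, giving $size(h_3) < size(h_1)$. For the misclassification count $fp_{E,B} + fn_{E,B}$, I would argue that $h_3 \cup B$ and $h_1 \cup B$ entail the same ground atoms of the target predicates, so $fp_{E,B}(h_3) = fp_{E,B}(h_1)$ and $fn_{E,B}(h_3) = fn_{E,B}(h_1)$. Combining, $cost_{E,B}(h_3) < cost_{E,B}(h_1)$, contradicting optimality.

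The main obstacle is justifying the semantic equivalence at the hypothesis level. The fact $B \models r_1 \leftrightarrow r_3$ only tells us the two rules are equivalent as standalone implications given $B$, whereas in a full program $r_1$'s head predicate could be used recursively elsewhere in $h_1$, and swapping rules inside a recursive definition need not preserve the fixpoint semantics. This is precisely where the \emph{basic} assumption is used: by Definition~\ref{def:basic}, the head symbol of $r_1$ does not occur in the body of any rule of $h_1$, so the atoms derivable by $h_1 \cup B$ with head predicate $head(r_1)$ are exactly those derivable from $(h_1 \setminus \{r_1\}) \cup B$ together with one application of $r_1$, and symmetrically for $h_3$ with $r_3$. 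The equivalence $B \models r_1 \leftrightarrow r_3$ therefore lifts to $h_1 \cup B$ and $h_3 \cup B$ entailing the same ground atoms, and in particular the same members of $E^+ \cup E^-$, which is what the misclassification count depends on. Once this lifting is established, the cost comparison is immediate and the contradiction closes the proof.
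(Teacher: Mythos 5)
Your proposal is correct and follows essentially the same route as the paper's own proof: apply Proposition~\ref{prop:specRedundan} to transfer recall reducibility from $r_2$ to $r_1$, replace $r_1$ by the strictly smaller equivalent rule to form $h_3$, and conclude $cost_{E,B}(h_3) < cost_{E,B}(h_1)$. In fact you supply more detail than the paper does — in particular the explicit argument that the \emph{basic} assumption is what lets the rule-level equivalence $B \models r_1 \leftrightarrow r_3$ lift to equality of the misclassification counts — which the paper's one-line cost comparison leaves implicit.
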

\begin{proof} By Proposition~\ref{prop:specRedundan}, $r_1$ is  recall reducible as $r_2\subseteq r_1$. Thus, there exists a substitution $\theta$, a rule $r_3 = r_1\theta$, and hypothesis $h_3 = (h_1\setminus \{r_1\})\cup\{r_3\}$ such that (i) $B\models r_1\leftrightarrow r_3$ and (ii) $|r_3|< |r_1|$.  It follows that $cost_{E,B}(h_3) < cost_{E,B}(h_1)$, i.e. $h_1$ is not optimal.
\end{proof}

\noindent
Appendix C details the difference between implication and recall reducible rules.

\subsection{Singleton Reducible Rules}

A \emph{singleton reducible} rule contains a literal that is always true.
For instance, consider the rule:

\begin{center}
\begin{tabular}{l}
\emph{f(A) $\leftarrow$ length(A,B)} 
\end{tabular}
\end{center}
\noindent

\noindent
If $A$ is of type \texttt{list} then \emph{length(A,B)} is always true because every list has a length.
In other words, because the variable $A$ of length is total and the variable $B$ only appears once in this rule, the literal \emph{length(A,B)} is redundant because it has no discriminatory power. 

We can identify total literals by separating its variables into two sets $S_1$ and $S_2$ and observing that for any instantiation of variables in $S_1$ there exists an instantiation of the variables in $S_2$ such that the literal is true. 
For instance, consider the rule 
\emph{divides(A,B,C) $\leftarrow$ A/B = C}. 
Separate the variables in the sets $S_1=\{B\}$ and $S_2=\{A,C\}$. 
If we instantiate $B$ with $0$ then \emph{(A,0,C)} is false for any  instantiation of $A$ and $C$. Thus, we say that \emph{divides(A,B,C)} is partial in $S_1$. 
If instead we chose $S_1=\{C\}$ and $S_2=\{A,B\}$, then  for every instantiation of $C$ there is an instantiation of $A$ and $B$ such that \emph{divides(A,B,C)} is true.  
Thus, we say that \emph{divides(A,B,C)} is total in $S_1$. 

In this section, we assume a simple typing framework, which we describe in Appendix A, and that all literals are well typed.
We first define a literal instance:

\begin{definition}[\textbf{Instance}]
Let 
$l$ be a well-typed literal, 
$S\subseteq vars(l)$, 
and $\theta$ be a well-typed substitution with respect to $l$ that maps each variable in $S$ to a constant.
Then $l\theta$ is an instance of $l$ over $S$.  
The set of $S$ instances of $l$ is denoted $inst(S,l)$.
\end{definition}

\noindent
We define a \emph{total literal}:

\begin{definition}[\textbf{Total literal}]
\label{def:partial}
Let 
$B$ be BK, 
$l$ a literal, and $S\subseteq vars(l)$. 
Then $l$ is \emph{total} in $S$ if for every $l'\in inst(S,l)$, $B\models l'$; otherwise, $l$ is partial in $S$.
\end{definition}

\noindent
In addition to checking whether a given set $S$ of variable argument positions is total in a given literal $l$, we need to check whether the positions which are not in $S$ only occur in $l$; otherwise, the removal of the literal could change the semantics of the rule containing $l$.   
For instance, consider the rule \emph{p(A,B) $\leftarrow$ add(A,B,C), mul(A,B,C)}.
Observe that \emph{p(A,B)} is true when $A=B=0$ or $A=B=2$. 
Furthermore, no matter how we choose $S$, \emph{add(A,B,C)} is total in $S$. 
If we remove \emph{add(A,B,C)} from the above rule, the resulting rule accepts any pair of numbers. 

As this example illustrates, we need to take care when removing total literals. 
We define the precise requirements:

\begin{definition}[\textbf{Singleton reducible literal}]
\label{def:singleton}
Let 
$r$ be a rule,
$l$ be a body literal of $r$,
$S\subseteq vars(l)$ such that $l$ is total in $S$ and each  $V \in vars(l)\setminus S$ occurs precisely once in $r$. Then $l$ is \emph{singleton reducible}. By $tot(l)$  we denote $vars(l)\setminus S$.
\end{definition}

\noindent
If a literal is singleton reducible then it does not affect whether the rule is entailed by the BK:

\begin{proposition}
\label{prop:6}
Let 
$B$ be BK, 
$r$ be a rule, 
and $l$ be a singleton reducible literal in $r$. 
Then $B\models r\leftrightarrow r\setminus\{l\}$.
\end{proposition}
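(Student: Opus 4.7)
The plan is to prove the two directions of the biconditional $B \models r \leftrightarrow (r\setminus\{l\})$ separately. The direction $B \models (r\setminus\{l\}) \to r$ is immediate and uses no hypothesis about $l$: whenever a grounding makes the larger body $body(r) = body(r\setminus\{l\}) \cup \{l\}$ true, it also makes $body(r\setminus\{l\})$ true, so $r\setminus\{l\}$ forces the head and hence $r$ holds.

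For the harder direction $B \models r \to (r\setminus\{l\})$, I would fix an arbitrary grounding $\sigma$ of $vars(r\setminus\{l\})$ with $B \models (body(r\setminus\{l\}))\sigma$ and extend $\sigma$ to a grounding $\sigma'$ of all of $vars(r)$ that additionally makes $l$ true, so that the rule $r$ itself yields $B \models head(r)\sigma'$. Because $head(r)$ contains no variable of $tot(l)$, we have $head(r)\sigma' = head(r)\sigma$, which is the conclusion. The extension relies on both clauses of Definition~\ref{def:singleton}: the second clause (each variable of $tot(l)$ occurs precisely once in $r$) guarantees that all variables in $vars(r) \setminus vars(r\setminus\{l\})$ appear only inside $l$, so extending $\sigma$ can neither disturb the already-true conjuncts of $body(r\setminus\{l\})$ nor affect $head(r)$; the first clause (totality of $l$ in $S$) supplies, given any substitution $\theta$ on $S$, constants for $tot(l)$ that make $l\theta$ true under $B$, reading $B \models l'$ for a partially-ground $l'$ in the standard existential sense suggested by the \emph{divides} example preceding Definition~\ref{def:partial}.

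The main subtlety is that Definition~\ref{def:singleton} does not require $S$ to coincide with $vars(l) \cap vars(r\setminus\{l\})$, so some variables of $S$ may occur only inside $l$ and thus receive no value from $\sigma$. I would handle these by first assigning them arbitrary constants, combining this with $\sigma$ restricted to the remaining $S$-variables to obtain a full $S$-substitution $\theta$ with $l\theta \in inst(S,l)$, and only then invoking totality to pick values for $tot(l)$. The crux of the argument is the bookkeeping needed to verify that this two-stage extension is well-defined, that it leaves $(body(r\setminus\{l\}))\sigma$ undisturbed, and that it produces a grounding under which $body(r)$ is entailed by $B$; everything else then follows from applying $r$ to $\sigma'$.
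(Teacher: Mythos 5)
Your proposal is correct and follows essentially the same route as the paper's own proof: one direction of the biconditional is the trivial sub-rule/subsumption direction, and the other is obtained by grounding the smaller body and invoking totality of $l$ to extend the grounding so that $l$ (and hence all of $body(r)$) is satisfied. You are in fact somewhat more careful than the paper — you assign the trivial and non-trivial implications to the correct directions (the paper's proof swaps the two labels), you make explicit the existential reading of $B\models l'$ for partially ground $l'$, and you handle the $S$-variables that occur only in $l$ — but none of this changes the underlying argument.
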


\begin{proof}
Let $r = (h \leftarrow b,l)$ and $r'=(h \leftarrow b)$.
For $B\models r'\rightarrow r$, it follows 
since $r'$ subsumes $r$.
For $B\models r \rightarrow r'$, let $\sigma$ be any grounding substitution for the variables of $r'$ and assume $B\models b\sigma$. 
Since $l$ is singleton reducible in $r$, there exists $S\subseteq vars(l)$ such that
$l$ is total in $S$ and each variable in $vars(l)\setminus S$ occurs precisely once in $r$ and thus only in $l$.
Extend $\sigma$ to a grounding substitution $\theta$ for all variables of $r$ such that $B\models l\theta$, where the extra variables only occur in $l$. Such a substitution is constructable by Definition~\ref{def:partial} as $l$ is total in $S$.
Thus $B\models (b\wedge l)\theta$, and using $r$ we obtain $B\models h\theta$, hence $B\models h\sigma$.
Therefore $B\models r\to r'$.
Hence $B\models r\leftrightarrow r'$.
\end{proof}

\noindent
A \emph{singleton reducible rule} is a rule with a singleton reducible literal.
The specialisations of a singleton reducible rule are also singleton reducible if the reducible literal remains total:

\begin{proposition}[\textbf{Singleton specialisation}] \label{prop:specSingleton}
Let $B$ be BK, 
$r_1$ be a rule,
$l$ be a singleton reducible literal in $r_1$,
$r_1\subseteq r_2$, 
and variables in $tot(l)$ are singleton in $r_2$. 
Then $r_2$ is a singleton reducible rule. 
\end{proposition}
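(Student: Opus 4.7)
The plan is to show that the singleton reducible literal of $r_1$ remains singleton reducible when viewed inside $r_2$, after which the conclusion is immediate from the definition of a singleton reducible rule.

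First I would unpack what it means for $r_1$ to be singleton reducible via Definition~\ref{def:singleton}: there is a body literal $l \in body(r_1)$ and a subset $S \subseteq vars(l)$ such that $l$ is total in $S$ with respect to $B$, and every variable in $vars(l) \setminus S = tot(l)$ appears exactly once in $r_1$. Since $r_1 \subseteq r_2$, we have $head(r_1) = head(r_2)$ and $body(r_1) \subseteq body(r_2)$ by Definition~\ref{def:satruleALT}, so $l \in body(r_2)$ as well. Totality is a property of $l$, $S$, and $B$ alone (Definition~\ref{def:partial}), independent of the surrounding rule, so $l$ is still total in $S$ when regarded as a literal of $r_2$.

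Next I would verify the occurrence condition for singleton reducibility in $r_2$. The hypothesis of the proposition states that there is a total literal $l$ such that the variables in $tot(l)$ are singleton in $r_2$, which is exactly the requirement that each $V \in vars(l) \setminus S$ occurs precisely once in $r_2$. Combined with the totality preserved from the previous step, this gives the two conditions required by Definition~\ref{def:singleton}, so $l$ is singleton reducible with respect to $r_2$. By the definition of a singleton reducible rule that immediately precedes this proposition, $r_2$ is therefore a singleton reducible rule, completing the proof.

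The only subtlety, and the one place where the proof could go wrong, is the compatibility between the $l$ chosen for $r_1$ and the $l$ quantified in the statement. The statement phrases the hypothesis as ``there exists a total literal $l$'', which I read as re-quantifying the literal in the context of $r_2$; one should either argue that the $l$ witnessing singleton reducibility of $r_1$ still satisfies the occurrence condition in $r_2$ (which is exactly the supplied hypothesis if we identify the two $l$'s), or simply observe that the hypothesis directly supplies a witness literal $l$ with the required totality and occurrence properties in $r_2$. Either reading makes the argument a short verification against Definition~\ref{def:singleton}, so I do not anticipate a genuine obstacle beyond making this identification explicit.
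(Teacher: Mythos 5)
Your proof is correct and follows essentially the same route as the paper's, which is a one-line observation that $l \in body(r_2)$ and hence $r_2$ is singleton reducible; you simply spell out the verification (membership via the sub-rule relation, totality being independent of the surrounding rule, and the occurrence condition being supplied directly by the hypothesis) that the paper leaves implicit.
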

\begin{proof}
Since $r_1 \subseteq r_2$ then $l \in body(r_2)$.
The totality of $l$ only depends on $B$, so $l$ is total in $r_2$.
By assumption, the variables in $tot(l)$ are singleton in $r_2$. 
Therefore, by Definition \ref{def:singleton}, $l$ is singleton reducible in $r_2$.
\end{proof}

\noindent 
We now show a soundness result similar to those introduced in previous sections:
\begin{proposition}[\textbf{Singleton reducible soundness}] \label{prop:sound_sat5}
Let $B$ be BK,
$h_1$ be a hypothesis, 
$h_2\subseteq h_1$, 
$r_1$ be a basic rule in $h_1$, 
$r_2\in h_2$,
$r_2\subseteq r_1$,
 $r_2$ be singleton reducible with respect to $B$ and a literal $l\in body(r)$, and variables in $tot(l)$ are singleton in $r_1$.
Then $h_1$ is not optimal. 
\end{proposition}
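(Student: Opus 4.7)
The plan is to follow the template set by Propositions~\ref{prop:sound_sat3} and~\ref{prop:sound_sat4}: lift singleton reducibility from $r_2$ up to $r_1$ using the corresponding specialisation lemma, then exhibit a strictly cheaper hypothesis obtained by deleting the redundant literal from $r_1$.

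First, I would apply Proposition~\ref{prop:specSingleton} with $r_2 \subseteq r_1$. The hypotheses of the current proposition supply exactly the two premises needed: $r_2$ is singleton reducible, and there is a total literal $l \in body(r_2)$ whose variables in $tot(l)$ are singleton in $r_1$. Thus $r_1$ is singleton reducible, witnessed by the same literal $l$.

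Next, I would define $h_3 = (h_1 \setminus \{r_1\}) \cup \{r_1 \setminus \{l\}\}$. The preceding proposition on singleton reducible literals gives $B \models r_1 \leftrightarrow r_1 \setminus \{l\}$. Because $r_1$ is basic in $h_1$ (Definition~\ref{def:basic}), its head symbol appears in no body of any rule in $h_1$, and hence in no body of any rule in $h_3$ either. Therefore replacing $r_1$ by the logically equivalent, strictly smaller rule $r_1 \setminus \{l\}$ does not alter which ground atoms over the example predicate are derivable from $B \cup h_1$ versus $B \cup h_3$. Consequently $fp_{E,B}(h_3) = fp_{E,B}(h_1)$ and $fn_{E,B}(h_3) = fn_{E,B}(h_1)$, while $size(h_3) = size(h_1) - 1$. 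Under the lexicographic cost function this yields $cost_{E,B}(h_3) < cost_{E,B}(h_1)$, contradicting the optimality of $h_1$.

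The main obstacle is justifying that the local equivalence $B \models r_1 \leftrightarrow r_1 \setminus \{l\}$ lifts to agreement of the full programs $B \cup h_1$ and $B \cup h_3$ on the example atoms. This is precisely where the basic assumption on $r_1$ is indispensable: without it, another rule in $h_1$ could either define or consume the head symbol of $r_1$, and a change to $r_1$ could cascade through the fixed-point semantics. The basic condition localises the replacement, after which the remaining steps are routine in direct analogy with the proofs of Propositions~\ref{prop:sound_sat3} and~\ref{prop:sound_sat4}.
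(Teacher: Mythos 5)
Your proposal is correct and follows essentially the same route as the paper's proof: lift singleton reducibility from $r_2$ to $r_1$ via Proposition~\ref{prop:specSingleton}, form $h_3 = (h_1\setminus\{r_1\})\cup\{r_1\setminus\{l\}\}$, and conclude $cost_{E,B}(h_3) < cost_{E,B}(h_1)$. You actually supply more justification than the paper does for the final cost comparison (the role of $B\models r_1\leftrightarrow r_1\setminus\{l\}$ and of the basic-rule condition in localising the replacement), which the paper leaves implicit.
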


\begin{proof} 
By Proposition~\ref{prop:specSingleton}, $r_1$ is also singleton reducible. 
Let $h_3 = (h_1\setminus \{r_1\})\cup\{r_1\setminus\{l\}\}$.  
By Proposition \ref{prop:6}, $B\models r \rightarrow r\setminus\{l\} $.
Since $r_1$ is basic in $h_1$, it holds that for all $r'\in h_1\setminus {r_1}$, $head(r_1) \not \in body(r')$. 
Hence, $B\models (h_1\setminus \{r_1\}) \leftrightarrow  (h_3\setminus \{r_1\setminus \{l\}\})$. 
Furthermore, the singleton reducibility of $r_1$ implies that 
 $h_1$ and $h_3$ entail exactly the same examples, i.e. $fp_{E,B}(h_1) = fp_{E,B}(h_3)$ and $fn_{E,B}(h_1) = fn_{E,B}(h_3)$.
But $cost_{E,B}(h_3) < cost_{E,B}(h_1)$.
Therefore, $h_1$ is not optimal.
\end{proof}
\section{\name{}}
\label{sec:hs-reduction}
The previous section outlines four types of pointless rules that we can deduce from BK.
We now describe \name{}, which automatically finds such pointless rules to remove from the hypothesis space.
Algorithm \ref{alg:shrinker} shows the high-level algorithm.
This algorithm takes as input BK in the form of a Datalog program.
In the rest of this section, we assume, for simplicity, that the BK has been materialised into a set of facts \cite{materialisation}.

\begin{algorithm}[ht!]
\large
{
\begin{myalgorithm}[]
def shrinker(bk, max_size, max_vars, batch_size, timeout):
  pointless_rules = {}
  pointless_rules += find_unsat_impli(bk, max_size, max_vars, batch_size, timeout)
  pointless_rules += find_recall_reducible(bk)
  pointless_rules += find_singleton_reducible(bk)
  return pointless_rules
\end{myalgorithm}
\caption{
\name{}
}
\label{alg:shrinker}
}
\end{algorithm}

\name{} has three components.
The first finds unsatisfiable and implication reducible rules 
(line 3).
The second finds recall reducible rules 
(line 4).
The third finds singleton reducible rules 
(line 5).
We describe these components in turn.
We then describe how \name{} uses the pointless rules to shrink the hypothesis space of a constraint-based ILP system.
The output of the algorithm is independent of the order of the three steps.

\subsection{Finding Unsatisfiable and Implication Reducible Rules}
\label{sec:unsat_implication_reducible}

To find unsatisfiable and implication reducible rules, \name{} builds small rule templates and uses the BK to check whether any instances are unsatisfiable or implication reducible.
A \emph{template} is a set of second-order literals.
For instance, the set \emph{\{P(A,B), Q(B,A)\}} is a template where \emph{P} and \emph{Q} are second-order variables and \emph{A} and \emph{B} are first-order variables.
An \emph{instance} of a template is a grounding of the second-order variables, such as \emph{\{succ(A,B), succ(B,A)\}}.
\name{} considers batches of templates.
As there can be many possible templates, \name{} uses a timeout to find as many unsatisfiable or implication reducible rules as possible in the time limit.

Algorithm~\ref{alg:shrinker1} shows the algorithm for finding unsatisfiable and implication reducible rules.
The algorithm takes the inputs: 
background knowledge (\emph{BK}),
a maximum number of literals in a rule (\emph{max\_size}),
a maximum number of unique variables in a rule (\emph{max\_vars}),
the number of templates to check in each iteration (\emph{batch\_size}),
and a time limit (\emph{timeout}).
The algorithm returns a set of pointless rules.

\begin{algorithm}[ht!]
\large
{
\begin{myalgorithm}[]
def find_unsat_impli(bk, max_size, max_vars, batch_size, timeout):
  templates = build_templates(bk, max_size, max_vars)    
  start_time = get_time()
  pointless_rules = {}
  while (get_time() - start_time) < timeout:
    check_templates = select_templates(templates, batch_size)
    templates -= check_templates
    pointless_rules += find_pointless_rules(bk, check_templates)
  return pointless_rules
\end{myalgorithm}
\caption{
Find unsatisfiable and reducible rules.
}
\label{alg:shrinker1}
}
\end{algorithm}

Algorithm~\ref{alg:shrinker1} works as follows.
The function \emph{build\_templates} (line 2) builds all possible templates with at most \emph{max\_size} literals and at most \emph{max\_vars} variables and returns them in ascending order of size (number of literals).
Lines 5-8 show the loop to test templates on the BK to find unsatisfiable and implication reducible rules.
The function \emph{select\_templates} (line 6) selects \emph{batch\_size} untested templates to test.
The function \emph{find\_pointless\_rules} (line 8) is  key.
It takes as input BK and templates and returns the discovered pointless rules.

We implement the \emph{find\_pointless\_rules} function in a bottom-up way using ASP.
Specifically, we search for a counter-example for each rule to show that it is not unsatisfiable or not implication reducible.
For instance, to show that any rule with \emph{p(A,B), p(B,C), p(A,C)} in the body is unsatisfiable, we need to check there is no counter-example where the literals \emph{p(A,C), p(B,C), p(A,C)} all simultaneously hold in the BK.
Likewise, to show that any rule with the literals \emph{p(A), q(A)} in the body is implication reducible, we need to check there is no counter-example where either \emph{p(A)} holds and \emph{q(A)} does not, or vice-versa.

We use ASP to perform these checks.
We first build an ASP encoding $\mathcal{P}$ which includes all the BK.
In the following, we use \texttt{typewriter typeface} to denote ASP code.
For each relation \emph{p} with arity $a$ in the BK, we add these rules to $\mathcal{P}$:

\begin{lstlisting}[basicstyle=\ttfamily, xleftmargin=2em,   mathescape=true]
pred(p,a).
holds(p,($X_1,X_2,\dots,X_a$)):- p($X_1,X_2,\dots,X_a$).
\end{lstlisting}

\noindent
We use the given templates to build ASP rules to add to $\mathcal{P}$ to find pointless rules.
We describe each type of rule in turn.

\subsubsection{Unsatisfiable Rules}

Given the template \emph{\{P(A,B), Q(B,A)\}}, we add these ASP rules to $\mathcal{P}$:

\begin{lstlisting}[basicstyle=\ttfamily, xleftmargin=2em]   
sat_ab_ba(P,Q):- holds(P,(A,B)), holds(Q,(B,A)).
unsat_ab_ba(P,Q):- pred(P,2), pred(Q,2), not sat_ab_ba(P,Q).
\end{lstlisting}


\noindent
The atom \emph{sat\_ab\_ba(P,Q)} is true for the predicate symbols \emph{P} and \emph{Q} if the literals \emph{P(A,B)} and \emph{Q(B,A)} are both true for any values \emph{A} and \emph{B}.
This atom can be seen as a counter-example to say that this template with these predicate symbols is satisfiable.
The atom \emph{unsat\_ab\_ba(P,Q)} is true for the predicate symbols \emph{P} and \emph{Q} if there is no satisfiable counter-example.
These ASP rules allow us to deduce asymmetry for binary relations.

Given the template \emph{\{P(A,B), Q(B,C), R(A,C)\}}, we add these ASP rules to $\mathcal{P}$:
\begin{lstlisting}[basicstyle=\ttfamily, xleftmargin=2em]   
sat_ab_bc_ac(P,Q,R):- holds(P,(A,B)), holds(Q,(B,C)), holds(R,(A,C)).
unsat_ab_bc_ac(P,Q,R):- pred(P,2), pred(Q,2), pred(R,2), not sat_ab_bc_ac(P,Q,R).
\end{lstlisting}

\noindent
These rules allow us deduce antitransitivity for binary relations.

\subsubsection{Implication Reducible Rules}

Given the template \emph{\{P(A), Q(A)\}}, we add these ASP rules to $\mathcal{P}$:

\begin{lstlisting}[basicstyle=\ttfamily, xleftmargin=2em]   
aux_a_a(P,Q):- holds(P,(A,)),  not holds(Q,(A,)).
implies_a_a(P,Q):- pred(P,1), pred(Q,1), P != Q, not aux_a_a(P,Q).    
\end{lstlisting}

\noindent
The atom \emph{aux\_a\_a(P,Q)} is true for the predicate symbols \emph{P} and \emph{Q} if there is any literal \emph{P(A)} that is true where the literal \emph{Q(A)} is not true.
This atom can be seen as a counter-example to say that \emph{P(A)} does not imply \emph{Q(A)}.
The atom \emph{implies\_a\_a(P,Q)} is true for the predicate symbols \emph{P} and \emph{Q} if there is no implication counter-example.

Given the template \emph{\{P(A,B), Q(B,C), R(A,C)\}}, we add these ASP rules to $\mathcal{P}$:

\begin{lstlisting}[basicstyle=\ttfamily, xleftmargin=2em]   
aux_ab_bc_ac(P,Q,R):-  holds(P,(A,B)),  holds(Q,(B,C)),  not holds(R,(A,C)).
implies_ab_bc_ac(P,Q,R):- pred(P,2), pred(Q,2), pred(R,2), not aux_ab_bc_ac(P,Q,R).    
\end{lstlisting}

\noindent
\subsubsection{Deducing Unsatisfiable and Implication Reducible Rules}

We use the ASP solver \textsc{clingo} \cite{multishot-clingo} to find a model of $\mathcal{P}$ and thus find unsatisfiable and implication reducible rules.
Specifically, we use the multi-shot solving feature of \textsc{clingo} \cite{multishot-clingo} to test batches of templates incrementally without regrounding BK.
\name{} represents a pointless rule as an atom.
For instance, if the template \emph{\{succ(A,B), succ(B,A)\}} is unsatisfiable then \name{} represents it as the atom \emph{unsat\_ab\_ba(succ,succ)}. 
Algorithm \ref{alg:shrinker1} returns a set of such atoms, denoting pointless rules.

\subsection{Finding Recall Reducible Rules}
\label{sec:recall_reducible}

To find recall reducible rules, we use an approach inspired by  \citet{queryoptimisation} for ordering literals in a rule to improve query efficiency.
See Section \ref{sec:preprocessing} for more detail on this work.
At a high level, for any background relation, we determine the maximum number of answer substitutions (ground instantiations) for any subset of its arguments.
We use standard recall notation \cite{progol} to denote input/ground \emph{(+)} and output/non-ground \emph{(-)} arguments.
To illustrate this notation, consider the facts:
\begin{center}
\begin{lstlisting}[basicstyle=\ttfamily, frame=single]
p(1,2). p(2,1). p(3,1). 
q(p1,a,b). q(p2,b,c). q(p3,a,b). q(p4,b,c). 
\end{lstlisting}
\end{center}
\noindent
The recall for \emph{p(-,-)} is 3 because there are only 3 \emph{p/2} facts.
The recall for \emph{p(+,-)} is 1 because, for any ground first argument, there is at most 1 ground second argument.
The recall for \emph{q(+,-,-)} is also 1 because, for any first argument, there is at most one pair formed of arguments 2 and 3. 
The recall for \emph{q(-,+,+)} is 2 because for any pair of arguments formed of arguments 2 and 3 there are at most 2 non-ground arguments in position 1.

Algorithm \ref{alg:recalls} shows our algorithm for finding recall reducible rules.
The algorithm takes as input BK and returns a set of atoms denoting the recall of a relation with respect to a subset of its arguments.
The algorithm works as follows.
It loops over every atom in the BK (line 4) and every subset of the arguments of an atom (line 9), which corresponds to all recall combinations.
For instance, for the atom \emph{succ(A,B)}, using the recall notation, the subsets are \emph{succ(-,-)}, \emph{succ(+,-)}, and \emph{succ(-,+)}.
We ignore the case when all the arguments are input/ground \emph{(succ(+,+)}) because the recall is one.
For each subset, we create a key formed of the input/ground arguments and a value formed of the output/non-ground arguments.
We then add the keys and values to a hashmap that is specific for the predicate symbol and its argument subset. 
Finally, for each relation and subset of its arguments, we use this hashmap to calculate the maximum answer substitutions (recall).
The algorithm returns a set of atoms of the form \emph{recall\_pred\_input\_output\_count}, where \emph{pred} is a predicate symbol, \emph{input} is a sequence of input variables, \emph{output} is a sequence of output variables, and \emph{count} is the corresponding recall value.
For instance, for \emph{p(+,-)} with recall 1, the atom is \emph{recall\_p\_a\_b\_1}.
For \emph{p(-,+)} with recall 1, the atom is \emph{recall\_p\_b\_a\_1}.
For \emph{q(-,+,+)} with recall 2, the atom is \emph{recall\_q\_bc\_a\_2}.

Algorithm \ref{alg:recalls} is exponential in the maximum 
arity of a predicate symbol in the background knowledge because it iterates over the powerset of argument positions (line 9).
In practice, the maximum arity is small, usually <4.

\begin{algorithm}[ht!]
{
\begin{myalgorithm}[]
def find_recall_redundant(bk):
  counts = {}
  preds = {}
  for atom in bk:
    pred = get_pred(atom)    
    args = get_args(atom)
    arity = get_arity(atom)
    preds += (pred, arity)
    for arg_subset in powerset({1,...,arity}):     
        key = ''
        value = ''
        for i in range(arity):
          if i in arg_subset:
            key += args[i]
          else:
            value += args[i]
        counts[pred][arg_subset][key].add(value)

  recalls = {}
  for pred, arity in preds:
    for arg_subset in powerset({1,...,arity}):  
      keys = counts[pred][arg_subset]
      recall = max(len(counts[pred][arg_subset][key]) for key in keys)
      recalls[(pred, arg_subset)] = recall
  return recalls
    
\end{myalgorithm}
\caption{
Find recall reducible rules
}
\label{alg:recalls}
}
\end{algorithm}

\subsection{Finding Singleton Reducible Rules}
\label{sec:singleton_reducible}

We find singleton reducible rules using ASP.
As a reminder, a singleton reducible rule contains a literal that is always true.
For instance, consider the rule:

\begin{center}
\begin{tabular}{l}
\emph{f(A) $\leftarrow$ length(A,B)} 
\end{tabular}
\end{center}
\noindent

\noindent
Assuming sensible semantics for \emph{length/2}, the literal \emph{length(A,B)} is always true because every list has a length.
In other words, because the variable $B$ only appears once in this rule and $A$ is true for all lists, the literal \emph{length(A,B)} is redundant as it has no discriminatory power.

We assume that the background relations are simply typed (see Appendix A for an elaboration on types).
We use ASP to determine whether a relation is partial or total and use that information to deduce singleton reducible rules.
We describe our ASP encoding $\mathcal{P}$.
We add the BK to $\mathcal{P}$.
For each relation \emph{p} with arity $a$ in the BK, we add this rule to $\mathcal{P}$:

\begin{lstlisting}[basicstyle=\ttfamily, xleftmargin=2em,   mathescape=true]
holds(p,($X_1,X_2,\dots,X_a$)):- p($X_1,X_2,\dots,X_a$).
\end{lstlisting}



\noindent
For each relation $p$ with arity $a$ and each index $i \in \{1, \dots, a\}$ with type $t$ we add this fact to $\mathcal{P}$:

\begin{lstlisting}[basicstyle=\ttfamily, xleftmargin=2em,   mathescape=true]
type(p,i,t).
\end{lstlisting}

\noindent
For each relation $p$ with arity $a$ and for every index $i=1 \dots a$, we determine whether a constant symbol $C_i$ holds at $i$ for $p$ by adding this rule to $\mathcal{P}$:


\begin{lstlisting}[basicstyle=\ttfamily, xleftmargin=2em,  mathescape=true]
cholds(p,i,C$_i$):- holds(p,(C$_1$,C$_i$,$\dots$,C$_a$)). 
\end{lstlisting}

\noindent
We determine the constants that hold for a type $T$ (the domain of $T$) by adding this rule to $\mathcal{P}$:

\begin{lstlisting}[basicstyle=\ttfamily, xleftmargin=2em,  mathescape=true]
domain(T,C):- cholds(P,I,C), type(P,I,T).
\end{lstlisting}

\noindent
For a predicate symbol $p$ with arity $a>1$, we determine whether two constant symbols $C_i$ and $C_j$ hold at indices $i$ and $j$ respectively for $i=1\dots j-1, j\dots a$ by adding this rule to $\mathcal{P}$:

\begin{lstlisting}[basicstyle=\ttfamily, xleftmargin=2em,  mathescape=true]
holds2(p,i,j,C$_i$,C$_j$):- cholds(p,i,C$_i$), cholds(P,j,C$_j$).
\end{lstlisting}

\noindent
We add similar rules for all values $1 \leq i_1 < i_2 < \dots < i_k < a$, such as this rule:

\begin{lstlisting}[basicstyle=\ttfamily, xleftmargin=2em,  mathescape=true]
holds3(p,i,j,k,C$_i$,C$_j$,C$_k$):-  cholds(p,i,C$_i$), cholds(p,j,C$_j$), cholds(p,k,C$_k$).
\end{lstlisting}

\noindent
We deduce whether an argument position $I$ is partial with the rule:

\begin{lstlisting}[basicstyle=\ttfamily, xleftmargin=2em,  mathescape=true]
partial(P,I):- type(P,I,T), domain(T,X), not cholds(P,I,X).
\end{lstlisting}

\noindent
We add similar rules for multiple arguments.
For instance, we deduce whether two arguments are partial with the rule:

\noindent
\begin{lstlisting}[basicstyle=\ttfamily, xleftmargin=2em,  mathescape=true]
partial2(P,I,J):- I<J, type(P,I,T$_I$), type(P,J,T$_J$), domain(T$_I$,C$_I$), domain(T$_J$,C$_J$), 
                    not holds2(P,I,J,C$_I$,C$_J$).
\end{lstlisting}

\noindent
Finally, we deduce whether arguments are total by whether they are not partial.
We use an ASP solver to find a model of $\mathcal{P}$ and thus find total literals.
Algorithm \ref{alg:shrinker1} then returns a set of atoms denoting the total arguments of background literals.
For instance, for the relation \emph{length/2}, where the first argument is total, \name{} returns the atom \emph{total\_length\_b}.
For the relation \emph{add/3}, where any pair of variables is total, \name{} returns three atoms \emph{total\_add\_ab}, \emph{total\_add\_ac}, \emph{total\_add\_bc}.

\subsection{\name{} Correctness and Complexity}
We show that \name{} is correct in that it solves the hypothesis space reduction problem (Definition \ref{def:hypredprob2}):

\begin{proposition}[\textbf{\name{} correctness}]
\label{prop:correctness}
Let $I=(E, B, \mathcal{H})$ be an ILP problem, 
$h \in \mathcal{H}$ be an optimal hypothesis for $I$,
and $\mathcal{H}'$ be the hypothesis space resulting from applying \name{} to $\mathcal{H}$ given $B$.
Then $h \in \mathcal{H'}$.
\end{proposition}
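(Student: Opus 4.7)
The plan is to argue by contrapositive: I will show that every hypothesis removed from $\mathcal{H}$ by \name{} fails to be optimal, so the optimal $h$ cannot be among the removed hypotheses and must therefore lie in $\mathcal{H}'$.

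First, I would precisely characterise $\mathcal{H} \setminus \mathcal{H}'$. By inspection of Algorithm \ref{alg:shrinker}, \name{} produces a set $R$ of pointless rules, partitioned into four kinds according to the three subroutines: unsatisfiable and implication reducible rules (Algorithm \ref{alg:shrinker1}), recall reducible rules (Algorithm \ref{alg:recalls}), and singleton reducible rules. When these are compiled into constraints for the downstream ILP system, the eliminated hypotheses are exactly those $g \in \mathcal{H}$ that contain some basic rule $r_1$ admitting some $r \in R$ as a sub-rule, i.e.\ $r \subseteq r_1$.

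Next, I would split on the kind of the offending $r$ and directly invoke the corresponding soundness proposition proved earlier: Proposition~\ref{prop:sound_unsat2} for unsatisfiable rules, Proposition~\ref{prop:sound_sat3} for implication reducible rules, Proposition~\ref{prop:sound_sat4} for recall reducible rules, and Proposition~\ref{prop:sound_sat5} for singleton reducible rules. Each of these asserts that a hypothesis $h_1$ containing a basic rule $r_1$ with a sub-rule $r_2$ of the relevant pointless kind cannot be optimal. Contraposing, since $h$ is optimal, none of its basic rules can dominate any $r \in R$, so $h$ is not among the removed hypotheses, yielding $h \in \mathcal{H}'$.

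The hard part is not the final appeal to the soundness propositions but the implicit correctness lemmas that justify the characterisation above. Specifically, I would need to verify that each subroutine of \name{} emits only genuinely pointless rules in the sense of Definitions~\ref{def:unsatrule}, \ref{def:breducible}, \ref{def:recall_redundant}, and \ref{def:singleton}. This requires auxiliary lemmas stating that the ASP encoding of Section~\ref{sec:unsat_implication_reducible} deduces \texttt{unsat}/\texttt{implies} atoms exactly when no counter-witness ground substitution exists over the materialised BK; that Algorithm~\ref{alg:recalls} computes the true maximum number of answer substitutions per argument subset and hence soundly labels recall reducible templates; and that the encoding of Section~\ref{sec:singleton_reducible} detects totality exactly when every well-typed instance is entailed by $B$. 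Once these model-correctness statements are in hand, the proof of Proposition~\ref{prop:correctness} reduces to the routine case analysis outlined above.
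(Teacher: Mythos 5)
Your proposal is correct and takes essentially the same route as the paper: the paper's proof is a one-line appeal to Propositions~\ref{prop:sound_unsat2}, \ref{prop:sound_sat3}, \ref{prop:sound_sat4}, and \ref{prop:sound_sat5}, concluding that the pruning mechanisms never remove an optimal hypothesis. Your additional insistence on auxiliary lemmas establishing that the ASP encodings and Algorithm~\ref{alg:recalls} emit only rules genuinely satisfying Definitions~\ref{def:unsatrule}, \ref{def:breducible}, \ref{def:recall_redundant}, and \ref{def:singleton} is more careful than the paper, which leaves that implementation-correctness step implicit.
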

\begin{proof}
Propositions~\ref{prop:sound_unsat2},~\ref{prop:sound_sat3},~\ref{prop:sound_sat4},~\&~\ref{prop:sound_sat5} imply that the pruning implemented by \name{} does not prune optimal hypotheses. 
Thus, if $h \in \mathcal{H}$ then $h \in \mathcal{H'}$.
\end{proof}

\noindent
The time complexity
of \name{} (Algorithm \ref{alg:shrinker}) is dominated by the call to Algorithm \ref{alg:shrinker1} (finding unsatisfiable and implication reducible rules).
For Algorithm~\ref{alg:shrinker1}, let $p$ be number of distinct predicate symbols in $BK$, 
$a$ be the maximum arity of any predicate symbol in $BK$, 
$v = max\_vars$,
and $m = max\_size$.
Then, ignoring the time limit (\emph{timeout}), 
the worst-case complexity of Algorithm \ref{alg:shrinker1} is bounded by $O((pv^{a})^{m})$.
In other words, the complexity grows exponentially in both the maximum arity ($a$) and the maximum template size ($m$).


\subsection{\popper{}}
The output of \name{} (Algorithm \ref{alg:shrinker}) is a set of pointless rules, specifically a set of atoms representing pointless rules.
Any ILP system could use these rules to prune the hypothesis space, such as Aleph’s rule pruning mechanism \cite{aleph}.
We use these rules to shrink the hypothesis space of the constraint-based ILP system \popper{} \cite{popper}, which can learn optimal and recursive hypotheses.
There are many \popper{} variants, including ones that learn from noisy data \cite{maxsynth}, learn higher-order programs \cite{hopper}, and learn from probabilistic data \cite{propper}.
Although \name{} will shrink the hypothesis space of all the variants, we describe the simplest version because it is sufficient to explain how we combine it with \name{}.

\popper{} takes as input BK, training examples, and a maximum hypothesis size and learns hypotheses as definite programs.
\popper{} searches for an optimal hypothesis which entails all the positive examples, no negative ones, and has minimal size.
\popper{} uses a generate, test, combine, and constrain loop to find an optimal hypothesis.
\popper{} starts with an initial ASP encoding $\mathcal{P}$.
The encoding $\mathcal{P}$ can be viewed as a \emph{generator} because each model (answer set) of $\mathcal{P}$ represents a hypothesis.
The encoding $\mathcal{P}$ uses head (\emph{hlit}/3) and body (\emph{blit/3}) literals to represent hypotheses.
The first argument of each literal is the rule id, the second is the predicate symbol, and the third is the literal variables, where \emph{0} represents \emph{A}, \emph{1} represents \emph{B}, etc.
For instance, consider the rule:

\begin{center}
\begin{tabular}{l}
\emph{second(A,B) $\leftarrow$ tail(A,C), head(C,B)} 
\end{tabular}
\end{center} 

\noindent
\popper{} represents this rule as the set:
\begin{center}
\begin{tabular}{l}
\emph{\{hlit(0,second,(0,1)), blit(0,tail,(0,2)), blit(0,head,(2,1))\}}
\end{tabular}
\end{center} 

\noindent
The encoding $\mathcal{P}$ contains choice rules for \emph{head/3} and \emph{body/3} literals:
\begin{lstlisting}[basicstyle=\ttfamily, mathescape=true]
{hlit(Rule,Pred,Vars)}:- rule(Rule), vars(Vars,Arity), head_pred(Pred,Arity).
{blit(Rule,Pred,Vars)}:- rule(Rule), vars(Vars,Arity), body_pred(Pred,Arity).
\end{lstlisting}

\noindent
In these rules, 
\emph{rule(Rule)} denotes rule indices, 
\emph{vars(Vars,Arity)} denotes possible variable tuples, 
and the literals \emph{head\_pred(Pred,Arity)} and \emph{body\_pred(Pred,Arity)} denote predicate symbols and arities that may appear in the head or body of a rule respectively.

In the \emph{generate stage}, \popper{} uses an ASP solver to find a model of $\mathcal{P}$ for a fixed hypothesis size (enforced via a cardinality constraint on the number of head and body literals).
If no model is found, \popper{} increases the hypothesis size and loops again.
If a model exists, \popper{} converts it to a hypothesis $h$.

In the \emph{test stage}, \popper{} uses Prolog to test $h$ on the training examples and BK.
If $h$ entails at least one positive example and no negative examples then \popper{} saves $h$ as a \emph{promising hypothesis}.

In the \emph{combine stage}, \popper{} searches for a combination (a union) of promising hypotheses that entails all the positive examples and has minimal size.
\popper{} formulates the search as a combinatorial optimisation problem, specifically as an ASP optimisation problem \cite{combo}.
If a combination exists, \popper{} saves it as the best hypothesis so far and updates the maximum hypothesis size.

In the \emph{constrain stage}, \popper{} uses $h$ to build constraints.
\popper{} adds these constraints to $\mathcal{P}$ to prune models and thus prune the hypothesis space.
For instance, if $h$ does not entail any positive example, \popper{} adds a constraint to prune its specialisations as they are guaranteed not to entail any positive example.
For instance, the following constraint prunes all specialisations (supersets) of the rule \emph{second(A,B) $\leftarrow$ tail(A,C), head(C,B)}:

\begin{lstlisting}[basicstyle=\ttfamily, xleftmargin=2em,  mathescape=true]
:- hlit(R,second,(0,1)), blit(R,tail,(0,2)), blit(R,head,(2,1)).
\end{lstlisting}

\noindent
\popper{} repeats this loop until it exhausts the models of $\mathcal{P}$ or exceeds a user-defined timeout.
\popper{} then returns the best hypothesis found.

\subsection{\name{} + \popper{}}
We use \name{} to shrink the hypothesis space of \popper{} before \popper{} searches for a hypothesis, i.e. before \popper{} starts its loop.
To do so, we allow \popper{} to reason about the unsatisfiable, implication reducible, recall reducible, and singleton reducible rules found by \name{}.
We provide examples.

\subsubsection{Unsatisfiable Rules}

If \name{} finds an unsatisfiable rule of the form \emph{P(A,B), Q(B,A)} then we add all the \emph{unsat\_ab\_ba(P,Q)} atoms and the following constraint to \popper{}:

\begin{lstlisting}[basicstyle=\ttfamily, xleftmargin=2em,  mathescape=true]
:- unsat_ab_ba(P,Q), blit(Rule,P,(A,B)), blit(Rule,Q,(B,A)).
\end{lstlisting}

\noindent
For instance, if \texttt{unsat\_ab\_ba(succ,succ)} holds, this constraint prunes all models (and thus hypotheses) that contain the literals \emph{blit(Rule,succ,(A,B))} and \emph{blit(Rule,succ,(B,A))}, including all variable substitutions for \emph{A}, \emph{B}, and \emph{Rule}.
For instance, for a learning task with an arbitrary head literal $h$, the constraint prunes the rule:

\begin{center}
\begin{tabular}{l}
\emph{h $\leftarrow$ succ(A,B), succ(B,C), succ(C,B)}
\end{tabular}
\end{center}

\noindent
If \name{} finds an unsatisfiable rule of the form \emph{P(A,B), Q(B,C), R(A,C)} (antitransitivity)  then we add all the \emph{unsat\_ab\_bc\_ac(P,Q,R)} atoms found by \name{} and the following constraint to \popper{}:

\begin{lstlisting}[basicstyle=\ttfamily, xleftmargin=2em,  mathescape=true]
:- unsat_ab_bc_ac(P,Q,R), blit(Rule,P,(A,B)), blit(Rule,Q,(B,C)), blit(Rule,R,(A,C)).
\end{lstlisting}

\subsubsection{Implication Reducible Rules}
If \name{} finds an implication reducible rule of the form \emph{P(A), Q(A)} then we add all the \emph{implies\_a\_a(P,Q)} atoms and the following constraint to \popper{}:
\begin{lstlisting}[basicstyle=\ttfamily, xleftmargin=2em,  mathescape=true]
:- implies_a_a(P,Q), blit(Rule,P,(A,)), blit(Rule,Q,(A,)).
\end{lstlisting}

\noindent
If \name{} finds an implication reducible rule of the form \emph{P(A,B), Q(B,C), R(A,C)} then we add the following constraint and all \emph{implies\_ab\_bc\_ac(P,Q,R)} atoms found by \name{}.
\begin{lstlisting}[basicstyle=\ttfamily, xleftmargin=2em,  mathescape=true]
:- implies_ab_bc_ac(P,Q,R), blit(Rule,P,(A,B)), blit(Rule,Q,(B,C)), blit(Rule,R,(A,C)).
\end{lstlisting}

\subsubsection{Recall Reducible Rules}

We use ASP aggregate constraints to succinctly express recall constraints.
For instance, if \name{} discovers that \emph{succ/2} is functional (where for any ground first argument there is at most 1 ground second argument), it represents this information as the atom \emph{recall\_succ\_a\_b\_1}.
Given this atom, we add this constraint to \popper{}:

\begin{lstlisting}[basicstyle=\ttfamily, xleftmargin=2em,  mathescape=true]
:- blit(Rule,succ,(V0,_)), #count{V1:blit(Rule,succ,(V0,V1))} > 1.
\end{lstlisting}

\noindent
This constraint prunes all models (and thus hypotheses) containing the literals \emph{blit(Rule, succ, (A,B))} and \emph{blit(Rule, succ, (A,C))} where \emph{B!=C}, i.e. this constraint prunes all rules with the body literals \emph{succ(A,B)} and \emph{succ(A,C)}, where \emph{B!=C}.
This constraint applies to all variable substitutions for \emph{A} and \emph{B} and all rules \emph{Rule}.
For instance, for a learning task with an arbitrary head literal $h$, the constraint prunes the rule:

\begin{center}
\begin{tabular}{l}
\emph{h $\leftarrow$ succ(B,C), succ(B,D)}
\end{tabular}
\end{center}

\noindent
If \name{} discovers that \emph{succ/2} is injective (for any ground second argument there is exactly 1 ground first argument), it represents this information as the atom \emph{recall\_succ\_b\_a\_1}.
Given this atom, we add this constraint to \popper{}:

\begin{lstlisting}[basicstyle=\ttfamily, xleftmargin=2em,  mathescape=true]
:- blit(Rule,succ,(_,V1)), #count{V0: blit(Rule,succ,(V0,V1))} > 1.
\end{lstlisting}

\noindent
For instance, the constraint prunes the rule:

\begin{center}
\begin{tabular}{l}
\emph{h $\leftarrow$ succ(A,B), succ(C,B)}
\end{tabular}
\end{center}

\noindent
If \name{} returns the recall atom \emph{recall\_q\_bc\_a\_2},  we add this constraint to \popper{}:

\begin{lstlisting}[basicstyle=\ttfamily, xleftmargin=2em,  mathescape=true]
:- blit(Rule,q,(_,V1,V2)), #count{V0: blit(Rule,q,(V0,V1,V2))} > 2.
\end{lstlisting}

\subsubsection{Singleton Reducible Rules}

To prune singleton reducible rules, we add this ASP rule to the generate encoding of \popper{}:

\begin{lstlisting}[basicstyle=\ttfamily, xleftmargin=2em,  mathescape=true]
singleton(Rule,V):- rule(Rule), var(V),
    #count{P,Vars : blit(Rule,P,Vars), vars(Vars,_), member(V,Vars)} == 1.
\end{lstlisting}

\noindent
In this rule, \emph{var(V)} denotes a variable (\emph{V}) allowed in a rule and \emph{member(V,Vars)} is true when the variable \emph{V} is in the tuple of variables \emph{Vars}.
This ASP rule allows us to identify variables that only appear once in a rule.
We use this rule to build ASP constraints to prune singleton reducible rules.
For instance, for the relation \emph{length/2}, where the first argument is total, \name{} returns the atom \emph{total\_length\_b}.
We therefore add this constraint to \popper{}:
\begin{lstlisting}[basicstyle=\ttfamily, xleftmargin=2em,  mathescape=true]
:- blit(Rule,length,(V0,V1)), singleton(Rule,V1).
\end{lstlisting}

\noindent
For the relation \emph{add/3}, where any pair of variables is total, \name{} returns three atoms \emph{total\_add\_ab}, \emph{total\_add\_ac}, \emph{total\_add\_bc}.
We therefore add three constraints to \popper{}:
\begin{lstlisting}[basicstyle=\ttfamily, xleftmargin=2em,  mathescape=true]
:- blit(Rule,add,(V0,V1,V2)), singleton(Rule,V1), singleton(Rule,V2).
:- blit(Rule,add,(V0,V1,V2)), singleton(Rule,V0), singleton(Rule,V2).
:- blit(Rule,add,(V0,V1,V2)), singleton(Rule,V0), singleton(Rule,V1).
\end{lstlisting}

\section{Experiments}
\label{sec:exp}
The purpose of \name{} is to shrink a hypothesis space without pruning optimal hypotheses.
Proposition \ref{prop:correctness} shows that \name{} does not prune optimal hypotheses.
However, it is unclear in practice what impact \name{} has on learning performance.
Therefore, our experiments\footnote{All code and experimental data are archived at \url{https://doi.org/10.5281/zenodo.19068412}} aim to answer the question:

\begin{description}
\item[Q1] Can \name{} reduce learning times?
\end{description}

\noindent
To answer \textbf{Q1} we compare the learning times of \popper{} with and without \name{}.

\name{} removes unsatisfiable, implication reducible, recall reducible, and singleton reducible rules from the hypothesis space.
To understand the impact of each type of rule, our experiments aim to answer the question:
\begin{description}
\item[Q2] Can removing each type of pointless rule reduce learning times?
\end{description}
To answer \textbf{Q2} we compare the learning times of \popper{} when using \name{} but only removing a single type of pointless rule from the hypothesis space.

\name{} takes a timeout as a parameter. 
To understand the impact of this parameter, our experiments aim to answer the question:
\begin{description}
\item[Q3] Can more \name{} time reduce learning times?
\end{description}

\noindent
To answer \textbf{Q3}, we compare the impact of \name{} when using 10 and 100 second timeouts.

\subsection{Experimental Setup}

\subsubsection{Generalisation Error}
Each learning task contains training and testing examples.
We use the training examples to train the ILP system to learn a hypothesis.
We test a hypothesis on the testing examples to see how they generalise to unseen data.
Given a hypothesis $h$, background knowledge $B$, and a set of test examples $E$, we define the following.
A true positive is a positive example in $E$ entailed by $h \cup B$. 
A true negative is a negative example in $E$ not entailed by $h \cup B$. 
We denote the number of true positives and true negatives as $tp(h)$ and $tn(h)$ respectively. 
We measure generalisation error in terms of \emph{balanced predictive accuracy}.
This measure handles imbalanced data by evaluating the average performance across both positive and negative classes:
\begin{align*}
balancedaccuracy(h) = \frac{1}{2}\left( \frac{tp(h)}{tp(h)+fn(h)}+\frac{tn(h)}{tn(h)+fp(h)} \right)
\end{align*}
Balanced accuracy is equivalent to standard accuracy when the hypothesis performs equally well on both classes or when the data is balanced.

\subsubsection{Settings}
We use \name{} with the settings  \emph{max\_size=3}, \emph{max\_vars=6}, \emph{batch\_size=1000}, and \emph{timeout=10}, except when we use a timeout of 100 seconds to answer \textbf{Q3}.
We allow \popper{} to learn rules with at most 6 variables and at most 10 body literals.
We use an AWS m6a.16xlarge instance to run experiments where each learning task uses a single core.

\subsubsection{Method}
\label{sec:expmethod}
\popper{} is an anytime approach.
We use the best hypothesis found by \popper{} given a maximum learning time of 60 minutes.
Learning time is the time \popper{} needs to learn and prove optimality of a hypothesis and terminate.
We repeat all the experiments 10 times.
We measure mean balanced accuracy and mean learning time.
We round times over one second to the nearest second.
We plot and report 95\% confidence intervals.
We use a Wilcoxon Signed-Rank Test to determine the statistical significance of any differences in the results.
Any subsequent reference to a significance test refers to this test.

\subsubsection{Datasets}
We use several datasets:

\textbf{1D-ARC.} This dataset \cite{onedarc} contains visual reasoning tasks inspired by the abstract reasoning corpus \cite{arc}.

\textbf{Alzheimer.} These real-world tasks 
\cite{DBLP:journals/ngc/KingSS95} involve learning rules describing four properties desirable for drug design against Alzheimer’s disease.

\textbf{IGGP.} In inductive general game playing (IGGP) \cite{iggp}, the task is to induce rules from game traces from the general game-playing competition \cite{ggp}.

\textbf{IMDB.}
We use a real-world dataset which contains relations between movies, actors, and directors \cite{imdb}. 

\textbf{List functions.} The list functions dataset \cite{ruleefficient} is designed to evaluate human and machine concept learning ability. 
The goal of each task is to identify a function that maps input lists to output lists, where list elements are natural numbers. 
We use a relational encoding \cite{decomp}.

\textbf{Trains.}
The goal is to find a hypothesis that distinguishes east and west trains \cite{michalski:trains}.

\textbf{Zendo.}
Zendo is a multiplayer game where players must discover a secret rule by building structures \cite{discopopper}.

\subsection{Experimental Results}

\subsubsection{Q1. Can \name{} Reduce Learning Times?}

Figure \ref{fig:q1_popper} shows the improvements in learning times with \name{}.
A significance test confirms $(p < 0.05)$ that \name{}  decreases learning time on 178/419 (42.5\%) tasks and increases learning time on 18/419 (4.3\%) tasks. There is no significant difference on the other tasks.
The mean decrease in learning time is 20.1 $\pm$ 3.3 minutes, corresponding to a 81.1\% decrease and a 5.3x speedup.
The mean increase in learning time is 1.2 $\pm$ 0.4 seconds, corresponding to a 60.5\% increase and a 1.6x slowdown. 
These are minimum improvements because \popper{} without \name{} often times out after 60 minutes.
With a longer timeout, we would likely see greater improvements.
The full results showing the times for every learning task are in Appendix D.

\begin{figure}[h!]
\centering
\begin{minipage}{0.49\textwidth}
\includegraphics{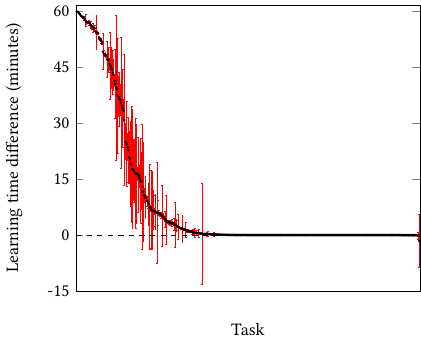}
\end{minipage}
\begin{minipage}{0.49\textwidth}
\includegraphics{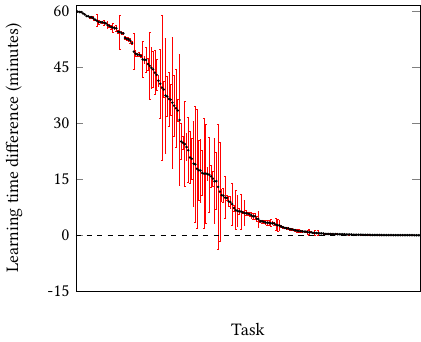}
  \end{minipage}
    \caption{
Learning time improvements when using \name{}.
The left figure shows all tasks and the right figure shows tasks where the two approaches significantly $(p < 0.05)$ differ.
The tasks are ordered by the improvement.
}
    \label{fig:q1_popper}
\end{figure}

\name{} can drastically reduce learning times.
For instance, for the \emph{list-function-043} task, where the task is to learn how to prepend a sequence of numbers to a list, \name{} reduces the learning time from $60\pm0$ minutes to $6 \pm 0$ minutes, a 90.2\% decrease and 10.2x speedup.
For the \emph{iggp-duikoshi-next\_control} task, \name{} reduces the learning time from $60\pm0$ minutes to $2\pm0$ seconds, a 99.9\% reduction and 1800x speedup.

To further explore the potential of \name{} to reduce learning times, we ran a separate experiment where we increased the maximum learning time to 24 hours for a subset of tasks.
With this greater timeout, for the \emph{iggp-duikoshi-next\_control} task, \name{} reduces the learning time from $6.5 \pm 1$ hours to $1 \pm 0$ seconds.
For the \emph{iggp-horseshoe-terminal} task, \name{} reduces the learning time from $10 \pm 0$ hours to $33 \pm 3$ seconds.

To illustrate why \name{} works, consider the \emph{iggp-scissors\_paper\_stone-next\_score} task.
The goal of this task is to learn the rules of the game \emph{rock, paper, scissors} from observations of gameplay. 
For this task, \name{} discovers that the background relation \emph{succ/2} is irreflexive, antitransitive, antitriangular, and asymmetric, which are expressed as unsatisfiable rules.
\name{} also discovers implication reducible rules, such as that the literals \emph{succ(A,B), int\_0(A), int\_1(B)} are implication reducible.
\name{} also discovers that the \emph{succ/2} relation is injective and functional, both expressed as recall reducible rules.
\name{} discovers these given only 10 seconds of preprocessing time, yet the resulting constraints reduce the learning time of \popper{} from $353\pm46$ seconds to $51\pm1$ seconds, an 85.5\% reduction and 6.9x speedup.

\name{} increases the learning time on 17/419 tasks.
The reason is the overhead of reasoning about the constraints discovered by \name{}, i.e. the overhead of Clingo processing the constraints.
For these tasks, the mean increase in learning time is only 1.2 $\pm$ 0.4 seconds.

A significance test confirms $(p < 0.05)$ that \name{} increases predictive accuracy on
15/419 (3.6\%) tasks and decreases accuracy on 24/419 (5.7\%) tasks.
There is no significant difference on the other tasks.
The main reason for any accuracy increase is that without \name{} \popper{} sometimes does not find a good hypothesis in the time limit. 
By contrast, as \name{} prunes the hypothesis space, there are fewer hypotheses for \popper{} to consider, so it sometimes finds a better hypothesis quicker.
Furthermore, as \name{} is optimally sound (Proposition \ref{prop:correctness}), it is guaranteed to lead to a hypothesis space that is a subset of the original one yet still contains an optimal hypothesis.

The main reason for the decrease in accuracy is a \popper{} implementation issue.
For the \emph{iggp-tiger\_vs\_dogs-goal} task, \popper{} without \name{} learns this rule:

\begin{lstlisting}[basicstyle=\ttfamily]   
goal(V0,V1,V2):- score_0(V2), agent_d(V1), pos_3(V5), mark_b(V3), pos_3(V4), 
                   true_cell(V0,V4,V5,V3).
\end{lstlisting}

\noindent
This rule has $58\% \pm 0$ accuracy.
\popper{} with \name{} cannot learn this rule.
The reason is that \name{} discovers that \emph{pos\_3(A)} has a maximum recall of 1, i.e. it is defined for a single value (3).
Therefore, \name{} says that this rule is recall reducible and prunes it from the hypothesis space.
A logically equivalent non-recall reducible rule is:
\begin{lstlisting}[basicstyle=\ttfamily]   
goal(V0,V1,V2):- score_0(V2), agent_d(V1), mark_b(V3), pos_3(V4), true_cell(V0,V4,V4,V3).
\end{lstlisting}
In this rule, the predicate symbol \emph{pos\_3} only appears once but the variable $V4$ appears twice in the literal \emph{true\_cell(V0,V4,V4,V3)}. 
\popper{} cannot learn this rule because it prohibits a variable from appearing twice in a literal (due to legacy reasons).
Therefore, \popper{} without \name{} uses multiple variables to learn a logically equivalent rule.

Overall, the results in this section show that the answer to \textbf{Q1} is yes, 
\name{} can drastically reduce learning times.
\subsubsection{Q2. Can Removing Each Type of Pointless Rule Reduce Learning Times?}

Figure \ref{fig:q2_unsat} shows the learning times when only removing unsatisfiable rules.
Doing so decreases learning time on 38/419 (9.1\%) tasks and increases learning time on 21/419 (5.0\%) tasks. There is no significant difference on the other tasks.
The mean decrease in learning time is 2.0 $\pm$ 1.0 minutes, corresponding to a 27.4\% decrease and a 1.4x speedup.
The mean increase in learning time is 2.8 $\pm$ 3.2 seconds, corresponding to a 46.0\% increase and a 1.5x slowdown. 

Figure \ref{fig:q2_impl} shows the learning times when only removing implication reducible rules.
Doing so decreases learning time on 154/419 (36.8\%) tasks and increases learning time on 10/419 (2.4\%) tasks. There is no significant difference on the other tasks.
The mean decrease in learning time is 19.2 $\pm$ 3.3 minutes, corresponding to a 73.1\% decrease and a 3.7x speedup.
The mean increase in learning time is 9.4 $\pm$ 18.5 seconds, corresponding to a 110.1\% increase and a 2.1x slowdown.

Figure \ref{fig:q2_recall} shows the learning times when only removing recall reducible rules.
Doing so decreases learning time on 70/419 (16.7\%) tasks and increases learning time on 9/419 (2.1\%) tasks. There is no significant difference on the other tasks.
The mean decrease in learning time is 3.5 $\pm$ 1.8 minutes, corresponding to a 38.8\% decrease and a 1.6x speedup.
The mean increase in learning time is 34.1 $\pm$ 68.7 seconds, corresponding to a 104.6\% increase and a 2.0x slowdown.
The reason for the increase in learning time is because, due to the aforementioned \emph{variable appearing twice in a literal} issue, \popper{} with \name{} sometimes needs to learn a longer rule, so it takes more time.

Figure \ref{fig:q2_singletons} shows the learning times when only removing singleton rules.
Doing so decreases learning time on 66/419 (15.8\%) tasks and increases learning time on 11/419 (2.6\%) tasks. There is no significant difference on the other tasks.
The mean decrease in learning time is 2.7 $\pm$ 1.1 minutes, corresponding to a 31.7\% decrease and a 1.5x speedup.
The mean increase in learning time is 1.3 $\pm$ 1.6 minutes, corresponding to a 25.9\% increase and a 1.3x slowdown.

Again, it is important to note these are minimum improvements, as \popper{} without \name{} often times out after 60 minutes.

Overall, the results in this section show that the answer to \textbf{Q2} is yes, removing each type of pointless rule can reduce learning times but removing implication reducible rules has the most impact.

\begin{figure}[h!]
\centering
\begin{minipage}{0.49\textwidth}
\includegraphics{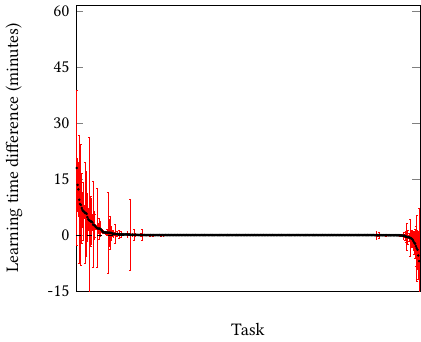}
\end{minipage}
\begin{minipage}{0.49\textwidth}
\includegraphics{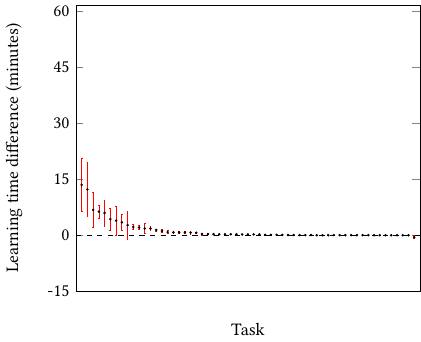}
  \end{minipage}
    \caption{
Learning time improvements when using \name{} with only unsatisfiable rules.
The left figure shows all tasks and the right figure shows tasks where the two approaches significantly $(p < 0.05)$ differ.
The tasks are ordered by the improvement.
}
    \label{fig:q2_unsat}
\end{figure}

\begin{figure}[h!]
\centering
\begin{minipage}{0.49\textwidth}
\includegraphics{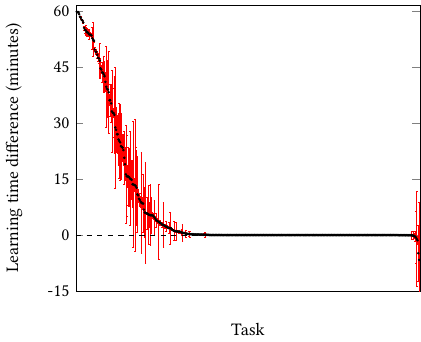}
\end{minipage}
\begin{minipage}{0.49\textwidth}
\includegraphics{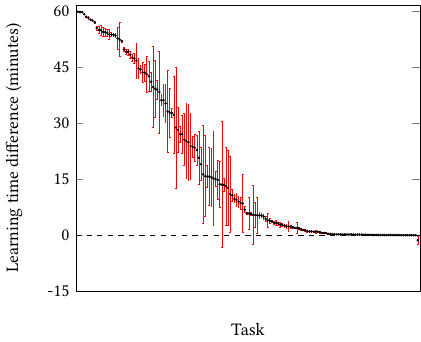}
  \end{minipage}
    \caption{
Learning time improvements when using \name{} with only implication reducible rules.
The left figure shows all tasks and the right figure shows tasks where the two approaches significantly $(p < 0.05)$ differ.
The tasks are ordered by the improvement.
}
    \label{fig:q2_impl}
\end{figure}

\begin{figure}[h!]
\centering
\begin{minipage}{0.49\textwidth}
\includegraphics{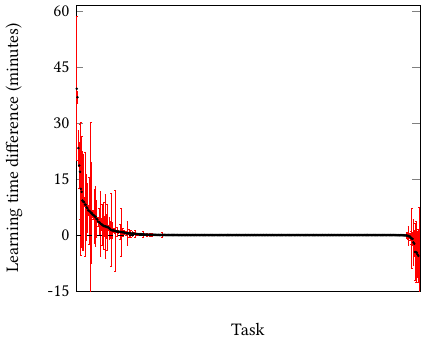}
\end{minipage}
\begin{minipage}{0.49\textwidth}
\includegraphics{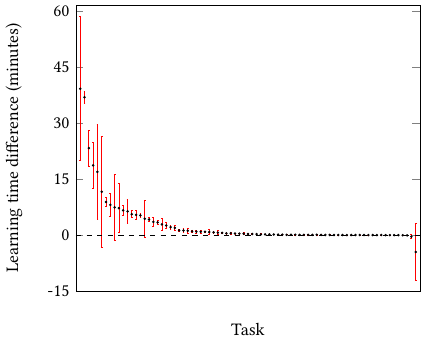}
  \end{minipage}
    \caption{
Learning time improvements when using \name{} with only recall reducible rules.
The left figure shows all tasks and the right figure shows tasks where the two approaches significantly $(p < 0.05)$ differ.
The tasks are ordered by the improvement.
}
    \label{fig:q2_recall}
\end{figure}

\begin{figure}[h!]
\centering
\begin{minipage}{0.49\textwidth}
\includegraphics{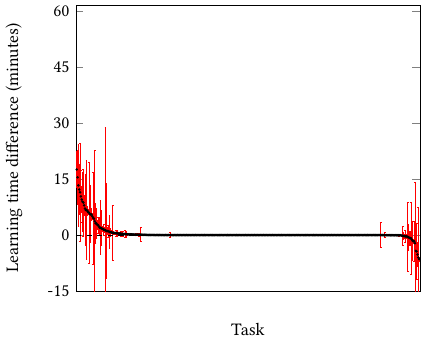}
\end{minipage}
\begin{minipage}{0.49\textwidth}
\includegraphics{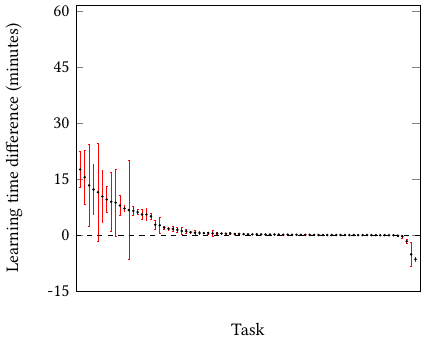}
  \end{minipage}
    \caption{
Learning time improvements when using \name{} with only singleton reducible rules.
The left figure shows all tasks and the right figure shows tasks where the two approaches significantly $(p < 0.05)$ differ.
The tasks are ordered by the improvement.
}
    \label{fig:q2_singletons}
\end{figure}

\subsubsection{Q3. Can More \name{} Time Reduce Learning Times?}

Figure \ref{fig:q3} shows the learning times when using \name{} with 10 and 100 second timeouts.
Significance tests confirm $(p < 0.05)$ that a 100 second timeout decreases learning time on 26/419 (6.2\%) tasks and increases learning time on 27/419 (6.4\%) tasks. There is no significant difference on the other tasks.
The mean decrease in learning time is 1.0 $\pm$ 0.9 minutes, corresponding to a 16.8\% decrease and a 1.2x speedup.
The mean increase in learning time is 4.0 $\pm$ 3.7 seconds, corresponding to a 36.9\% increase and a 1.4x slowdown. 
Overall, the results in this section show that the answer to \textbf{Q3} is yes, more \name{} time can improve performance but not commensurably so, and that 10 seconds is usually sufficient.

\begin{figure}[h!]
\centering
\begin{minipage}{0.49\textwidth}
\includegraphics{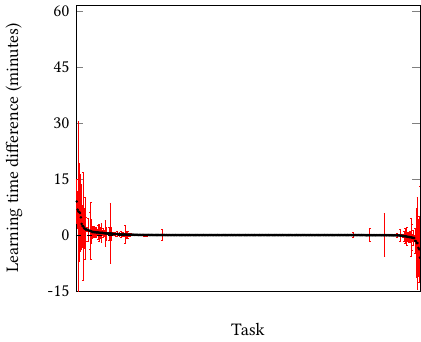}
\end{minipage}
\begin{minipage}{0.49\textwidth}
\includegraphics{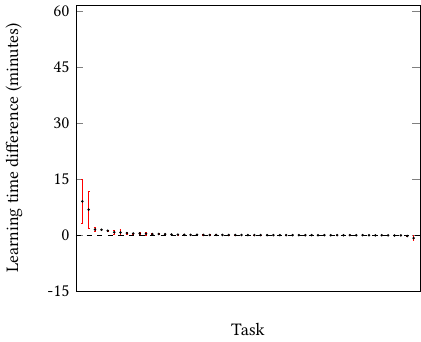}
  \end{minipage}
    \caption{
Learning time improvements when using \name{} with 100 vs 10 second timeout.
The left figure shows all tasks and the right figure shows tasks where the two approaches significantly $(p < 0.05)$ differ.
The tasks are ordered by the improvement.
}
    \label{fig:q3}
\end{figure}

\section{Conclusions and Limitations}

We introduced \name{}, an approach to automatically shrink the hypothesis space of an ILP system by preprocessing the background knowledge to find pointless rules.
\name{} finds four types of pointless rules (unsatisfiable, implication reducible, recall reducible, and singleton reducible).
We show that any hypothesis with a pointless rule is non-optimal so can be soundly removed from the hypothesis space (Propositions \ref{prop:sound_unsat2}, \ref{prop:sound_sat3}, \ref{prop:sound_sat4}, and \ref{prop:sound_sat5}).
Our experiments on multiple domains show that \name{} consistently reduces the learning times of an ILP system.
For instance, given only 10 seconds of preprocessing time, \name{} can reduce learning times from over 10 hours to 2 seconds.

\subsection*{Limitations}

\paragraph{Finite BK} 
Our shrinking idea is sufficiently general to handle definite programs as BK.
However, because our bottom-up implementation uses ASP, we require a finite grounding of the BK.
This restriction means that our implementation cannot handle BK with an infinite grounding, such as when reasoning about continuous values. 
Future work should address this limitation, such as by using top-down methods to discover pointless rules.

\paragraph{Closed-world assumption}
We adopt a closed-world assumption (CWA) to discover pointless rules from given BK.
For instance, we assume that \emph{odd(2)} does not hold if not given as BK.
As almost all ILP systems adopt a CWA, this limitation only applies if our approach is used with a system that does not make the CWA, such as recent rule mining algorithms \cite{rdfrules}.

\paragraph{Monotonic ILP}
Our theoretical results only hold for monotonic ILP, where a hypothesis is a definite program.
How to extend all of our results to non-monotonic ILP \cite{nopi}, such as learning programs with negation-as-failure \cite{naf}, is unclear.
The problem is that some pointless rules are no longer clearly pointless.
For instance, suppose a rule $r$ with the head literal \emph{aux} is unsatisfiable (Def \ref{def:unsatrule}).
Then an optimal hypothesis could still contain the body literal \emph{not aux}, where \emph{not} denotes negation-as-failure.
Extending the results to non-monotonic ILP is future work.


\paragraph{Noisy BK}
We assume that the BK is noiseless, i.e. if a fact is true in the BK then it is meant to be true.
Handling noisy BK is an open challenge \cite{ilpintro} that is beyond the scope of this paper.

\paragraph{Efficiency}
\name{} brute-force builds templates to find unsatisfiable and implication reducible rules.
However, we think that certain rules have a more significant impact on shrinking the hypothesis space.
Future work should, therefore, explore ways of ordering the templates to quickly find the most impactful rules.

\begin{acks}
Andrew Cropper and  Filipe Gouveia were supported by the EPSRC fellowship (\emph{EP/V040340/1}). 
David M. Cerna was supported by the Czech Science Foundation Grant 22-06414L and Cost Action CA20111 EuroProofNet.
\end{acks}


\printbibliography


\appendix
\section{Typing}
\label{sec:app_a}
We assume a simple typing mechanism with the properties  described below. 
\begin{definition}[\textbf{Types}]
Let $B$ be background knowledge. 
Then $types(B)$ is a set of unary predicate symbols defined in $B$ with the following properties: 
For all $ty_1,ty_2\in types(B)$ and constants $c$
\begin{itemize}
    \item  $c$ is of type $ty_1\in types(B)$ if and only if $B\models ty_1(c)$, and
    \item $ty_1(c)$ and $ty_2(c)$ if and only if $ty_1=ty_2$.
\end{itemize}
\end{definition}
\noindent For the rest of the paper we only consider background knowledge $B$ with non-empty $types(B)$.
\begin{definition}[\textbf{Position types}]
Let $B$ be background knowledge, 
$p$ be a predicate symbol with arity $a$, 
and $1\leq i\leq a$. Then $arg\_type_B(p,i)\in types(B)$ denotes the type of the $i^{th}$ argument of $p$ in $B$.
\end{definition}
\noindent For the rest of the paper we only consider predicate symbols whose arguments are typed with respect to the given background knowledge.
\begin{definition}[\textbf{Well-typed literal}]
Let $B$ be background knowledge and $l= p(t_1,\dots,t_a)$ a literal. Then $l$ is \emph{well-typed over $B$} if 
\begin{itemize}
    \item for all $1\leq i\leq a$ where $arg\_type(p,i)=ty$ and $t_i$ is a constant, $B\models ty(t_i)$ and
    \item for all $1\leq i,j\leq a$ where $t_i$ and $t_j$ are variables, $t_i=t_j$ iff  $arg\_type(p,i)=arg\_type(p,j)$.
\end{itemize}
\end{definition}
\noindent Using well-typed literals we can construct well typed-substitutions.
\begin{definition}[\textbf{Well-typed substitution}]
Let $B$ be background knowledge,  $l$ a literal, and $\sigma$ a substitution with domain $S\subseteq var(l)$. Then $\sigma$ is a \emph{well-typed substitution with respect to $l$} if $l\sigma$ is well-typed with respect to $B$. 
\end{definition}
\noindent In addition to assuming that all literals we consider are well-typed, we also assume the following consistency statement, i.e., non-well-typed literals do not entail from the background knowledge.
\begin{statement}[\textbf{Consistency}]
Let $B$ be background knowledge, $p$ a predicate symbol with arity $a$, $c_1,\dots,c_a$ constants, and for some $1\leq i\leq a$, $arg\_type(p,i)=ty$ and  $B\not\models ty(c_i)$. Then $B\not \models p(c_1,\dots ,c_a)$.
\end{statement}
\noindent Essentially, consistency states that what entails from the background knowledge is type consistent.
\begin{definition}[\textbf{Well-typed rule}]
Let $B$ be background knowledge and  $r$ a rule. Then $r$ is \emph{well-typed over $B$} if for all literals $p(t_1,\dots,t_a)$ and $q(s_1,\dots,s_b)$ in $r$, the following holds: for all $1\leq i\leq a$  and $1\leq j\leq b$ where $t_i$ and $s_j$ are variables, $t_i=s_j$ iff  $arg\_type(p,i)=arg\_type(q,j)$.
A hypothesis is well-typed over $B$ if all of its rules are.
\end{definition}
\noindent We assume all rules and hypotheses used in this paper are well-typed. Observe that the typing mechanisms outlined above can be enforced by adding the appropriate type literals to the body of a rule, thus no additional type checking infrastructure is needed. Note, we do not consider such type literals are part of the body and thus they do not count towards the size of a rule.

\section{Recall Reducible}
\label{app:recallsection}
Our proof of Proposition~\ref{prop:specRedundan} requires an alternative formulation of \textit{recall reducible} (Definition~\ref{def:Pigeonholedrule}) and then proving that the alternative formulation is equivalent to recall reducible ((Definition~\ref{def:recall_redundant}). This alternative formulation allows us to convert the substitution implicit in $\theta$-subsumption into equality literals. These equality literals are than used to test logical equivalence. The previously mentioned construction requires comparing argument tuples of a set of literals:

\begin{definition}[\textbf{Arguments}]
\label{def:args}
Let $l = p(X_1,\ldots,X_n) $ be a literal. 
Then $\emph{args}(l) = (X_1,\ldots,X_n)$, i.e. a tuple of variables containing the arguments of $l$ in the order they occur. 
\end{definition}

\noindent Our alternative definition of recall reducible requires identifying  mutually distinct argument tuples. 

\begin{definition}[\textbf{All different}]
Let $S$ be a set of tuples of variables such that for all $t_1,t_2\in S$, $|t_1|=|t_2|$. 
Then $\emph{alldiff}(S)= \{ neq(t_1,t_2)\mid (t_1,t_2\in S) \wedge (t_1\not = t_2)\}$ where for $t_1 = (X_1,\ldots,X_n)$ and  $t_2=(Y_1,\ldots,Y_n)$, we define the binary predicate $neq$ as follows: $neq(t_1,t_2) \equiv (X_1 \not = Y_1\vee \cdots \vee X_n \not = Y_n)$. 
\end{definition}

\noindent We can interpret this type of redundancy as adaptation of the \emph{pigeonhole principle} to rule learning.  The following definition captures this adaptation:

\begin{definition}[\textbf{Pigeonholed rule}]
\label{def:Pigeonholedrule}
Let 
$B$ be background knowledge,
$r$ be a rule,
$b \subseteq body(r)$ where every literal in $b$ has the same predicate symbol, 
$s= \{\emph{args}(l)\mid l\in b\}$,
and $B\not \models r\leftrightarrow (r\cup \{\mathit{alldiff}(s)\})$.
Then $r$ is pigeonholed.
\end{definition}

Essentially, Definition~\ref{def:Pigeonholedrule} states the following: if a rule $r$ has $n+1$ literals with the predicate symbol $p$, but can only instantiate the argument tuples of the literals in $k<n$ ways, then some of the literal occurrences are not necessary. The following examples illustrate this construction:

\begin{example}
Consider the following example and  Definition~\ref{def:Pigeonholedrule}


\begin{align*}
    B =& \{ edge(a,b), edge(b,c), edge(c,a)\}\\
    r =& h \leftarrow edge(X,Y), edge(Y,Z), edge(Z,W), edge(W,E) \\
    b =& \{edge(X,Y), edge(Y,Z), edge(Z,W), edge(W,E)\}\\
    s =& \{(X,Y),(Y,Z),(Z,W),(W,E)\}\\
    \mathit{alldiff}(s) =& \{ neq((X,Y),(Y,Z)), neq((X,Y),(Z,W)),\\ &\ neq((X,Y),(W,E)), neq((Y,Z),(Z,W)),\\ & \ neq((Y,Z),(W,E)), neq((Z,W),(W,E))\}
\end{align*}
Observe that $B\not \models r\leftrightarrow (r\cup \{\mathit{alldiff}(s)\})$ because $B$ defines a 3-cycle.
\end{example}

\begin{example}
Consider the following example and  Definition~\ref{def:Pigeonholedrule}  
\begin{align*}
    B =& \{ p(a,b,c), p(b,a,c), p(c,b,a)\}\\
    r =& h\leftarrow p(A,B,C),p(A,Y,Z).\\
    b =& \{p(A,B,C),p(A,Y,Z)\}\\
    s =& \{(A,B,C),(A,Y,Z)\}\\
    \mathit{alldiff}(s) =& \{ neq((A,B,C),(A,Y,Z))\}
\end{align*}
Observe that $B\not \models r\leftrightarrow (r\cup \{\mathit{alldiff}(s)\})$ because $B$ only contains one instance of $p/3$ per choice of first argument.
\end{example}

As we will show in Theorem~\ref{prop:PigeonholedRedundant}, \textit{recall reducible} (Definition~\ref{def:recall_redundant}) captures  our adaptation of the \emph{pigeonhole principle}.  Consider two rules $r_1$ and $r_2$. By Definition, $r_1$ is \textit{recall reducible} if $r_1\preceq_{\theta} r_2$, $|r_2|<|r_1|$ and the two rules are logically equivalent. Observe that this realises the inequalities introduced in Definition~\ref{def:Pigeonholedrule} through the implicit substitution of $\theta$-subsumption.

As mentioned above, recall reducible and Pigeonholed are equivalent concepts. While pigeonholed formalises our adaptation of the \emph{pigeonhole principle} through the introduction of inequalities,  recall reducible uses the implicit substitution of $\theta$-subsumption.  The proof of Theorem~\ref{prop:PigeonholedRedundant} provides the constructions allowing one to switch between the two concepts.
\begin{theorem}\label{prop:PigeonholedRedundant}
Let  $B$ be background knowledge  and  $r$ a rule. Then $r$ is recall reducible iff $r$ is pigeonholed.
\end{theorem}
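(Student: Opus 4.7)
The plan is to prove both directions of the biconditional by constructing explicit translations between the $\theta$-collapsing witness of recall reducibility and the $\mathit{alldiff}$-violating witness of the pigeonholed property.

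For the forward direction ($\Rightarrow$), I would start with a recall reducible rule $r$ witnessed by $\theta$ and $r_2 = r\theta$ with $|r_2|<|r|$ and $B\models r \leftrightarrow r_2$. Because $\theta$ strictly decreases the literal count, there exist distinct literals $l_1,l_2 \in body(r)$ with $l_1\theta = l_2\theta$; necessarily these share a predicate symbol $p$. I would then take $b$ to be the set of all $p$-literals in $body(r)$ and $s = \{\mathit{args}(l) : l\in b\}$. To show $B\not\models r \leftrightarrow (r\cup \mathit{alldiff}(s))$, I would extend $\theta$ to a ground substitution via witnesses drawn from $B$ so as to obtain a concrete instance on which $body(r)$ holds in the minimal model of $B$ while $\mathit{args}(l_1)\theta$ and $\mathit{args}(l_2)\theta$ coincide, violating $\mathit{alldiff}(s)$. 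The logical equivalence $r\leftrightarrow r_2$ then lets me derive the head via $r$ on that instance while a canonical Herbrand model of $B \cup \{r\cup \mathit{alldiff}(s)\}$ omits it, yielding the required inequivalence.

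For the backward direction ($\Leftarrow$), I would start from a pigeonholed rule $r$ with witness $b,s$. Since $r \to (r\cup \mathit{alldiff}(s))$ holds automatically, the failure of $B\models r\leftrightarrow (r\cup \mathit{alldiff}(s))$ yields a model $M$ of $B$ and a ground substitution $\sigma$ with $M\models body(r)\sigma$ and $\mathit{args}(l_i)\sigma = \mathit{args}(l_j)\sigma$ for some distinct $l_i,l_j\in b$. From $\sigma$ I would define $\theta$ on $\mathit{vars}(r)$ by identifying, positionally, the variables of $l_i$ with those of $l_j$ (picking canonical representatives and extending by identity), so that $l_i\theta = l_j\theta$ and hence $|r\theta|<|r|$. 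Setting $r_2 = r\theta$, the implication $r\to r_2$ is immediate since $r_2$ is a $\theta$-instance of $r$; for $r_2\to r$ modulo $B$ I would appeal to the pigeonhole principle — the finite recall of $p$ in $B$, combined with the alldiff failure, forces every firing of $body(r)$ to respect the identifications encoded by $\theta$, so $r$ and $r_2$ entail the same ground atoms given $B$.

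The main obstacle I anticipate is the $r_2\to r$ part of the backward direction: pigeonholed only asserts the failure of $\mathit{alldiff}$ on some firing, so a $\theta$ built from a single coincidence may not capture every firing of $r$. Making the argument tight likely requires iterating the construction to absorb all forced coincidences, or invoking a counting argument on the recall of $p$ in $B$ to conclude that every firing of $body(r)$ must collapse to the pattern encoded by $r_2$. Maintaining well-typedness of $\theta$ under the typing discipline of Appendix~A, and handling variables shared with $head(r)$, will require additional bookkeeping throughout.
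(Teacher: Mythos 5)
Your proposal follows essentially the same route as the paper's proof. The forward direction is the paper's argument almost verbatim: from $|r\theta|<|r|$ extract two distinct literals $l_1,l_2$ with $l_1\theta=l_2\theta$, let $b$ be the $p$-literals, and use the collapse to refute $B\models r\leftrightarrow (r\cup\mathit{alldiff}(s))$; your extra step of grounding $\theta$ against $B$ to exhibit the witnessing firing is a reasonable way to make the paper's one-line implication explicit. For the backward direction, the paper simply asserts that the failure of $B\models r\leftrightarrow (r\cup\{\mathit{alldiff}(s)\})$ yields a subset $c\subset b$ and a mapping $f:c\rightarrow (c\setminus b)$ with $B\models r\leftrightarrow (r\cup\{\mathit{args}(p)=\mathit{args}(f(p))\mid p\in c\})$, and then reads $\theta$ off positionally -- which is exactly the step you isolate as the main obstacle. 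The paper does not supply the iteration or counting argument you anticipate needing; it moves directly from ``some firing violates $\mathit{alldiff}$'' to ``the equality constraints hold on every firing.''

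Your worry about that step is substantive, not just bookkeeping. Pigeonholedness only guarantees the existence of \emph{one} firing of $body(r)$ on which some pair of argument tuples coincide, whereas recall reducibility requires $B\models r\theta\rightarrow r$ (the direction $r\rightarrow r\theta$ being free from $\theta$-subsumption), i.e.\ that \emph{every} firing respects the identifications encoded by $\theta$. When the collapsed literals carry head variables this can fail outright: with $B=\{p(a,b),\,p(c,d)\}$ and $r = h(X,U)\leftarrow p(X,Y),p(U,V)$, the firing that instantiates both body literals to $p(a,b)$ violates $\mathit{alldiff}$, but the only literal-reducing collapse forces $X=U$ and the resulting rule derives $h(a,a)$ and $h(c,c)$ while $r$ also derives $h(a,c)$. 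So your plan reproduces the paper's construction but honestly leaves open the one step the paper glosses over; closing it requires an additional argument (or hypothesis) forcing the coincidence to be uniform across all firings, which neither your sketch nor the paper's proof currently provides.
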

\begin{proof}
\mbox{}\\
\noindent$\underline{\Longrightarrow:}$ By Definition~\ref{def:recall_redundant}, there exists $\theta$ such that $r\theta = r_2$. Furthermore, given that $|r|>|r_2|$, there must exists literals $l_1,l_2\in body(r_1)$ such that $l_1\not = l_2$ and $l_1\theta = l_2\theta$. Let $b$ be the subset of $body(r)$ containing all literals with the same predicate symbol as  $l_1$ and $l_2$. Observe that 
$$B\models (r\leftrightarrow r_2) \rightarrow \neg (r\leftrightarrow r\cup \mathit{alldiff}(\{\emph{args}(l)\mid l \in b\})),$$ holds because  the construction of $r_2$ unifies at least two literals in $b$. Thus, we derive that $B\models \neg (r\leftrightarrow r\cup \mathit{alldiff}(\{\emph{args}(l)\mid l \in b\})),$ i.e. $B\not \models r\leftrightarrow r\cup \mathit{alldiff}(\{\emph{args}(l)\mid l \in b\}).$ Thus, $r$ is pigeonholed.
\vspace{.5em}

\noindent$\underline{\Longleftarrow:}$ By Definition~\ref{def:Pigeonholedrule}, there exists $b\subseteq body(r)$ such that the literals in $b$ have the same predicate symbol and $B\not \models r\leftrightarrow (r\cup \{\mathit{alldiff}(s)\})$, where  $s= \{\emph{args}(l)\mid l\in b\}$. This implies that there exists $c\subset b$ and a mapping $f:c\rightarrow (c\setminus b)$  such that 
$$B \models r\leftrightarrow (r\cup \{\emph{args}(p)=\emph{args}(f(p))\mid p\in c\})$$
Now consider some $p\in c$. From $\emph{args}(p) =(X_1,\cdots, X_n)$ and $\emph{args}(f(p)) = (Y_1,\cdots, Y_n)$,  we construct a substitution $\theta = \{X_i\mapsto Y_i\mid 1\leq i\leq n\}$. Let $r_2 = r\theta$. Observe that $r\preceq_{\theta} r_2$, $|r|>|r_2|$, and $B\models r\leftrightarrow r_2$. Thus, r is recall reducible. 
\end{proof}
Given a recall reducible rule, specialisations of the rule are not necessarily recall reducible. However, rules containing the recall reducible rule as a subrule are recall reducible. This observation is not a simple corollary of the definition of recall reducible as proving it requires invoking Theorem~\ref{prop:PigeonholedRedundant} and performing most of the construction using pigeonholed rules. The full argument is presented below:

\renewcommand*{\theproposition}{5}
\begin{proposition}[\textbf{Recall specialisation}]
Let $B$ be BK, 
$r_1$ be \emph{recall reducible}, 
and $r_1\subseteq r_2$. 
Then $r_2$ is \emph{recall reducible}.
\end{proposition}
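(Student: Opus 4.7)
The plan is to lift the problem to the equivalent pigeonholed formulation given by Theorem~\ref{prop:PigeonholedRedundant} and transfer the property across the sub-rule relation $r_1 \subseteq r_2$. Since $r_1$ is recall reducible, the construction in the $\Longleftarrow$ direction of that theorem supplies a set $b \subseteq body(r_1)$ whose literals share a predicate symbol, together with a substitution $\theta$ such that $r_1\theta$ collapses at least two literals in $b$, has strictly fewer literals than $r_1$, and is logically equivalent to $r_1$ under $B$. Crucially, $body(r_1) \subseteq body(r_2)$ places the same $b$ inside $body(r_2)$.

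The natural candidate witness that $r_2$ is recall reducible in the sense of Definition~\ref{def:recall_redundant} is $r_2\theta$. The subsumption $r_2 \preceq_\theta r_2\theta$ is immediate from the definition. The size inequality $|r_2\theta| < |r_2|$ follows because the collapse of literals in $b$ forced by $\theta$ already happens inside $body(r_2)$; extra literals in $body(r_2) \setminus body(r_1)$ can only introduce additional collapses, never undo them.

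The remaining obligation is $B \models r_2 \leftrightarrow r_2\theta$. The $\rightarrow$ direction is universal instantiation and holds without any appeal to $B$. For the $\leftarrow$ direction, I intend to reuse the pigeonhole structure inherited from $r_1$: because $body(r_1) \subseteq body(r_2)$, every grounding of $body(r_2)$ in $B$ is also a grounding of $body(r_1)$, so the pigeonhole-induced equalities (the ones encoded by $\theta$) must still hold in it. Consequently each such grounding factors through $\theta$, yielding a corresponding grounding of $body(r_2\theta)$ that produces the same head instance. In the degenerate case where $body(r_2)$ is unsatisfiable in $B$, both $r_2$ and $r_2\theta$ derive no head facts and the equivalence is vacuous, so only the size decrease needs verification.

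The main obstacle I anticipate is rigorously justifying the factor-through-$\theta$ step for $r_2$, since $B \models r_1 \leftrightarrow r_1\theta$ on its face only equates the sets of derivable head atoms rather than forcing every grounding of $body(r_1)$ to satisfy the $\theta$-equalities. Overcoming this requires extracting a stronger pointwise statement from the $\Longleftarrow$ direction of Theorem~\ref{prop:PigeonholedRedundant}, namely that the equalities $\emph{args}(p) = \emph{args}(f(p))$ are entailed by $B$ together with $body(r_1)$, and then invoking monotonicity of Datalog entailment to transfer this entailment to the richer body $body(r_2)$. Once this pointwise entailment is secured, the factorisation argument closes the proof and establishes that $r_2$ is recall reducible.
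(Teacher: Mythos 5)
Your proposal is correct and follows essentially the same route as the paper's proof in Appendix~B: both pass through the pigeonholed formulation of Theorem~\ref{prop:PigeonholedRedundant}, extract the equality literals (the paper's $d_p$, your pointwise entailment $B \cup body(r_1) \models \emph{args}(p) = \emph{args}(f(p))$), transfer them to the larger body by monotonicity, and then convert the equalities into the substitution $\theta$ witnessing that $r_2\theta$ is the smaller equivalent rule. The obstacle you flag — that head-atom equivalence alone is too weak and the equality-literal form must be carried through — is precisely the step the paper handles via $B \models (r_1 \leftrightarrow (r_1 \cup d_p)) \rightarrow ((r_1\cup e) \leftrightarrow (r_1 \cup d_p \cup e))$.
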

\begin{proof} 
By Theorem~\ref{prop:PigeonholedRedundant}, we know $r_1$ is also pigeonholed, thus 
there exists $b\subseteq body(r_1)$ such that all literals in $b$ have the same symbol and  $B\not \models r_1\leftrightarrow (r_1\cup \{\mathit{alldiff}(s)\})$, where $s= \{\emph{args}(l)\mid l\in b\}$. As in the proof of Theorem~\ref{prop:PigeonholedRedundant}, there exists $c\subset b$ and a mapping $f:c\rightarrow (c\setminus b)$  such that $B \models r_1\leftrightarrow (r_1\cup d_p),$
where
$$d_p=\{X_i=Y_i \mid  \emph{args}(p)= (X_1,\cdots, X_n)\ \wedge \ \emph{args}(f(p))= (Y_1,\cdots, Y_n)\ \wedge \ 1\leq i\leq n \}.$$ Furthermore, $B \models  (r_1\leftrightarrow (r_1\cup d_p))\rightarrow ((r_1\cup e)\leftrightarrow (r_1\cup d_p\cup e)),$
where $r_2=r_1\cup e$. Observe that $d_p$ can be replaced by a substitution $\theta =
 \{X \mapsto Y\mid X=Y\in d_p\}.$
Doing so results in the following: $B \models  (r_1\leftrightarrow r_1\theta)\rightarrow (r_2\leftrightarrow r_2\theta)$. Observe that $r_2\preceq r_2\theta$, $|r_2|> |r_2\theta|$, and given that $B \models  (r_1\leftrightarrow r_1\theta)$, we can derive that $B \models  (r_2\leftrightarrow r_2\theta)$. Thus, $r_2$ is recall reducible.
\end{proof}

\section{Implication vs Recall Reducible}
\label{sec:appendix_imp_recall_diff}

\noindent
The following example illustrates how \emph{implication} and \emph{recall} reducible rules differ.

\begin{example}
\label{ex:ImpRecExample}
We use the following BK with standard semantics:
\begin{center}
\begin{tabular}{l}
\emph{B = \{nat/1, odd/1, prime/1, succ/2, leq/2 \}}
\end{tabular}
\end{center}

\noindent
We consider these rules: 
\begin{center}
\begin{tabular}{l}
\emph{$r_1$ = h $\leftarrow$  leq(B,C), succ(A,B), succ(A,C)}\\
\emph{$r_2$ = h $\leftarrow$ nat(A), succ(A,B)}\\
\emph{$r_3$ = h $\leftarrow$ succ(A,B), succ(A,C), odd(B), prime(C)}\\
\end{tabular}
\end{center}

\noindent
We illustrate the relationships exemplified in the follow diagram

\begin{figure}[h!]
\centering
\includegraphics{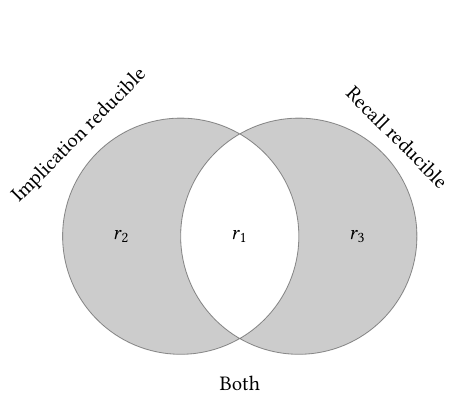}
\caption{The relationship between implication and recall reducible with respect to the rules introduced in Example~\ref{ex:ImpRecExample}}
\label{fig:vennImpRec}
 \end{figure}

\begin{itemize}
    \item[$\mathbf{r_1:}$]  Observe that for $r_4 = r_1\{C\mapsto B\}$  and $r_5= r_1\setminus \{leq(B,C)\}$
 \begin{align*}
    r_4 : \vspace{2em} h \mbox{ :- }& leq(B,B), succ(A,B).\\
    r_5 : \vspace{2em} h \mbox{ :- }& succ(A,B), succ(A,C).
    \end{align*}
 
  both $B\models r_1\leftrightarrow r_4$ and $B\models r_1\leftrightarrow r_5$ hold.  Furthermore $r_1 \preceq_{\theta} r_4$ and $|r_1|> |r_4|$ implying that $r_1$ is \emph{recall reducible} (Definition~\ref{def:recall_redundant}). For $r_5$, observe that $leq(B,C)$ is $r_1$-captured (Definition~\ref{def:capturedLiteral}), thus implying that $r$ is \emph{Implication reducible} (Definition~\ref{def:breducible}) as presented in Figure~\ref{fig:vennImpRec}. 

  \item [$\mathbf{r_2:}$] Observe that there does not exists a substitution $\theta$ such that $B\models r_2\leftrightarrow r_2\theta$ and $|r_2|> |r_2\theta|$. However, $r_6 = r_2\setminus \{nat(A)\}$ results in 
   \begin{align*}
    r_6 : \vspace{2em} h \mbox{ :- }&  succ(A,B).
   \end{align*}
    Observe that  $B\models r_2\leftrightarrow r_6$ as successor only applies to natural numbers. Thus we can deduce that $r_2$ is  \emph{Implication Reducible} (Definition~\ref{def:breducible}), but not  \emph{recall reducible} (Definition~\ref{def:recall_redundant}) as presented in Figure~\ref{fig:vennImpRec}.
\item [$\mathbf{r_3:}$] Observe that no matter which literal we remove from $r_3$ the meaning of the rule changes. For example, 
\begin{itemize}
    \item if we remove $succ(A,C)$, then $B$ need not be prime, 
    \item if we remove $succ(A,B)$, then $C$ need not be odd, i.e. 2 is prime,
    \item if we remove $odd(B)$, then $B$ and $C$ need not be odd, i.e. 2 is prime, and
    \item  if we remove $prime(C)$, then $B$ and $C$ may be divisible by 3.
\end{itemize}
Thus, $r_3$ cannot be \emph{Implication Reducible} (Definition~\ref{def:breducible}). However, 
 applying $\{B\mapsto C\}$ to $r_3$ results in the following rule:
    \begin{align*}
    r_7 : \vspace{2em} h \mbox{ :- }&  succ(A,C), odd(C), prime(C).
   \end{align*}
   Observe that  $|r_3|>|r_7|$ and $B\models r_3\leftrightarrow r_7$, thus implying that $r_3$ is \emph{recall reducible} (Definition~\ref{def:recall_redundant}) as presented in Figure~\ref{fig:vennImpRec}.
\end{itemize}

\end{example}

\section{Full results \popper{} vs \name{} results.}
\label{sec:app_full}

Table \ref{tab:full-results} shows the full experimental results using the method described in Section \ref{sec:expmethod}.
The \popper{} column denotes the learning times of \popper{} without \name{}.
The \name{} column denotes the learning times of \popper{} with \name{}.
The saving column shows the saving in learning time.

\begin{longtable}{llcccc}
    \caption{Full \popper{} vs \name{} learning times.}
    \label{tab:full-results}\\
\textbf{Domain} & \textbf{Task} & \textbf{\popper{}} & \textbf{\name{}} & \textbf{Saving} & \textbf{Speedup} \\
\hline
\endfirsthead

\textbf{Domain} & \textbf{Task} & \textbf{\popper{}} & \textbf{\name{}} & \textbf{Saving} & \textbf{Speedup} \\
\hline
\endhead
1d & 1d\_denoising\_1c & 1252 $\pm$ 332 & 253 $\pm$ 100 & 999 $\pm$ 366 & 4.95$\times$ \\
1d & 1d\_denoising\_mc & 1 $\pm$ 0 & 1 $\pm$ 0 & 0 $\pm$ 0 & 1.00$\times$ \\
1d & 1d\_fill & 18 $\pm$ 17 & 4 $\pm$ 3 & 14 $\pm$ 18 & 4.50$\times$ \\
1d & 1d\_flip & 3600 $\pm$ 0 & 2136 $\pm$ 681 & 1464 $\pm$ 681 & 1.69$\times$ \\
1d & 1d\_hollow & 3600 $\pm$ 0 & 1353 $\pm$ 926 & 2246 $\pm$ 926 & 2.66$\times$ \\
1d & 1d\_mirror & 928 $\pm$ 1000 & 153 $\pm$ 144 & 775 $\pm$ 1003 & 6.07$\times$ \\
1d & 1d\_move\_1p & 1 $\pm$ 0 & 1 $\pm$ 0 & 0 $\pm$ 0 & 1.00$\times$ \\
1d & 1d\_move\_2p & 1 $\pm$ 0 & 1 $\pm$ 0 & 0 $\pm$ 0 & 1.00$\times$ \\
1d & 1d\_move\_2p\_dp & 3350 $\pm$ 377 & 659 $\pm$ 303 & 2690 $\pm$ 334 & 5.08$\times$ \\
1d & 1d\_move\_3p & 1 $\pm$ 0 & 1 $\pm$ 0 & 0 $\pm$ 0 & 1.00$\times$ \\
1d & 1d\_move\_dp & 3399 $\pm$ 454 & 2998 $\pm$ 919 & 400 $\pm$ 631 & 1.13$\times$ \\
1d & 1d\_padded\_fill & 3600 $\pm$ 0 & 3600 $\pm$ 0 & 0 $\pm$ 0 & 1.00$\times$ \\
1d & 1d\_pcopy\_1c & 3600 $\pm$ 0 & 2732 $\pm$ 683 & 867 $\pm$ 683 & 1.32$\times$ \\
1d & 1d\_pcopy\_mc & 3549 $\pm$ 114 & 728 $\pm$ 189 & 2820 $\pm$ 196 & 4.88$\times$ \\
1d & 1d\_recolor\_cmp & 3600 $\pm$ 0 & 3600 $\pm$ 0 & 0 $\pm$ 0 & 1.00$\times$ \\
1d & 1d\_recolor\_cnt & 3600 $\pm$ 0 & 3600 $\pm$ 0 & 0 $\pm$ 0 & 1.00$\times$ \\
1d & 1d\_recolor\_oe & 3241 $\pm$ 812 & 3240 $\pm$ 812 & 0 $\pm$ 0 & 1.00$\times$ \\
1d & 1d\_scale\_dp & 212 $\pm$ 112 & 57 $\pm$ 28 & 155 $\pm$ 84 & 3.72$\times$ \\
alzheimer & acetyl & 3600 $\pm$ 0 & 1467 $\pm$ 1053 & 2132 $\pm$ 1053 & 2.45$\times$ \\
alzheimer & amine & 523 $\pm$ 353 & 618 $\pm$ 451 & -95 $\pm$ 430 & 0.85$\times$ \\
alzheimer & mem & 979 $\pm$ 313 & 375 $\pm$ 399 & 603 $\pm$ 340 & 2.61$\times$ \\
alzheimer & toxic & 1089 $\pm$ 762 & 1067 $\pm$ 784 & 22 $\pm$ 807 & 1.02$\times$ \\
iggp & alquerque-goal & 1 $\pm$ 0 & 1 $\pm$ 0 & 0 $\pm$ 0 & 1.00$\times$ \\
iggp & alquerque-legal\_move & 3600 $\pm$ 0 & 3600 $\pm$ 0 & 0 $\pm$ 0 & 1.00$\times$ \\
iggp & alquerque-next\_cell & 3600 $\pm$ 0 & 3600 $\pm$ 0 & 0 $\pm$ 0 & 1.00$\times$ \\
iggp & alquerque-next\_control & 2904 $\pm$ 287 & 76 $\pm$ 3 & 2827 $\pm$ 284 & 38.21$\times$ \\
iggp & alquerque-next\_score & 1088 $\pm$ 112 & 104 $\pm$ 4 & 984 $\pm$ 108 & 10.46$\times$ \\
iggp & alquerque-next\_step & 1 $\pm$ 0 & 2 $\pm$ 0 & 0 $\pm$ 0 & 0.50$\times$ \\
iggp & alquerque-terminal & 34 $\pm$ 2 & 6 $\pm$ 0 & 28 $\pm$ 2 & 5.67$\times$ \\
iggp & asylum-goal & 3 $\pm$ 0 & 4 $\pm$ 1 & -1 $\pm$ 1 & 0.75$\times$ \\
iggp & asylum-legal\_place & 3600 $\pm$ 0 & 2114 $\pm$ 105 & 1485 $\pm$ 105 & 1.70$\times$ \\
iggp & asylum-next\_color & 3600 $\pm$ 0 & 3600 $\pm$ 0 & 0 $\pm$ 0 & 1.00$\times$ \\
iggp & asylum-next\_control & 214 $\pm$ 25 & 26 $\pm$ 1 & 187 $\pm$ 25 & 8.23$\times$ \\
iggp & asylum-next\_location & 3600 $\pm$ 0 & 3600 $\pm$ 0 & 0 $\pm$ 0 & 1.00$\times$ \\
iggp & asylum-next\_step & 3 $\pm$ 0 & 3 $\pm$ 0 & 0 $\pm$ 0 & 1.00$\times$ \\
iggp & asylum-next\_strength & 3600 $\pm$ 0 & 1256 $\pm$ 135 & 2343 $\pm$ 135 & 2.87$\times$ \\
iggp & asylum-terminal & 1 $\pm$ 0 & 2 $\pm$ 0 & 0 $\pm$ 0 & 0.50$\times$ \\
iggp & battle\_of\_numbers-goal & 3600 $\pm$ 0 & 272 $\pm$ 43 & 3327 $\pm$ 43 & 13.24$\times$ \\
iggp & battle\_of\_numbers-legal\_move & 3600 $\pm$ 0 & 3600 $\pm$ 0 & 0 $\pm$ 0 & 1.00$\times$ \\
iggp & battle\_of\_numbers-next\_capture & 3600 $\pm$ 0 & 3600 $\pm$ 0 & 0 $\pm$ 0 & 1.00$\times$ \\
iggp & battle\_of\_numbers-next\_cell & 3600 $\pm$ 0 & 3600 $\pm$ 0 & 0 $\pm$ 0 & 1.00$\times$ \\
iggp & battle\_of\_numbers-next\_control & 3600 $\pm$ 0 & 3600 $\pm$ 0 & 0 $\pm$ 0 & 1.00$\times$ \\
iggp & battle\_of\_numbers-next\_step & 1 $\pm$ 0 & 2 $\pm$ 0 & 0 $\pm$ 0 & 0.50$\times$ \\
iggp & battle\_of\_numbers-terminal & 3600 $\pm$ 0 & 3600 $\pm$ 0 & 0 $\pm$ 0 & 1.00$\times$ \\
iggp & breakthrough-goal & 3600 $\pm$ 0 & 3600 $\pm$ 0 & 0 $\pm$ 0 & 1.00$\times$ \\
iggp & breakthrough-next\_cell & 3600 $\pm$ 0 & 3600 $\pm$ 0 & 0 $\pm$ 0 & 1.00$\times$ \\
iggp & breakthrough-next\_control & 3600 $\pm$ 0 & 986 $\pm$ 352 & 2613 $\pm$ 352 & 3.65$\times$ \\
iggp & breakthrough-terminal & 3600 $\pm$ 0 & 3600 $\pm$ 0 & 0 $\pm$ 0 & 1.00$\times$ \\
iggp & buttons\_and\_lights-goal & 1 $\pm$ 0 & 1 $\pm$ 0 & 0 $\pm$ 0 & 1.00$\times$ \\
iggp & buttons\_and\_lights-legal & 1 $\pm$ 0 & 1 $\pm$ 0 & 0 $\pm$ 0 & 1.00$\times$ \\
iggp & buttons\_and\_lights-terminal & 1 $\pm$ 0 & 1 $\pm$ 0 & 0 $\pm$ 0 & 1.00$\times$ \\
iggp & centipede-goal & 16 $\pm$ 1 & 1 $\pm$ 0 & 15 $\pm$ 1 & 16.00$\times$ \\
iggp & centipede-legal & 129 $\pm$ 16 & 3 $\pm$ 0 & 125 $\pm$ 16 & 43.00$\times$ \\
iggp & centipede-next\_at & 3600 $\pm$ 0 & 450 $\pm$ 26 & 3149 $\pm$ 26 & 8.00$\times$ \\
iggp & centipede-next\_dir & 557 $\pm$ 86 & 12 $\pm$ 1 & 544 $\pm$ 85 & 46.42$\times$ \\
iggp & centipede-terminal & 3600 $\pm$ 0 & 257 $\pm$ 72 & 3342 $\pm$ 72 & 14.01$\times$ \\
iggp & checkers-goal & 3 $\pm$ 0 & 5 $\pm$ 0 & -1 $\pm$ 0 & 0.60$\times$ \\
iggp & checkers-next\_capturecount & 3600 $\pm$ 0 & 3356 $\pm$ 400 & 244 $\pm$ 400 & 1.07$\times$ \\
iggp & checkers-next\_control & 3600 $\pm$ 0 & 845 $\pm$ 97 & 2754 $\pm$ 97 & 4.26$\times$ \\
iggp & checkers-next\_location & 3600 $\pm$ 0 & 3600 $\pm$ 0 & 0 $\pm$ 0 & 1.00$\times$ \\
iggp & checkers-next\_step & 3 $\pm$ 0 & 3 $\pm$ 0 & 0 $\pm$ 0 & 1.00$\times$ \\
iggp & checkers-terminal & 1749 $\pm$ 521 & 383 $\pm$ 57 & 1365 $\pm$ 522 & 4.57$\times$ \\
iggp & coins-goal & 1 $\pm$ 0 & 1 $\pm$ 0 & 0 $\pm$ 0 & 1.00$\times$ \\
iggp & coins-legal\_jump & 6 $\pm$ 0 & 3 $\pm$ 0 & 2 $\pm$ 0 & 2.00$\times$ \\
iggp & coins-next\_cell & 57 $\pm$ 4 & 24 $\pm$ 0 & 32 $\pm$ 3 & 2.38$\times$ \\
iggp & coins-next\_step & 5 $\pm$ 0 & 1 $\pm$ 0 & 4 $\pm$ 0 & 5.00$\times$ \\
iggp & coins-terminal & 20 $\pm$ 3 & 2 $\pm$ 0 & 17 $\pm$ 3 & 10.00$\times$ \\
iggp & connect4team-goal & 3600 $\pm$ 0 & 133 $\pm$ 6 & 3466 $\pm$ 6 & 27.07$\times$ \\
iggp & connect4team-legal\_drop & 3600 $\pm$ 0 & 3600 $\pm$ 0 & 0 $\pm$ 0 & 1.00$\times$ \\
iggp & connect4team-next\_cell & 3600 $\pm$ 0 & 3600 $\pm$ 0 & 0 $\pm$ 0 & 1.00$\times$ \\
iggp & connect4team-next\_control & 1435 $\pm$ 220 & 19 $\pm$ 1 & 1416 $\pm$ 221 & 75.53$\times$ \\
iggp & connect4team-terminal & 3600 $\pm$ 0 & 3600 $\pm$ 0 & 0 $\pm$ 0 & 1.00$\times$ \\
iggp & dont\_touch-goal & 273 $\pm$ 37 & 9 $\pm$ 0 & 264 $\pm$ 36 & 30.33$\times$ \\
iggp & dont\_touch-legal\_mark & 15 $\pm$ 0 & 8 $\pm$ 0 & 7 $\pm$ 1 & 1.88$\times$ \\
iggp & dont\_touch-next\_cell & 3600 $\pm$ 0 & 3600 $\pm$ 0 & 0 $\pm$ 0 & 1.00$\times$ \\
iggp & dont\_touch-next\_control & 3600 $\pm$ 0 & 11 $\pm$ 1 & 3588 $\pm$ 1 & 327.27$\times$ \\
iggp & dont\_touch-terminal & 3600 $\pm$ 0 & 3600 $\pm$ 0 & 0 $\pm$ 0 & 1.00$\times$ \\
iggp & duikoshi-goal & 88 $\pm$ 6 & 4 $\pm$ 0 & 84 $\pm$ 6 & 22.00$\times$ \\
iggp & duikoshi-legal\_mark & 3600 $\pm$ 0 & 95 $\pm$ 9 & 3504 $\pm$ 9 & 37.89$\times$ \\
iggp & duikoshi-next\_cell & 3600 $\pm$ 0 & 715 $\pm$ 16 & 2884 $\pm$ 16 & 5.03$\times$ \\
iggp & duikoshi-next\_control & 3600 $\pm$ 0 & 2 $\pm$ 0 & 3597 $\pm$ 0 & 1800.00$\times$ \\
iggp & duikoshi-terminal & 3600 $\pm$ 0 & 168 $\pm$ 32 & 3431 $\pm$ 32 & 21.43$\times$ \\
iggp & eight\_puzzle-goal & 66 $\pm$ 3 & 9 $\pm$ 0 & 57 $\pm$ 3 & 7.33$\times$ \\
iggp & eight\_puzzle-legal\_move & 349 $\pm$ 27 & 3 $\pm$ 0 & 345 $\pm$ 27 & 116.33$\times$ \\
iggp & eight\_puzzle-next\_cell & 3600 $\pm$ 0 & 695 $\pm$ 20 & 2904 $\pm$ 20 & 5.18$\times$ \\
iggp & eight\_puzzle-next\_step & 1 $\pm$ 0 & 1 $\pm$ 0 & 0 $\pm$ 0 & 1.00$\times$ \\
iggp & eight\_puzzle-terminal & 1 $\pm$ 0 & 1 $\pm$ 0 & 0 $\pm$ 0 & 1.00$\times$ \\
iggp & farming-goal & 3 $\pm$ 0 & 4 $\pm$ 1 & -1 $\pm$ 1 & 0.75$\times$ \\
iggp & farming-legal\_arson\_col & 615 $\pm$ 92 & 83 $\pm$ 8 & 531 $\pm$ 94 & 7.41$\times$ \\
iggp & farming-legal\_arson\_row & 677 $\pm$ 47 & 78 $\pm$ 11 & 599 $\pm$ 44 & 8.68$\times$ \\
iggp & farming-legal\_harvest\_col & 31 $\pm$ 2 & 14 $\pm$ 2 & 16 $\pm$ 3 & 2.21$\times$ \\
iggp & farming-legal\_harvest\_row & 28 $\pm$ 2 & 15 $\pm$ 2 & 12 $\pm$ 3 & 1.87$\times$ \\
iggp & farming-legal\_plow\_col & 28 $\pm$ 3 & 12 $\pm$ 3 & 16 $\pm$ 5 & 2.33$\times$ \\
iggp & farming-legal\_plow\_row & 28 $\pm$ 2 & 13 $\pm$ 4 & 14 $\pm$ 5 & 2.15$\times$ \\
iggp & farming-legal\_sow\_col & 9 $\pm$ 2 & 8 $\pm$ 0 & 1 $\pm$ 2 & 1.12$\times$ \\
iggp & farming-legal\_sow\_row & 9 $\pm$ 2 & 8 $\pm$ 1 & 0 $\pm$ 3 & 1.12$\times$ \\
iggp & farming-legal\_water\_col & 28 $\pm$ 3 & 15 $\pm$ 2 & 13 $\pm$ 3 & 1.87$\times$ \\
iggp & farming-legal\_water\_row & 29 $\pm$ 2 & 12 $\pm$ 2 & 16 $\pm$ 3 & 2.42$\times$ \\
iggp & farming-next\_control & 3600 $\pm$ 0 & 3600 $\pm$ 0 & 0 $\pm$ 0 & 1.00$\times$ \\
iggp & farming-next\_has\_arson & 446 $\pm$ 46 & 11 $\pm$ 0 & 435 $\pm$ 46 & 40.55$\times$ \\
iggp & farming-next\_plowed & 3600 $\pm$ 0 & 3600 $\pm$ 0 & 0 $\pm$ 0 & 1.00$\times$ \\
iggp & farming-next\_ripe & 137 $\pm$ 6 & 21 $\pm$ 1 & 116 $\pm$ 8 & 6.52$\times$ \\
iggp & farming-next\_score & 3435 $\pm$ 290 & 166 $\pm$ 19 & 3268 $\pm$ 273 & 20.69$\times$ \\
iggp & farming-next\_season & 3600 $\pm$ 0 & 3467 $\pm$ 200 & 132 $\pm$ 200 & 1.04$\times$ \\
iggp & farming-next\_sown & 3600 $\pm$ 0 & 3600 $\pm$ 0 & 0 $\pm$ 0 & 1.00$\times$ \\
iggp & farming-next\_step & 3 $\pm$ 0 & 4 $\pm$ 0 & 0 $\pm$ 0 & 0.75$\times$ \\
iggp & farming-next\_year\_first\_player & 3600 $\pm$ 0 & 3600 $\pm$ 0 & 0 $\pm$ 0 & 1.00$\times$ \\
iggp & farming-next\_year\_second\_player & 3600 $\pm$ 0 & 3600 $\pm$ 0 & 0 $\pm$ 0 & 1.00$\times$ \\
iggp & farming-terminal & 1 $\pm$ 0 & 2 $\pm$ 0 & 0 $\pm$ 0 & 0.50$\times$ \\
iggp & firesheep-goal & 1 $\pm$ 0 & 1 $\pm$ 0 & 0 $\pm$ 0 & 1.00$\times$ \\
iggp & firesheep-legal\_burn & 3600 $\pm$ 0 & 3600 $\pm$ 0 & 0 $\pm$ 0 & 1.00$\times$ \\
iggp & firesheep-legal\_force\_noop & 3600 $\pm$ 0 & 9 $\pm$ 0 & 3590 $\pm$ 0 & 400.00$\times$ \\
iggp & firesheep-legal\_freeze & 3600 $\pm$ 0 & 3600 $\pm$ 0 & 0 $\pm$ 0 & 1.00$\times$ \\
iggp & firesheep-legal\_kill & 3600 $\pm$ 0 & 3600 $\pm$ 0 & 0 $\pm$ 0 & 1.00$\times$ \\
iggp & firesheep-next\_at & 3600 $\pm$ 0 & 3600 $\pm$ 0 & 0 $\pm$ 0 & 1.00$\times$ \\
iggp & firesheep-next\_burning & 3600 $\pm$ 0 & 3600 $\pm$ 0 & 0 $\pm$ 0 & 1.00$\times$ \\
iggp & firesheep-next\_forced & 1 $\pm$ 0 & 1 $\pm$ 0 & 0 $\pm$ 0 & 1.00$\times$ \\
iggp & firesheep-next\_frozen & 3600 $\pm$ 0 & 3600 $\pm$ 0 & 0 $\pm$ 0 & 1.00$\times$ \\
iggp & firesheep-next\_grass & 3600 $\pm$ 0 & 3600 $\pm$ 0 & 0 $\pm$ 0 & 1.00$\times$ \\
iggp & firesheep-next\_grass\_last\_turn & 3600 $\pm$ 0 & 3600 $\pm$ 0 & 0 $\pm$ 0 & 1.00$\times$ \\
iggp & firesheep-next\_has\_force\_noop & 3600 $\pm$ 0 & 3600 $\pm$ 0 & 0 $\pm$ 0 & 1.00$\times$ \\
iggp & firesheep-next\_has\_kill & 3600 $\pm$ 0 & 268 $\pm$ 23 & 3331 $\pm$ 23 & 13.43$\times$ \\
iggp & firesheep-next\_score & 3600 $\pm$ 0 & 3600 $\pm$ 0 & 0 $\pm$ 0 & 1.00$\times$ \\
iggp & firesheep-next\_sheep & 3600 $\pm$ 0 & 1595 $\pm$ 172 & 2004 $\pm$ 172 & 2.26$\times$ \\
iggp & firesheep-terminal & 3600 $\pm$ 0 & 3600 $\pm$ 0 & 0 $\pm$ 0 & 1.00$\times$ \\
iggp & fizzbuzz-goal & 235 $\pm$ 25 & 14 $\pm$ 1 & 221 $\pm$ 24 & 16.79$\times$ \\
iggp & fizzbuzz-legal\_say & 3600 $\pm$ 0 & 934 $\pm$ 292 & 2665 $\pm$ 292 & 3.85$\times$ \\
iggp & fizzbuzz-next\_count & 1 $\pm$ 0 & 2 $\pm$ 0 & -1 $\pm$ 0 & 0.50$\times$ \\
iggp & fizzbuzz-next\_success & 3600 $\pm$ 0 & 3600 $\pm$ 0 & 0 $\pm$ 0 & 1.00$\times$ \\
iggp & fizzbuzz-terminal & 1 $\pm$ 0 & 1 $\pm$ 0 & 0 $\pm$ 0 & 1.00$\times$ \\
iggp & forager2-goal & 1 $\pm$ 0 & 1 $\pm$ 0 & 0 $\pm$ 0 & 1.00$\times$ \\
iggp & forager2-legal & 1 $\pm$ 0 & 1 $\pm$ 0 & 0 $\pm$ 0 & 1.00$\times$ \\
iggp & forager2-next\_at & 3600 $\pm$ 0 & 2450 $\pm$ 684 & 1149 $\pm$ 684 & 1.47$\times$ \\
iggp & forager2-next\_score & 3600 $\pm$ 0 & 3240 $\pm$ 814 & 359 $\pm$ 814 & 1.11$\times$ \\
iggp & forager2-next\_time & 1 $\pm$ 0 & 1 $\pm$ 0 & 0 $\pm$ 0 & 1.00$\times$ \\
iggp & forager2-terminal & 3240 $\pm$ 814 & 2171 $\pm$ 733 & 1069 $\pm$ 958 & 1.49$\times$ \\
iggp & freeforall-goal & 2 $\pm$ 0 & 1 $\pm$ 0 & 0 $\pm$ 0 & 2.00$\times$ \\
iggp & freeforall-next\_capture & 3600 $\pm$ 0 & 346 $\pm$ 25 & 3253 $\pm$ 25 & 10.40$\times$ \\
iggp & freeforall-next\_cell & 3600 $\pm$ 0 & 3600 $\pm$ 0 & 0 $\pm$ 0 & 1.00$\times$ \\
iggp & freeforall-next\_control & 1 $\pm$ 0 & 3 $\pm$ 0 & -2 $\pm$ 0 & 0.33$\times$ \\
iggp & freeforall-next\_step & 1 $\pm$ 0 & 1 $\pm$ 0 & 0 $\pm$ 0 & 1.00$\times$ \\
iggp & freeforall-terminal & 1 $\pm$ 0 & 2 $\pm$ 0 & -1 $\pm$ 0 & 0.50$\times$ \\
iggp & frogs\_and\_toads-legal\_jump & 159 $\pm$ 4 & 161 $\pm$ 8 & -2 $\pm$ 6 & 0.99$\times$ \\
iggp & frogs\_and\_toads-legal\_move & 155 $\pm$ 5 & 159 $\pm$ 5 & -3 $\pm$ 7 & 0.97$\times$ \\
iggp & frogs\_and\_toads-next\_cell & 3600 $\pm$ 0 & 3600 $\pm$ 0 & 0 $\pm$ 0 & 1.00$\times$ \\
iggp & frogs\_and\_toads-next\_correctFrogs & 285 $\pm$ 57 & 92 $\pm$ 7 & 192 $\pm$ 57 & 3.10$\times$ \\
iggp & frogs\_and\_toads-next\_correctToads & 527 $\pm$ 48 & 145 $\pm$ 13 & 381 $\pm$ 54 & 3.63$\times$ \\
iggp & frogs\_and\_toads-next\_step & 2 $\pm$ 0 & 2 $\pm$ 0 & 0 $\pm$ 0 & 1.00$\times$ \\
iggp & gt\_attrition-goal & 1 $\pm$ 0 & 1 $\pm$ 0 & 0 $\pm$ 0 & 1.00$\times$ \\
iggp & gt\_attrition-legal & 2 $\pm$ 0 & 1 $\pm$ 0 & 1 $\pm$ 0 & 2.00$\times$ \\
iggp & gt\_attrition-next\_claim\_made\_by & 1 $\pm$ 0 & 1 $\pm$ 0 & 0 $\pm$ 0 & 1.00$\times$ \\
iggp & gt\_attrition-next\_control & 1 $\pm$ 0 & 1 $\pm$ 0 & 0 $\pm$ 0 & 1.00$\times$ \\
iggp & gt\_attrition-next\_score & 2278 $\pm$ 317 & 47 $\pm$ 1 & 2230 $\pm$ 318 & 48.47$\times$ \\
iggp & gt\_attrition-terminal & 1 $\pm$ 0 & 1 $\pm$ 0 & 0 $\pm$ 0 & 1.00$\times$ \\
iggp & gt\_centipede-goal & 13 $\pm$ 1 & 1 $\pm$ 0 & 11 $\pm$ 1 & 13.00$\times$ \\
iggp & gt\_centipede-legal & 1 $\pm$ 0 & 1 $\pm$ 0 & 0 $\pm$ 0 & 1.00$\times$ \\
iggp & gt\_centipede-next\_blackPayoff & 7 $\pm$ 1 & 2 $\pm$ 0 & 4 $\pm$ 1 & 3.50$\times$ \\
iggp & gt\_centipede-next\_control & 1 $\pm$ 0 & 1 $\pm$ 0 & 0 $\pm$ 0 & 1.00$\times$ \\
iggp & gt\_centipede-next\_whitePayoff & 2 $\pm$ 0 & 1 $\pm$ 0 & 1 $\pm$ 0 & 2.00$\times$ \\
iggp & gt\_centipede-terminal & 1 $\pm$ 0 & 1 $\pm$ 0 & 0 $\pm$ 0 & 1.00$\times$ \\
iggp & gt\_chicken-goal & 5 $\pm$ 0 & 5 $\pm$ 0 & 0 $\pm$ 0 & 1.00$\times$ \\
iggp & gt\_chicken-legal & 125 $\pm$ 12 & 34 $\pm$ 3 & 91 $\pm$ 10 & 3.68$\times$ \\
iggp & gt\_chicken-next\_blackScore & 3600 $\pm$ 0 & 3600 $\pm$ 0 & 0 $\pm$ 0 & 1.00$\times$ \\
iggp & gt\_chicken-next\_rounds & 1 $\pm$ 0 & 1 $\pm$ 0 & 0 $\pm$ 0 & 1.00$\times$ \\
iggp & gt\_chicken-next\_whiteScore & 3600 $\pm$ 0 & 3600 $\pm$ 0 & 0 $\pm$ 0 & 1.00$\times$ \\
iggp & gt\_chicken-terminal & 1 $\pm$ 0 & 1 $\pm$ 0 & 0 $\pm$ 0 & 1.00$\times$ \\
iggp & gt\_prisoner-goal & 7 $\pm$ 0 & 5 $\pm$ 0 & 1 $\pm$ 0 & 1.40$\times$ \\
iggp & gt\_prisoner-legal & 145 $\pm$ 22 & 36 $\pm$ 5 & 108 $\pm$ 18 & 4.03$\times$ \\
iggp & gt\_prisoner-next\_blackScore & 3600 $\pm$ 0 & 3600 $\pm$ 0 & 0 $\pm$ 0 & 1.00$\times$ \\
iggp & gt\_prisoner-next\_maxRounds & 1 $\pm$ 0 & 1 $\pm$ 0 & 0 $\pm$ 0 & 1.00$\times$ \\
iggp & gt\_prisoner-next\_rounds & 1 $\pm$ 0 & 1 $\pm$ 0 & 0 $\pm$ 0 & 1.00$\times$ \\
iggp & gt\_prisoner-next\_whiteScore & 3600 $\pm$ 0 & 3600 $\pm$ 0 & 0 $\pm$ 0 & 1.00$\times$ \\
iggp & gt\_prisoner-terminal & 1 $\pm$ 0 & 1 $\pm$ 0 & 0 $\pm$ 0 & 1.00$\times$ \\
iggp & gt\_ultimatum-goal & 18 $\pm$ 1 & 14 $\pm$ 1 & 3 $\pm$ 0 & 1.29$\times$ \\
iggp & gt\_ultimatum-legal\_offer & 1 $\pm$ 0 & 1 $\pm$ 0 & 0 $\pm$ 0 & 1.00$\times$ \\
iggp & gt\_ultimatum-next\_blackScore & 3600 $\pm$ 0 & 3600 $\pm$ 0 & 0 $\pm$ 0 & 1.00$\times$ \\
iggp & gt\_ultimatum-next\_control & 332 $\pm$ 33 & 129 $\pm$ 9 & 203 $\pm$ 26 & 2.57$\times$ \\
iggp & gt\_ultimatum-next\_offered & 3600 $\pm$ 0 & 3600 $\pm$ 0 & 0 $\pm$ 0 & 1.00$\times$ \\
iggp & gt\_ultimatum-next\_rounds & 3600 $\pm$ 0 & 3600 $\pm$ 0 & 0 $\pm$ 0 & 1.00$\times$ \\
iggp & gt\_ultimatum-next\_whiteScore & 3600 $\pm$ 0 & 3600 $\pm$ 0 & 0 $\pm$ 0 & 1.00$\times$ \\
iggp & gt\_ultimatum-terminal & 2 $\pm$ 1 & 3 $\pm$ 0 & -1 $\pm$ 1 & 0.67$\times$ \\
iggp & hexforthree-goal & 3600 $\pm$ 0 & 427 $\pm$ 20 & 3172 $\pm$ 20 & 8.43$\times$ \\
iggp & hexforthree-legal\_place & 3600 $\pm$ 0 & 3600 $\pm$ 0 & 0 $\pm$ 0 & 1.00$\times$ \\
iggp & hexforthree-next\_cell & 10 $\pm$ 0 & 10 $\pm$ 0 & 0 $\pm$ 0 & 1.00$\times$ \\
iggp & hexforthree-next\_connected & 3600 $\pm$ 0 & 3600 $\pm$ 0 & 0 $\pm$ 0 & 1.00$\times$ \\
iggp & hexforthree-next\_control & 3600 $\pm$ 0 & 112 $\pm$ 26 & 3487 $\pm$ 26 & 32.14$\times$ \\
iggp & hexforthree-next\_owner & 22 $\pm$ 0 & 16 $\pm$ 1 & 5 $\pm$ 1 & 1.38$\times$ \\
iggp & hexforthree-next\_step & 3 $\pm$ 0 & 3 $\pm$ 0 & 0 $\pm$ 0 & 1.00$\times$ \\
iggp & hexforthree-terminal & 3600 $\pm$ 0 & 3600 $\pm$ 0 & 0 $\pm$ 0 & 1.00$\times$ \\
iggp & horseshoe-goal & 304 $\pm$ 41 & 2 $\pm$ 0 & 301 $\pm$ 41 & 152.00$\times$ \\
iggp & horseshoe-legal\_move & 3600 $\pm$ 0 & 191 $\pm$ 4 & 3408 $\pm$ 4 & 18.85$\times$ \\
iggp & horseshoe-next\_cell & 3600 $\pm$ 0 & 76 $\pm$ 2 & 3523 $\pm$ 2 & 47.37$\times$ \\
iggp & horseshoe-next\_control & 3600 $\pm$ 0 & 30 $\pm$ 0 & 3569 $\pm$ 0 & 120.00$\times$ \\
iggp & horseshoe-next\_step & 1 $\pm$ 0 & 1 $\pm$ 0 & 0 $\pm$ 0 & 1.00$\times$ \\
iggp & horseshoe-terminal & 3600 $\pm$ 0 & 47 $\pm$ 6 & 3552 $\pm$ 6 & 76.60$\times$ \\
iggp & hunter-goal & 1 $\pm$ 0 & 1 $\pm$ 0 & 0 $\pm$ 0 & 1.00$\times$ \\
iggp & hunter-legal\_move & 3600 $\pm$ 0 & 2095 $\pm$ 292 & 1505 $\pm$ 292 & 1.72$\times$ \\
iggp & hunter-next\_captures & 3600 $\pm$ 0 & 3600 $\pm$ 0 & 0 $\pm$ 0 & 1.00$\times$ \\
iggp & hunter-next\_cell & 3600 $\pm$ 0 & 3600 $\pm$ 0 & 0 $\pm$ 0 & 1.00$\times$ \\
iggp & hunter-next\_step & 1 $\pm$ 0 & 1 $\pm$ 0 & 0 $\pm$ 0 & 1.00$\times$ \\
iggp & hunter-terminal & 1 $\pm$ 0 & 1 $\pm$ 0 & 0 $\pm$ 0 & 1.00$\times$ \\
iggp & knights\_tour-goal & 1 $\pm$ 0 & 1 $\pm$ 0 & 0 $\pm$ 0 & 1.00$\times$ \\
iggp & knights\_tour-legal\_move & 3600 $\pm$ 0 & 1115 $\pm$ 85 & 2485 $\pm$ 85 & 3.23$\times$ \\
iggp & knights\_tour-next\_cell & 3600 $\pm$ 0 & 680 $\pm$ 50 & 2919 $\pm$ 50 & 5.29$\times$ \\
iggp & knights\_tour-next\_moveCount & 2 $\pm$ 1 & 1 $\pm$ 0 & 0 $\pm$ 1 & 2.00$\times$ \\
iggp & knights\_tour-terminal & 84 $\pm$ 26 & 94 $\pm$ 47 & -9 $\pm$ 57 & 0.89$\times$ \\
iggp & kono-goal & 1 $\pm$ 0 & 1 $\pm$ 0 & 0 $\pm$ 0 & 1.00$\times$ \\
iggp & kono-next\_cell & 3600 $\pm$ 0 & 3600 $\pm$ 0 & 0 $\pm$ 0 & 1.00$\times$ \\
iggp & kono-next\_control & 2212 $\pm$ 255 & 32 $\pm$ 2 & 2180 $\pm$ 253 & 69.12$\times$ \\
iggp & kono-next\_score & 2452 $\pm$ 439 & 260 $\pm$ 12 & 2192 $\pm$ 433 & 9.43$\times$ \\
iggp & kono-next\_step & 1 $\pm$ 0 & 2 $\pm$ 0 & -1 $\pm$ 0 & 0.50$\times$ \\
iggp & kono-terminal & 1 $\pm$ 0 & 1 $\pm$ 0 & 0 $\pm$ 0 & 1.00$\times$ \\
iggp & leafy-goal & 3600 $\pm$ 0 & 509 $\pm$ 24 & 3090 $\pm$ 24 & 7.07$\times$ \\
iggp & leafy-legal\_move & 3600 $\pm$ 0 & 73 $\pm$ 6 & 3526 $\pm$ 6 & 49.32$\times$ \\
iggp & leafy-next\_isplayer & 3 $\pm$ 0 & 3 $\pm$ 0 & 0 $\pm$ 0 & 1.00$\times$ \\
iggp & leafy-next\_leaf & 3600 $\pm$ 0 & 3600 $\pm$ 0 & 0 $\pm$ 0 & 1.00$\times$ \\
iggp & leafy-terminal & 3600 $\pm$ 0 & 3600 $\pm$ 0 & 0 $\pm$ 0 & 1.00$\times$ \\
iggp & lightboard-goal & 111 $\pm$ 15 & 1 $\pm$ 0 & 110 $\pm$ 15 & 111.00$\times$ \\
iggp & lightboard-legal\_toggle & 1 $\pm$ 0 & 1 $\pm$ 0 & 0 $\pm$ 0 & 1.00$\times$ \\
iggp & lightboard-next\_on & 3600 $\pm$ 0 & 3600 $\pm$ 0 & 0 $\pm$ 0 & 1.00$\times$ \\
iggp & lightboard-next\_step & 1 $\pm$ 0 & 1 $\pm$ 0 & 0 $\pm$ 0 & 1.00$\times$ \\
iggp & lightboard-terminal & 1 $\pm$ 0 & 1 $\pm$ 0 & 0 $\pm$ 0 & 1.00$\times$ \\
iggp & minimal\_decay-legal & 1 $\pm$ 0 & 1 $\pm$ 0 & 0 $\pm$ 0 & 1.00$\times$ \\
iggp & minimal\_decay-next\_value & 1 $\pm$ 0 & 1 $\pm$ 0 & 0 $\pm$ 0 & 1.00$\times$ \\
iggp & minimal\_even-goal & 1 $\pm$ 0 & 1 $\pm$ 0 & 0 $\pm$ 0 & 1.00$\times$ \\
iggp & minimal\_even-legal\_choose & 1 $\pm$ 0 & 1 $\pm$ 0 & 0 $\pm$ 0 & 1.00$\times$ \\
iggp & minimal\_even-next\_chosen & 1 $\pm$ 0 & 1 $\pm$ 0 & 0 $\pm$ 0 & 1.00$\times$ \\
iggp & minimal\_even-terminal & 1 $\pm$ 0 & 1 $\pm$ 0 & 0 $\pm$ 0 & 1.00$\times$ \\
iggp & multiplebuttonsandlights-goal & 1 $\pm$ 0 & 1 $\pm$ 0 & 0 $\pm$ 0 & 1.00$\times$ \\
iggp & multiplebuttonsandlights-legal\_a & 1 $\pm$ 0 & 1 $\pm$ 0 & 0 $\pm$ 0 & 1.00$\times$ \\
iggp & multiplebuttonsandlights-legal\_b & 1 $\pm$ 0 & 1 $\pm$ 0 & 0 $\pm$ 0 & 1.00$\times$ \\
iggp & multiplebuttonsandlights-legal\_c & 1 $\pm$ 0 & 1 $\pm$ 0 & 0 $\pm$ 0 & 1.00$\times$ \\
iggp & multiplebuttonsandlights-next\_p & 3600 $\pm$ 0 & 236 $\pm$ 26 & 3363 $\pm$ 26 & 15.25$\times$ \\
iggp & multiplebuttonsandlights-next\_q & 1 $\pm$ 0 & 1 $\pm$ 0 & 0 $\pm$ 0 & 1.00$\times$ \\
iggp & multiplebuttonsandlights-next\_step & 1 $\pm$ 0 & 1 $\pm$ 0 & 0 $\pm$ 0 & 1.00$\times$ \\
iggp & multiplebuttonsandlights-terminal & 1 $\pm$ 0 & 1 $\pm$ 0 & 0 $\pm$ 0 & 1.00$\times$ \\
iggp & nineboardtictactoe-goal & 11 $\pm$ 0 & 3 $\pm$ 0 & 8 $\pm$ 0 & 3.67$\times$ \\
iggp & nineboardtictactoe-legal\_play & 421 $\pm$ 33 & 84 $\pm$ 2 & 337 $\pm$ 33 & 5.01$\times$ \\
iggp & nineboardtictactoe-next\_control & 3 $\pm$ 0 & 1 $\pm$ 0 & 1 $\pm$ 0 & 3.00$\times$ \\
iggp & nineboardtictactoe-next\_currentboard & 3600 $\pm$ 0 & 3600 $\pm$ 0 & 0 $\pm$ 0 & 1.00$\times$ \\
iggp & nineboardtictactoe-next\_mark & 3600 $\pm$ 0 & 3600 $\pm$ 0 & 0 $\pm$ 0 & 1.00$\times$ \\
iggp & nineboardtictactoe-terminal & 3600 $\pm$ 0 & 3600 $\pm$ 0 & 0 $\pm$ 0 & 1.00$\times$ \\
iggp & pentago-goal & 3600 $\pm$ 0 & 93 $\pm$ 10 & 3506 $\pm$ 10 & 38.71$\times$ \\
iggp & pentago-legal\_place & 3600 $\pm$ 0 & 3600 $\pm$ 0 & 0 $\pm$ 0 & 1.00$\times$ \\
iggp & pentago-legal\_rotate & 1 $\pm$ 0 & 1 $\pm$ 0 & 0 $\pm$ 0 & 1.00$\times$ \\
iggp & pentago-next\_cellholds & 3600 $\pm$ 0 & 3600 $\pm$ 0 & 0 $\pm$ 0 & 1.00$\times$ \\
iggp & pentago-next\_placecontrol & 3600 $\pm$ 0 & 219 $\pm$ 57 & 3380 $\pm$ 57 & 16.44$\times$ \\
iggp & pentago-next\_rotatecontrol & 1 $\pm$ 0 & 1 $\pm$ 0 & 0 $\pm$ 0 & 1.00$\times$ \\
iggp & pentago-terminal & 3600 $\pm$ 0 & 3600 $\pm$ 0 & 0 $\pm$ 0 & 1.00$\times$ \\
iggp & pilgrimage-goal & 2613 $\pm$ 245 & 34 $\pm$ 2 & 2578 $\pm$ 244 & 76.85$\times$ \\
iggp & pilgrimage-legal\_move & 3600 $\pm$ 0 & 3600 $\pm$ 0 & 0 $\pm$ 0 & 1.00$\times$ \\
iggp & pilgrimage-legal\_raise & 3600 $\pm$ 0 & 3600 $\pm$ 0 & 0 $\pm$ 0 & 1.00$\times$ \\
iggp & pilgrimage-next\_builder & 3600 $\pm$ 0 & 3600 $\pm$ 0 & 0 $\pm$ 0 & 1.00$\times$ \\
iggp & pilgrimage-next\_cell & 3600 $\pm$ 0 & 3600 $\pm$ 0 & 0 $\pm$ 0 & 1.00$\times$ \\
iggp & pilgrimage-next\_control & 3600 $\pm$ 0 & 175 $\pm$ 9 & 3424 $\pm$ 9 & 20.57$\times$ \\
iggp & pilgrimage-next\_moves & 386 $\pm$ 14 & 20 $\pm$ 1 & 366 $\pm$ 14 & 19.30$\times$ \\
iggp & pilgrimage-next\_phase & 3600 $\pm$ 0 & 471 $\pm$ 38 & 3128 $\pm$ 38 & 7.64$\times$ \\
iggp & pilgrimage-next\_pilgrim & 3600 $\pm$ 0 & 3600 $\pm$ 0 & 0 $\pm$ 0 & 1.00$\times$ \\
iggp & platformjumpers-goal & 3600 $\pm$ 0 & 3600 $\pm$ 0 & 0 $\pm$ 0 & 1.00$\times$ \\
iggp & platformjumpers-legal\_col & 3600 $\pm$ 0 & 3600 $\pm$ 0 & 0 $\pm$ 0 & 1.00$\times$ \\
iggp & platformjumpers-legal\_row & 3600 $\pm$ 0 & 3600 $\pm$ 0 & 0 $\pm$ 0 & 1.00$\times$ \\
iggp & platformjumpers-next\_cell & 3600 $\pm$ 0 & 3600 $\pm$ 0 & 0 $\pm$ 0 & 1.00$\times$ \\
iggp & platformjumpers-next\_coled & 39 $\pm$ 2 & 38 $\pm$ 1 & 1 $\pm$ 2 & 1.03$\times$ \\
iggp & platformjumpers-next\_control & 1327 $\pm$ 793 & 338 $\pm$ 39 & 989 $\pm$ 795 & 3.93$\times$ \\
iggp & platformjumpers-next\_jumper & 3600 $\pm$ 0 & 3600 $\pm$ 0 & 0 $\pm$ 0 & 1.00$\times$ \\
iggp & platformjumpers-next\_rowed & 39 $\pm$ 2 & 36 $\pm$ 1 & 2 $\pm$ 3 & 1.08$\times$ \\
iggp & platformjumpers-terminal & 5 $\pm$ 0 & 8 $\pm$ 0 & -3 $\pm$ 0 & 0.62$\times$ \\
iggp & rainbow-goal & 11 $\pm$ 0 & 1 $\pm$ 0 & 10 $\pm$ 0 & 11.00$\times$ \\
iggp & rainbow-legal\_mark & 321 $\pm$ 44 & 2 $\pm$ 0 & 318 $\pm$ 44 & 160.50$\times$ \\
iggp & rainbow-next\_color & 1 $\pm$ 0 & 1 $\pm$ 0 & 0 $\pm$ 0 & 1.00$\times$ \\
iggp & rainbow-terminal & 1252 $\pm$ 419 & 2 $\pm$ 0 & 1249 $\pm$ 419 & 626.00$\times$ \\
iggp & scissors\_paper\_stone-goal & 1 $\pm$ 0 & 1 $\pm$ 0 & 0 $\pm$ 0 & 1.00$\times$ \\
iggp & scissors\_paper\_stone-legal & 1 $\pm$ 0 & 1 $\pm$ 0 & 0 $\pm$ 0 & 1.00$\times$ \\
iggp & scissors\_paper\_stone-next\_score & 353 $\pm$ 46 & 51 $\pm$ 1 & 302 $\pm$ 45 & 6.92$\times$ \\
iggp & scissors\_paper\_stone-next\_step & 1 $\pm$ 0 & 1 $\pm$ 0 & 0 $\pm$ 0 & 1.00$\times$ \\
iggp & scissors\_paper\_stone-terminal & 1 $\pm$ 0 & 1 $\pm$ 0 & 0 $\pm$ 0 & 1.00$\times$ \\
iggp & sheep\_and\_wolf-goal & 3 $\pm$ 0 & 2 $\pm$ 0 & 0 $\pm$ 0 & 1.50$\times$ \\
iggp & sheep\_and\_wolf-next\_cell & 3600 $\pm$ 0 & 3600 $\pm$ 0 & 0 $\pm$ 0 & 1.00$\times$ \\
iggp & sheep\_and\_wolf-next\_control & 3600 $\pm$ 0 & 778 $\pm$ 58 & 2821 $\pm$ 58 & 4.63$\times$ \\
iggp & sheep\_and\_wolf-terminal & 3600 $\pm$ 0 & 3600 $\pm$ 0 & 0 $\pm$ 0 & 1.00$\times$ \\
iggp & sokoban-goal & 4 $\pm$ 0 & 1 $\pm$ 0 & 3 $\pm$ 0 & 4.00$\times$ \\
iggp & sokoban-legal & 235 $\pm$ 17 & 12 $\pm$ 0 & 223 $\pm$ 17 & 19.58$\times$ \\
iggp & sokoban-next\_at & 3600 $\pm$ 0 & 2372 $\pm$ 336 & 1227 $\pm$ 336 & 1.52$\times$ \\
iggp & sokoban-next\_target & 1 $\pm$ 0 & 1 $\pm$ 0 & 0 $\pm$ 0 & 1.00$\times$ \\
iggp & sokoban-terminal & 3600 $\pm$ 0 & 647 $\pm$ 288 & 2952 $\pm$ 288 & 5.56$\times$ \\
iggp & sudoku-goal & 7 $\pm$ 0 & 8 $\pm$ 0 & 0 $\pm$ 0 & 0.88$\times$ \\
iggp & sudoku-legal\_mark & 3600 $\pm$ 0 & 442 $\pm$ 21 & 3158 $\pm$ 21 & 8.14$\times$ \\
iggp & sudoku-next\_cell & 3600 $\pm$ 0 & 3600 $\pm$ 0 & 0 $\pm$ 0 & 1.00$\times$ \\
iggp & sudoku-terminal & 3600 $\pm$ 0 & 3600 $\pm$ 0 & 0 $\pm$ 0 & 1.00$\times$ \\
iggp & sukoshi-goal & 1 $\pm$ 0 & 1 $\pm$ 0 & 0 $\pm$ 0 & 1.00$\times$ \\
iggp & sukoshi-legal\_mark & 38 $\pm$ 2 & 1 $\pm$ 0 & 36 $\pm$ 2 & 38.00$\times$ \\
iggp & sukoshi-next\_cell & 472 $\pm$ 41 & 120 $\pm$ 3 & 351 $\pm$ 40 & 3.93$\times$ \\
iggp & sukoshi-terminal & 3058 $\pm$ 318 & 333 $\pm$ 419 & 2724 $\pm$ 538 & 9.18$\times$ \\
iggp & switches-legal & 96 $\pm$ 10 & 2 $\pm$ 0 & 94 $\pm$ 10 & 48.00$\times$ \\
iggp & switches-next\_at & 3600 $\pm$ 0 & 252 $\pm$ 29 & 3347 $\pm$ 29 & 14.29$\times$ \\
iggp & switches-next\_open & 8 $\pm$ 1 & 1 $\pm$ 0 & 6 $\pm$ 1 & 8.00$\times$ \\
iggp & switches-next\_switch & 1 $\pm$ 0 & 1 $\pm$ 0 & 0 $\pm$ 0 & 1.00$\times$ \\
iggp & switches-next\_target & 1 $\pm$ 0 & 1 $\pm$ 0 & 0 $\pm$ 0 & 1.00$\times$ \\
iggp & tictactoe-goal & 58 $\pm$ 6 & 1 $\pm$ 0 & 57 $\pm$ 6 & 58.00$\times$ \\
iggp & tictactoe-legal\_mark & 1 $\pm$ 0 & 1 $\pm$ 0 & 0 $\pm$ 0 & 1.00$\times$ \\
iggp & tictactoe-next\_cell & 3600 $\pm$ 0 & 325 $\pm$ 28 & 3274 $\pm$ 28 & 11.08$\times$ \\
iggp & tictactoe-next\_control & 946 $\pm$ 179 & 1 $\pm$ 0 & 945 $\pm$ 179 & 946.00$\times$ \\
iggp & tictactoe-terminal & 3600 $\pm$ 0 & 183 $\pm$ 34 & 3416 $\pm$ 34 & 19.67$\times$ \\
iggp & tiger\_vs\_dogs-goal & 19 $\pm$ 1 & 3 $\pm$ 0 & 15 $\pm$ 1 & 6.33$\times$ \\
iggp & tiger\_vs\_dogs-legal\_move & 3600 $\pm$ 0 & 3600 $\pm$ 0 & 0 $\pm$ 0 & 1.00$\times$ \\
iggp & tiger\_vs\_dogs-next\_cell & 3600 $\pm$ 0 & 3600 $\pm$ 0 & 0 $\pm$ 0 & 1.00$\times$ \\
iggp & tiger\_vs\_dogs-next\_control & 1 $\pm$ 0 & 1 $\pm$ 0 & 0 $\pm$ 0 & 1.00$\times$ \\
iggp & tron-goal & 147 $\pm$ 17 & 10 $\pm$ 0 & 137 $\pm$ 17 & 14.70$\times$ \\
iggp & tron-legal & 934 $\pm$ 125 & 24 $\pm$ 2 & 909 $\pm$ 123 & 38.92$\times$ \\
iggp & tron-next\_at & 3600 $\pm$ 0 & 697 $\pm$ 32 & 2902 $\pm$ 32 & 5.16$\times$ \\
iggp & tron-next\_marked & 1 $\pm$ 0 & 1 $\pm$ 0 & 0 $\pm$ 0 & 1.00$\times$ \\
iggp & tron-terminal & 1 $\pm$ 0 & 1 $\pm$ 0 & 0 $\pm$ 0 & 1.00$\times$ \\
iggp & ttcc4-goal & 3600 $\pm$ 0 & 175 $\pm$ 6 & 3424 $\pm$ 6 & 20.57$\times$ \\
iggp & ttcc4-legal\_checkermove & 3600 $\pm$ 0 & 3600 $\pm$ 0 & 0 $\pm$ 0 & 1.00$\times$ \\
iggp & ttcc4-legal\_drop & 3600 $\pm$ 0 & 3600 $\pm$ 0 & 0 $\pm$ 0 & 1.00$\times$ \\
iggp & ttcc4-legal\_knightmove & 3600 $\pm$ 0 & 3600 $\pm$ 0 & 0 $\pm$ 0 & 1.00$\times$ \\
iggp & ttcc4-legal\_pawnmove & 3600 $\pm$ 0 & 3600 $\pm$ 0 & 0 $\pm$ 0 & 1.00$\times$ \\
iggp & ttcc4-next\_cell & 3600 $\pm$ 0 & 3600 $\pm$ 0 & 0 $\pm$ 0 & 1.00$\times$ \\
iggp & ttcc4-next\_control & 3 $\pm$ 0 & 3 $\pm$ 0 & 0 $\pm$ 0 & 1.00$\times$ \\
iggp & ttcc4-next\_step & 2 $\pm$ 0 & 4 $\pm$ 0 & -2 $\pm$ 0 & 0.50$\times$ \\
iggp & ttcc4-terminal & 3600 $\pm$ 0 & 3600 $\pm$ 0 & 0 $\pm$ 0 & 1.00$\times$ \\
iggp & untwisty\_corridor-goal & 1 $\pm$ 0 & 1 $\pm$ 0 & 0 $\pm$ 0 & 1.00$\times$ \\
iggp & untwisty\_corridor-legal & 1 $\pm$ 0 & 1 $\pm$ 0 & 0 $\pm$ 0 & 1.00$\times$ \\
iggp & untwisty\_corridor-next\_step & 1 $\pm$ 0 & 1 $\pm$ 0 & 0 $\pm$ 0 & 1.00$\times$ \\
iggp & walkabout-goal & 81 $\pm$ 13 & 3 $\pm$ 0 & 78 $\pm$ 13 & 27.00$\times$ \\
iggp & walkabout-legal & 646 $\pm$ 99 & 3 $\pm$ 0 & 642 $\pm$ 99 & 215.33$\times$ \\
iggp & walkabout-next\_at & 3555 $\pm$ 99 & 106 $\pm$ 3 & 3449 $\pm$ 98 & 33.54$\times$ \\
iggp & walkabout-terminal & 7 $\pm$ 2 & 1 $\pm$ 0 & 6 $\pm$ 2 & 7.00$\times$ \\
imdb & imdb1 & 1 $\pm$ 0 & 1 $\pm$ 0 & 0 $\pm$ 0 & 1.00$\times$ \\
imdb & imdb2 & 1 $\pm$ 0 & 1 $\pm$ 0 & 0 $\pm$ 0 & 1.00$\times$ \\
imdb & imdb3 & 50 $\pm$ 31 & 34 $\pm$ 36 & 15 $\pm$ 36 & 1.47$\times$ \\
jr & 001 & 2 $\pm$ 1 & 2 $\pm$ 1 & 0 $\pm$ 1 & 1.00$\times$ \\
jr & 002 & 2 $\pm$ 0 & 1 $\pm$ 0 & 0 $\pm$ 0 & 2.00$\times$ \\
jr & 003 & 3 $\pm$ 1 & 2 $\pm$ 0 & 1 $\pm$ 1 & 1.50$\times$ \\
jr & 004 & 3 $\pm$ 1 & 2 $\pm$ 0 & 0 $\pm$ 1 & 1.50$\times$ \\
jr & 005 & 2 $\pm$ 1 & 1 $\pm$ 0 & 0 $\pm$ 1 & 2.00$\times$ \\
jr & 006 & 1 $\pm$ 0 & 1 $\pm$ 0 & 0 $\pm$ 0 & 1.00$\times$ \\
jr & 007 & 1 $\pm$ 0 & 1 $\pm$ 0 & 0 $\pm$ 0 & 1.00$\times$ \\
jr & 008 & 3 $\pm$ 1 & 1 $\pm$ 0 & 1 $\pm$ 1 & 3.00$\times$ \\
jr & 009 & 3 $\pm$ 0 & 1 $\pm$ 0 & 1 $\pm$ 0 & 3.00$\times$ \\
jr & 010 & 3551 $\pm$ 109 & 1509 $\pm$ 617 & 2041 $\pm$ 628 & 2.35$\times$ \\
jr & 011 & 28 $\pm$ 10 & 10 $\pm$ 2 & 17 $\pm$ 8 & 2.80$\times$ \\
jr & 012 & 16 $\pm$ 4 & 7 $\pm$ 1 & 9 $\pm$ 4 & 2.29$\times$ \\
jr & 013 & 1057 $\pm$ 428 & 190 $\pm$ 121 & 867 $\pm$ 460 & 5.56$\times$ \\
jr & 014 & 532 $\pm$ 251 & 141 $\pm$ 45 & 391 $\pm$ 244 & 3.77$\times$ \\
jr & 015 & 3600 $\pm$ 0 & 3600 $\pm$ 0 & 0 $\pm$ 0 & 1.00$\times$ \\
jr & 016 & 218 $\pm$ 35 & 21 $\pm$ 2 & 196 $\pm$ 33 & 10.38$\times$ \\
jr & 017 & 1087 $\pm$ 513 & 46 $\pm$ 9 & 1040 $\pm$ 505 & 23.63$\times$ \\
jr & 018 & 242 $\pm$ 23 & 26 $\pm$ 2 & 215 $\pm$ 22 & 9.31$\times$ \\
jr & 019 & 680 $\pm$ 115 & 44 $\pm$ 2 & 635 $\pm$ 115 & 15.45$\times$ \\
jr & 020 & 318 $\pm$ 124 & 61 $\pm$ 25 & 257 $\pm$ 128 & 5.21$\times$ \\
jr & 021 & 74 $\pm$ 22 & 9 $\pm$ 1 & 64 $\pm$ 21 & 8.22$\times$ \\
jr & 022 & 68 $\pm$ 18 & 9 $\pm$ 2 & 59 $\pm$ 17 & 7.56$\times$ \\
jr & 023 & 3600 $\pm$ 0 & 3148 $\pm$ 682 & 451 $\pm$ 682 & 1.14$\times$ \\
jr & 024 & 3600 $\pm$ 0 & 3600 $\pm$ 0 & 0 $\pm$ 0 & 1.00$\times$ \\
jr & 025 & 40 $\pm$ 4 & 11 $\pm$ 1 & 28 $\pm$ 3 & 3.64$\times$ \\
jr & 026 & 485 $\pm$ 298 & 85 $\pm$ 59 & 400 $\pm$ 313 & 5.71$\times$ \\
jr & 027 & 3600 $\pm$ 0 & 3588 $\pm$ 26 & 11 $\pm$ 26 & 1.00$\times$ \\
jr & 028 & 3600 $\pm$ 0 & 3190 $\pm$ 625 & 409 $\pm$ 625 & 1.13$\times$ \\
jr & 029 & 1 $\pm$ 0 & 2 $\pm$ 1 & 0 $\pm$ 1 & 0.50$\times$ \\
jr & 030 & 97 $\pm$ 52 & 68 $\pm$ 26 & 29 $\pm$ 61 & 1.43$\times$ \\
jr & 031 & 3600 $\pm$ 0 & 3600 $\pm$ 0 & 0 $\pm$ 0 & 1.00$\times$ \\
jr & 032 & 3600 $\pm$ 0 & 3600 $\pm$ 0 & 0 $\pm$ 0 & 1.00$\times$ \\
jr & 033 & 1146 $\pm$ 604 & 174 $\pm$ 74 & 971 $\pm$ 606 & 6.59$\times$ \\
jr & 034 & 229 $\pm$ 26 & 40 $\pm$ 1 & 189 $\pm$ 24 & 5.72$\times$ \\
jr & 035 & 74 $\pm$ 17 & 11 $\pm$ 1 & 62 $\pm$ 16 & 6.73$\times$ \\
jr & 036 & 3600 $\pm$ 0 & 3600 $\pm$ 0 & 0 $\pm$ 0 & 1.00$\times$ \\
jr & 037 & 2 $\pm$ 0 & 1 $\pm$ 0 & 1 $\pm$ 0 & 2.00$\times$ \\
jr & 038 & 1 $\pm$ 0 & 1 $\pm$ 0 & 0 $\pm$ 0 & 1.00$\times$ \\
jr & 039 & 1977 $\pm$ 1067 & 123 $\pm$ 26 & 1854 $\pm$ 1057 & 16.07$\times$ \\
jr & 040 & 382 $\pm$ 202 & 27 $\pm$ 3 & 355 $\pm$ 200 & 14.15$\times$ \\
jr & 041 & 1 $\pm$ 0 & 1 $\pm$ 0 & 0 $\pm$ 0 & 1.00$\times$ \\
jr & 042 & 227 $\pm$ 44 & 29 $\pm$ 6 & 198 $\pm$ 39 & 7.83$\times$ \\
jr & 043 & 3600 $\pm$ 0 & 352 $\pm$ 15 & 3247 $\pm$ 15 & 10.23$\times$ \\
jr & 044 & 3600 $\pm$ 0 & 3600 $\pm$ 0 & 0 $\pm$ 0 & 1.00$\times$ \\
jr & 045 & 1 $\pm$ 0 & 1 $\pm$ 0 & 0 $\pm$ 0 & 1.00$\times$ \\
jr & 046 & 10 $\pm$ 1 & 1 $\pm$ 0 & 9 $\pm$ 1 & 10.00$\times$ \\
jr & 047 & 3600 $\pm$ 0 & 3600 $\pm$ 0 & 0 $\pm$ 0 & 1.00$\times$ \\
jr & 048 & 1 $\pm$ 0 & 1 $\pm$ 0 & 0 $\pm$ 0 & 1.00$\times$ \\
jr & 049 & 1 $\pm$ 0 & 1 $\pm$ 0 & 0 $\pm$ 0 & 1.00$\times$ \\
jr & 050 & 6 $\pm$ 0 & 1 $\pm$ 0 & 5 $\pm$ 0 & 6.00$\times$ \\
jr & 051 & 451 $\pm$ 296 & 72 $\pm$ 30 & 379 $\pm$ 281 & 6.26$\times$ \\
jr & 052 & 48 $\pm$ 33 & 13 $\pm$ 5 & 34 $\pm$ 34 & 3.69$\times$ \\
jr & 053 & 59 $\pm$ 8 & 15 $\pm$ 1 & 44 $\pm$ 8 & 3.93$\times$ \\
jr & 054 & 82 $\pm$ 22 & 12 $\pm$ 1 & 70 $\pm$ 20 & 6.83$\times$ \\
jr & 055 & 1548 $\pm$ 839 & 558 $\pm$ 304 & 990 $\pm$ 884 & 2.77$\times$ \\
jr & 056 & 801 $\pm$ 802 & 103 $\pm$ 41 & 698 $\pm$ 792 & 7.78$\times$ \\
jr & 057 & 271 $\pm$ 113 & 113 $\pm$ 66 & 158 $\pm$ 100 & 2.40$\times$ \\
jr & 058 & 3 $\pm$ 1 & 2 $\pm$ 1 & 0 $\pm$ 2 & 1.50$\times$ \\
jr & 059 & 3600 $\pm$ 0 & 3600 $\pm$ 0 & 0 $\pm$ 0 & 1.00$\times$ \\
jr & 060 & 1090 $\pm$ 491 & 46 $\pm$ 4 & 1044 $\pm$ 487 & 23.70$\times$ \\
jr & 061 & 69 $\pm$ 44 & 23 $\pm$ 6 & 45 $\pm$ 49 & 3.00$\times$ \\
jr & 062 & 3 $\pm$ 1 & 2 $\pm$ 0 & 1 $\pm$ 1 & 1.50$\times$ \\
jr & 063 & 3600 $\pm$ 0 & 3439 $\pm$ 362 & 160 $\pm$ 362 & 1.05$\times$ \\
jr & 064 & 106 $\pm$ 51 & 69 $\pm$ 32 & 37 $\pm$ 71 & 1.54$\times$ \\
jr & 065 & 34 $\pm$ 7 & 7 $\pm$ 1 & 26 $\pm$ 6 & 4.86$\times$ \\
jr & 066 & 624 $\pm$ 355 & 126 $\pm$ 74 & 498 $\pm$ 335 & 4.95$\times$ \\
jr & 067 & 2797 $\pm$ 577 & 362 $\pm$ 101 & 2435 $\pm$ 549 & 7.73$\times$ \\
jr & 068 & 3600 $\pm$ 0 & 3600 $\pm$ 0 & 0 $\pm$ 0 & 1.00$\times$ \\
jr & 069 & 3600 $\pm$ 0 & 3600 $\pm$ 0 & 0 $\pm$ 0 & 1.00$\times$ \\
jr & 070 & 6 $\pm$ 2 & 2 $\pm$ 0 & 3 $\pm$ 2 & 3.00$\times$ \\
jr & 071 & 9 $\pm$ 3 & 4 $\pm$ 1 & 4 $\pm$ 4 & 2.25$\times$ \\
jr & 072 & 161 $\pm$ 130 & 80 $\pm$ 34 & 81 $\pm$ 115 & 2.01$\times$ \\
jr & 073 & 2 $\pm$ 1 & 1 $\pm$ 0 & 1 $\pm$ 2 & 2.00$\times$ \\
jr & 074 & 3600 $\pm$ 0 & 3600 $\pm$ 0 & 0 $\pm$ 0 & 1.00$\times$ \\
jr & 075 & 81 $\pm$ 39 & 35 $\pm$ 18 & 45 $\pm$ 37 & 2.31$\times$ \\
jr & 076 & 3600 $\pm$ 0 & 3600 $\pm$ 0 & 0 $\pm$ 0 & 1.00$\times$ \\
jr & 077 & 1 $\pm$ 0 & 1 $\pm$ 0 & 0 $\pm$ 0 & 1.00$\times$ \\
jr & 078 & 3600 $\pm$ 0 & 2461 $\pm$ 937 & 1138 $\pm$ 937 & 1.46$\times$ \\
jr & 079 & 3600 $\pm$ 0 & 1515 $\pm$ 494 & 2084 $\pm$ 494 & 2.38$\times$ \\
jr & 080 & 514 $\pm$ 524 & 205 $\pm$ 145 & 308 $\pm$ 488 & 2.51$\times$ \\
trains & trains1 & 3 $\pm$ 2 & 2 $\pm$ 1 & 1 $\pm$ 1 & 1.50$\times$ \\
trains & trains2 & 17 $\pm$ 10 & 2 $\pm$ 0 & 15 $\pm$ 10 & 8.50$\times$ \\
trains & trains3 & 191 $\pm$ 100 & 25 $\pm$ 27 & 166 $\pm$ 95 & 7.64$\times$ \\
trains & trains4 & 2523 $\pm$ 1239 & 149 $\pm$ 90 & 2374 $\pm$ 1170 & 16.93$\times$ \\
zendo & zendo1 & 41 $\pm$ 43 & 10 $\pm$ 5 & 31 $\pm$ 41 & 4.10$\times$ \\
zendo & zendo2 & 3600 $\pm$ 0 & 3600 $\pm$ 0 & 0 $\pm$ 0 & 1.00$\times$ \\
zendo & zendo3 & 3600 $\pm$ 0 & 3583 $\pm$ 36 & 16 $\pm$ 36 & 1.00$\times$ \\
zendo & zendo4 & 3600 $\pm$ 0 & 3499 $\pm$ 220 & 100 $\pm$ 220 & 1.03$\times$ \\
\end{longtable}

\appendix

\end{document}